\definecolor{Gray}{gray}{0.95}
\definecolor{DarkGray}{gray}{0.5}
\definecolor{LightCyan}{rgb}{0.88,1,1}
\definecolor{bisque}{rgb}{1.0, 0.89, 0.77}
\definecolor{blanchedalmond}{rgb}{1.0, 0.92, 0.8}
\definecolor{cosmiclatte}{rgb}{1.0, 0.97, 0.91}
\definecolor{cornsilk}{rgb}{1.0, 0.97, 0.86}
\setlist{leftmargin=*}
\newenvironment{talign}
 {\align}
 {\endalign}
\newenvironment{talign*}
 {\csname align*\endcsname}
 {\endalign}
\title{Weak Supervision Performance Evaluation\\ via Partial Identification}
\author{%
  Felipe Maia Polo\thanks{Equal contribution. Corresponding author: Felipe Maia Polo <felipemaiapolo@gmail.com>} \\
  Department of Statistics\\
  University of Michigan\\
  \And
  Subha Maity\footnotemark[1]\\
  Department of Statistics and Actuarial Science\\
  University of Waterloo\\
  \AND
  Mikhail Yurochkin \\
  MIT-IBM Watson AI Lab\\
  \And
  Moulinath Banerjee\\
  Department of Statistics\\
  University of Michigan\\
  \And
  Yuekai Sun \\
  Department of Statistics\\
  University of Michigan\\
}
\begin{document}

\maketitle

\begin{abstract}
Programmatic Weak Supervision (PWS) enables supervised model training without direct access to ground truth labels, utilizing weak labels from heuristics, crowdsourcing, or pre-trained models. However, the absence of ground truth complicates model evaluation, as traditional metrics such as accuracy, precision, and recall cannot be directly calculated. In this work, we present a novel method to address this challenge by framing model evaluation as a partial identification problem and estimating performance bounds using Fr\'echet bounds. Our approach derives reliable bounds on key metrics without requiring labeled data, overcoming core limitations in current weak supervision evaluation techniques. Through scalable convex optimization, we obtain accurate and computationally efficient bounds for metrics including accuracy, precision, recall, and F1-score, even in high-dimensional settings. This framework offers a robust approach to assessing model quality without ground truth labels, enhancing the practicality of weakly supervised learning for real-world applications.\footnote{Our code can be found on \url{https://github.com/felipemaiapolo/wsbounds}}
\end{abstract}

\section{Introduction}

Programmatic weak supervision (PWS) is a modern learning paradigm that allows practitioners to train their supervised models without the immediate need for ground truth labels $Y$ \citep{ratner2016data, ratner2017snorkel, ratner2018snorkel, ratner2019training, shin2021universalizing, zhang2022survey}. In PWS, practitioners first acquire cheap and abundant weak labels $Z$ through heuristics, crowdsourcing, external APIs, and pretrained models, which serve as proxies for $Y$. Then, they fit a \textit{label model}, \ie, a graphical model for $P_{Y,Z}$ \citep{ratner2016data, ratner2019training, fu2020fast, dong2022survey}, which, under appropriate modeling assumptions, can be fitted without requiring $Y$'s. Finally, a predictor $h:\cX\to\cY$ is trained using samples $(X_i,Z_i)$'s and a \textit{noise-aware loss} constructed using this fitted label model \citep{ratner2016data}. 


One major unsolved issue with the weak supervision approach is that even if we knew $P_{Y, Z}$, evaluation metrics such as accuracy, recall, precision, or $F_1$ cannot be estimated for model validation without any ground truth labels. In fact, these quantities are not identifiable (not uniquely determined) since we only have partial information about the joint distribution $P_{X,Y}$ through the marginals $P_{X, Z}$ and $P_{Y, Z}$. As a consequence, any performance metric based on $h$ cannot be estimated without making extra strong assumptions, \eg, $X\ind Y \mid Z$. Unfortunately, these conditions are unlikely to arise in many situations. A recent work \citep{zhu2023weaker} investigated the role and importance of ground truth labels on model evaluation in the weak supervision literature. They determined that, under the current situation, the \textit{good performance and applicability of weakly supervised classifiers heavily rely on the presence of at least some high-quality labels, which undermines the purpose of using weak supervision} since models can be directly fine-tuned on those labels and achieve similar performance. Therefore, in this work, we develop new evaluation methods that can be used without any ground truth labels and show that the performance of weakly supervised models can be accurately estimated in many cases, even permitting successful model selection. Our solution relies on partial identification by estimating Fréchet bounds for bounding performance metrics such as accuracy, precision, recall, and $F_1$ score of classifiers trained with weak supervision.

\textbf{Fréchet bounds:} Consider a random vector $(X,Y,Z)\in \cX\times\cY\times\cZ$ is drawn from an unknown distribution $P$. We assume $\cX \subset \reals^d$ is arbitrary while $\cY$ and $\cZ$ are finite. In this work, we develop and analyze the statistical properties of a method for estimating Fréchet bounds \citep{ruschendorf1991bounds,ruschendorf1991frechet} of the form 
\begin{talign}\label{eq:lo_up_objective}\textstyle
    L\triangleq\underset{\pi \in \Pi}{\inf}~  \Ex_{\pi}[g(X,Y,Z)]\text{ and } U\triangleq\underset{\pi \in \Pi}{\sup}~  \Ex_{\pi}[g(X,Y,Z)]
\end{talign}
when $g$ is a fixed bounded function, with $\Pi$ being the set of distributions $\pi$ for $(X,Y,Z)$ such that the marginal $\pi_{X, Z}$ (resp. $\pi_{Y, Z}$) is identical to the prescribed marginal $P_{X, Z}$ (resp. $P_{Y, Z}$). Our proposed method can efficiently obtain estimates for the bounds by solving convex programs, with the significant advantage that the computational complexity of our algorithm does not scale with the dimensionality of $X$, making it well-suited for applications dealing with high-dimensional data. In previous work, for example, Fréchet bounds were studied in the financial context (\eg, see \citet{ruschendorf2017risk,bartl2022marginal}). However, our focus is on applying our methods of estimating Fréchet bounds to the problem of assessing predictors trained using programmatic weak supervision (PWS). For example,  the upper and lower bounds for the accuracy of a classifier $h$ can be estimated using our method simply by letting $g(x, y, z) = \ones[h(x)= y]$ in \eqref{eq:lo_up_objective}. 
At a high level, our method replaces $P_{Y, Z}$ with the fitted label model, and $P_{X, Z}$ with its empirical version in the Fr\'echet bounds in \eqref{eq:lo_up_objective}, and reformulates the problem in terms of a convex optimization problem. 

\textbf{Contributions:} Our contributions are

\begin{enumerate}
\item Developing a practical algorithm for estimating the Fréchet bounds in \eqref{eq:lo_up_objective}. Our algorithm can be summarized as solving convex programs and is scalable to high-dimensional distributions.

\item Quantifying the uncertainty in the computed bounds due to uncertainty in the prescribed marginals by deriving the asymptotic distribution for our estimators.

\item Applying our method to bounding the accuracy, precision, recall, and $F_1$ score of classifiers trained with weak supervision. This enables practitioners to evaluate classifiers in weak supervision settings \textit{without access to ground truth labels}.

\end{enumerate}

\vspace{-.3cm}
\subsection{Related work}\label{sec:related}
\vspace{-.2cm}

\textbf{Weak supervision:} With the emergence of data-hungry models, the lack of properly labeled datasets has become a major bottleneck in the development of supervised models. One approach to overcome this problem is using programmatic weak supervision (PWS) to train predictors in the absence of high-quality labels $Y$ \citep{ratner2016data, ratner2017snorkel, ratner2018snorkel, ratner2019training, shin2021universalizing, zhang2022survey}. PWS has shown the potential to solve a variety of tasks in different fields with satisfactory performance. For example, some works have applied weak supervision to named-entity recognition \citep{lison2020named, fries2021ontology, shah2023finer}, video frame classification \citep{fu2020fast}, bone and breast tumor classification \citep{varma2017inferring}. More recently, \citet{smith2022language} proposed a new approach to integrating weak supervision and pre-trained large language models (LLMs). Rather than applying LLMs in the usual zero/few-shot fashion, they treat those large models as weak labelers that can be used through prompting to obtain weak signals instead of using hand-crafted heuristics. Recently, \citet{zhu2023weaker} showed that in many situations, the success of weakly supervised classifiers depends on the availability of ground truth validation samples, undermining the purpose of weak supervision. Then, we develop a new method for model evaluation that does not depend on the availability of any ground truth labels.

A relevant line of research within the realm of weak supervision that is closely related to this work is adversarial learning \citep{arachie2019adversarial,arachie2021general,mazzetto2021semi,mazzetto2021adversarial}. Often, adversarial learning aims to learn predictors that perform well in worst-case scenarios. For example, \citet{mazzetto2021adversarial} develops a method to learn weakly supervised classifiers in the absence of a good label model. In their work, the authors use a small set of labeled data points to constrain the space of possible data distributions and then find a predictor that performs well in the worst-case scenario. Our work relates to this literature in the sense that we are interested in the worst and best-case scenarios over a set of distributions. However, we focus on developing an evaluation method instead of another adversarial learning strategy.

\textbf{Partial identification:} It is often the case that the distributions of interest cannot be fully observed, which is generally due to missing or noisy data \citep{molinari2020microeconometrics,guo2022partial}. In cases where practitioners can only observe some aspects of those distributions, \eg, marginal distributions or moments, parameters of interest may not be identifiable without strong assumptions due to ambiguity in the observable data. Partial identification deals with the problem without imposing extra assumptions. This framework allows estimating a set of potential values for the parameters of interest (usually given by non-trivial bounds) and has been frequently considered in many areas such as microeconometrics \citep{manski1989anatomy,manski1990nonparametric,manski2003partial,molinari2020microeconometrics}, causal inference \citep{kallus2022s,guo2022partial}, algorithmic fairness \citep{fogliato2020fairness,prost2021measuring}. Our work is most related to \citet{ruschendorf1991bounds,ruschendorf1991frechet,ruschendorf2017risk,bartl2022marginal}, which study bounds for the uncertainty of a quantity of interest for a joint distribution that is only partially identified through its marginals, \ie, Fréchet bounds. 
Compared to the aforementioned works, the novelty of our contribution is proposing a convex optimization algorithm that accurately estimates the Fr\'echet bounds with proven performance guarantees in a setup that is realized in numerous weak-supervision applications.

\vspace{-.3cm}
\subsection{Notation}
\vspace{-.2cm}

We write $\Ex_Q$ and $\Var_Q$ for the expectation and variance of statistics computed using i.i.d. copies of a random vector $W\sim Q$. Consequently, $\Pr_Q(A) = \Ex_Q \ones_A$, where $\ones_A$ is the indicator of an event $A$. If the distribution is clear by the context, we omit the subscript. If $(a_m)_{m \in \mathbb{N}}$ and $(b_m)_{m \in \mathbb{N}}$ are sequences of scalars, then $a_m=o(b_m)$ is equivalent to $a_m/b_m \to 0$ as $m\to \infty$ and $a_m=b_m+o(1)$ means $a_m-b_m=o(1)$. If $(V^{(m)})_{m \in \mathbb{N}}$ is a sequence of random variables, then (i) $V^{(m)}=o_P(1)$ means that for every $\varepsilon>0$ we have $\Pr(|V^{(m)}|>\varepsilon)\to 0$ as $m\to \infty$, (ii) $V^{(m)}=\cO_P(1)$ means that for every $\varepsilon>0$ there exists a $M>0$ such that $\sup_{m \in 
\mathbb{N}}\Pr(|V^{(m)}|>M)< \varepsilon$, (iii) $V^{(m)}=a_m+o_P(1)$ means $V^{(m)}-a_m=o_P(1)$, (iv) $V^{(m)}=o_P(a_m)$ means $V^{(m)}/a_m=o_P(1)$, and (v) $V^{(m)}=\cO_P(a_m)$ means $V^{(m)}/a_m=\cO_P(1)$.

\vspace{-.3cm}
\section{Estimating Fréchet bounds}\label{sec:partial_ident}
\vspace{-.2cm}

A roadmap to our approach follows. We first reformulate the Fr\'echet bounds in \eqref{eq:lo_up_objective} into their dual problems, which we discuss in \eqref{eq:non-smooth-dual}. Then, we replace the non-smooth dual problems with their appropriate smooth approximations, as discussed in \eqref{eq:uleps}. Finally, we propose estimators for the smooth approximations \eqref{eq:ulhat} and derive their asymptotic distributions in Theorem \ref{thm:clt}.


\vspace{-.3cm}
\subsection{Dual formulations of the bounds and their approximations}
\label{sec:dual-formulation}
\vspace{-.2cm}

This section presents a result that allows us to efficiently solve the optimization problems in \eqref{eq:lo_up_objective} by deriving their dual formulations as finite-dimensional convex programs. Before we dive into the result, let us define a family of matrices denoted by 
\[\textstyle
\cA\triangleq\big\{a\in \reals^{|\cY|\times|\cZ|} ~:~  \sum_{y\in\cY}a_{yz}=0 \text{ for every }z\in\cZ\big\}.
\]

With this definition in place, we introduce the dual formulation in Theorem \ref{thm:dual_problem}.

\begin{theorem}\label{thm:dual_problem}Let $g:\cX \times \cY \times \cZ \to \reals$ be a bounded measurable function. Then, 
\begin{talign} \label{eq:non-smooth-dual}
     L = \underset{a \in \cA}{\sup}~\Ex[f_l(X,Z,a)] \text{ and } U = \underset{a \in \cA}{\inf}~\Ex[f_u(X,Z,a)]
\end{talign}
\vspace{-2mm}
where
\vspace{-1mm}
\begin{talign*}
        f_l(x,z,a) \triangleq & \textstyle \underset{\bar{y} \in \cY}{\min}\left[g(x,\bar{y},z)+a_{\bar{y}z}\right]-\Ex_{P_{Y| Z}}\left[a_{Yz}| Z=z\right]\\
        f_u(x,z,a) \triangleq & \textstyle \underset{\bar{y} \in \cY}{\max}\left[g(x,\bar{y},z)+a_{\bar{y}z}\right]-\Ex_{P_{Y| Z}}\left[a_{Yz}| Z=z\right].
\end{talign*}

Moreover, $L$ and $U$ are attained by some optimizers in $\cA$.
\end{theorem}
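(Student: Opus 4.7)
The plan is to decompose the problem fiberwise in $Z$ and then apply Kantorovich optimal transport (OT) duality on each fiber. Since the two marginal constraints defining $\Pi$ are compatible only if their $Z$-marginals agree, every $\pi \in \Pi$ satisfies $\pi_Z = P_Z$; conditioning on $Z = z$ therefore makes $\pi_{X, Y \mid Z = z}$ a coupling of the prescribed marginals $P_{X \mid Z = z}$ and $P_{Y \mid Z = z}$. Writing
\[
L(z) \triangleq \inf \Ex_\pi\!\left[g(X, Y, z) \mid Z = z\right]
\]
over such couplings, linearity of expectation gives $L = \Ex[L(Z)]$, and symmetrically $U = \Ex[U(Z)]$. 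This reduces the infinite-dimensional Fréchet problem to a family of standard OT problems indexed by $z$.

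Next I invoke Kantorovich duality on each fiber: since $g$ is bounded measurable and $\cY$ is finite,
\[
L(z) = \sup_{\phi, \psi:\ \phi(x) + \psi(y) \le g(x, y, z)} \Ex_{X \mid z}[\phi(X)] + \Ex_{Y \mid z}[\psi(Y)].
\]
For any fixed $\psi \in \reals^{|\cY|}$ the pointwise optimal $\phi$ is the $c$-transform $\phi(x) = \min_{\bar y}[g(x, \bar y, z) - \psi(\bar y)]$, which is measurable because the min ranges over finitely many measurable functions. Substituting and reparametrizing $a_{yz} \triangleq -\psi(y)$ reproduces $f_l(x, z, a)$ exactly, so $L(z) = \sup_{a(\cdot, z)} \Ex_{X \mid z}[f_l(X, z, a)]$. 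Because the dual variable can be chosen independently on each fiber, the sup commutes with the outer expectation over $Z$, giving $L = \sup_{a \in \reals^{|\cY| \times |\cZ|}} \Ex[f_l(X, Z, a)]$. The argument for $U$ is identical with $\min$ replaced by $\max$ and $\sup$ by $\inf$, using Kantorovich duality in the $\phi + \psi \ge g$ form. Finally, the restriction to $\cA$ is a harmless gauge fix: the shift $a_{yz} \mapsto a_{yz} + c_z$ adds $c_z$ both to the $\min/\max$ term and to $\Ex_{P_{Y \mid Z}}[a_{Yz} \mid Z = z]$, so $f_l$ and $f_u$ are pointwise invariant and every unconstrained optimizer has a unique column-sum-zero representative with the same objective value.

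The main obstacle is attainment. The objective in $a$ is concave (resp.\ convex) as an expectation of a min (resp.\ max) of affine functions minus a linear correction, and the set $\cA$ is a finite-dimensional affine subspace. I would argue coercivity fiberwise: when $P_{Y \mid Z = z}$ is non-degenerate, the column-sum-zero constraint forces both $\min_y a_{yz} \to -\infty$ and $\max_y a_{yz} \to +\infty$ along any sequence with $\|a(\cdot, z)\| \to \infty$, so the bound $f_l(x, z, a) \le M + \min_y a_{yz} - \Ex_{P_{Y \mid Z}}[a_{Yz} \mid Z = z]$ combined with the fact that the conditional expectation cannot stay close to $\min_y a_{yz}$ (by non-degeneracy) drives $\Ex[f_l] \to -\infty$; on fibers where $P_{Y \mid Z = z}$ is a point mass at some $y_0$ the primal value is trivially $\Ex_{X \mid z}[g(X, y_0, z)]$ and is attained in the dual by any $a$ with sufficiently large gaps $a_{yz} - a_{y_0 z} \ge 2\|g\|_\infty$. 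Coercivity plus upper-semicontinuity then yields a maximizer on each fiber, and assembling the fiberwise maximizers produces an attainer in $\cA$; the $U$ case is entirely symmetric.
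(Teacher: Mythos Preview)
Your argument is essentially the paper's: decompose fiberwise in $Z$, apply Kantorovich duality on each fiber, take the $c$-transform to eliminate the $X$-potential (which leaves a finite-dimensional dual variable because $\cY$ is finite), and then use the column-shift invariance to pass from $\reals^{|\cY|\times|\cZ|}$ to $\cA$. The paper packages the duality and attainment steps as citations to Theorems~1 and~2 of Beiglb\"ock et al., and handles $U$ by applying the $L$ result to $1-g$ rather than rerunning the $\max/\inf$ version directly as you do.

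The one place your write-up needs patching is the coercivity argument for attainment. Your dichotomy ``non-degenerate vs.\ point mass'' does not cover the intermediate case where $P_{Y\mid Z=z}$ has some but not all atoms equal to zero. For instance with $|\cY|=3$ and $p_{\cdot\mid z}=(\tfrac12,\tfrac12,0)$, the column-sum-zero ray $a_{\cdot z}=(-t,-t,2t)$ satisfies $\min_y a_{yz}-\Ex_{P_{Y\mid Z}}[a_{Yz}\mid Z=z]=-t-(-t)=0$ for every $t$, so your upper bound on $f_l$ does not go to $-\infty$ and coercivity fails along that direction. The repair is routine---restrict the inner $\min$ to the support of $P_{Y\mid Z=z}$, argue coercivity there, and then set the off-support coordinates of $a$ large enough to be inactive in the $\min$---but it should be said explicitly. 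The paper sidesteps this by invoking the general OT-dual attainment theorem rather than proving coercivity by hand.
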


Theorem \ref{thm:dual_problem} remains valid if we maximize/minimize over $\reals^{|\cY|\times|\cZ|}$ instead of $\cA$. However, this is not necessary because the values of $f_l$ and $f_u$ remain identical for the following shifts in $a$:  $a_{\cdot z} \gets a_{\cdot z} + b_z$ where $b_z \in \reals$.  By constraining the set of optimizers to $\cA$, we eliminate the possibility of having multiple optimal points. The proof of Theorem \ref{thm:dual_problem} is placed in Appendix \ref{append:proof_duality} and is inspired by ideas from Optimal Transport; see Appendix \ref{sec:ot}.

The computation of these bounds entails finding a minimum or maximum over a discrete set, meaning that straightforward application of their empirical versions could result in optimizing non-smooth functions, which is often challenging. To mitigate this, we consider a smooth approximation of the problem that is found to be useful in handling non-smooth optimization problems \citep{an2022efficient, asl2021behavior}. We approximate the $\max$ and $\min$ operators with their ``soft'' counterparts: 
\[
\begin{aligned}
    & \textstyle\text{softmin}\{b_1,\cdots,b_{K}\} \triangleq-\varepsilon \log[\frac{1}{K}\sum_k\exp(\frac{-b_k}{\varepsilon})],~~\textstyle\text{softmax}\{b_1,\cdots,b_{K}\}\triangleq \varepsilon \log[\frac{1}{K}\sum_k\exp(\frac{b_k}{\varepsilon})]\,,
\end{aligned}
\]
where $\varepsilon>0$ is a small constant that dictates the level of smoothness. As $\varepsilon$ nears zero, these soft versions of $\max$ and $\min$ converge to their original non-smooth forms. 
Using these approximations, we reformulate our dual optimization in \eqref{eq:non-smooth-dual} into smooth optimization problems:
\begin{talign}\label{eq:uleps}
    L_\varepsilon \triangleq \underset{a \in \cA}{\sup}~\Ex[f_{l,\varepsilon}(X,Z,a)]\text{ and }U_\varepsilon \triangleq \underset{a \in \cA}{\inf}~\Ex[f_{u,\varepsilon}(X,Z,a)]
\end{talign}
 where 
\vspace{-2mm}
\begin{talign*}
         f_{l,\varepsilon}(x,z,a) \triangleq & \textstyle -\varepsilon\log\left[\frac{1}{|\cY|}\sum_{y \in \cY}\exp\left(\frac{g(x,y,z)+a_{yz}}{-\varepsilon}\right)\right]-\Ex_{P_{Y\mid Z}}\left[a_{Yz}\mid Z=z\right]\\
         f_{u,\varepsilon}(x,z,a)\triangleq &\textstyle~\varepsilon\log\left[\frac{1}{|\cY|}\sum_{y \in \cY}\exp\left(\frac{g(x,y,z)+a_{yz}}{\varepsilon}\right)\right]-\Ex_{P_{Y\mid Z}}\left[a_{Yz}\mid Z=z\right]
\end{talign*}
and $\varepsilon > 0$ is kept fixed at an appropriate value. As a consequence of Lemma 5 of \citet{an2022efficient}, we know that $L_\varepsilon$ and $U_\varepsilon$ are no more than $\varepsilon \log|\cY|$ units from $L$ and $U$. Thus, that distance can be regulated by adjusting $\varepsilon$. For example, if we are comfortable with an approximation error of $10^{-2}$ units when $|\cY|=2$, we will set $\varepsilon=10^{-2}/\log(2)\approx.014$. 

\vspace{-.3cm}
\subsection{Estimating the bounds}\label{sec:estimating}
\vspace{-.2cm}

In practice, it is not usually possible to solve the optimization problems in \eqref{eq:uleps}, because we may not have direct access to the distributions $P_{X,Z}$ and $P_{Y\mid Z}$. We overcome this problem by assuming that we can estimate the distributions using an available dataset.

To this end, let us assume that we have a sample $\{(X_i,Z_i)\}_{i=1}^n\overset{\text{iid}}{\sim}P_{X,Z}$, and thus we replace the relevant expectations with $P_{X, Z}$ by its empirical version. Additionally, we have a sequence $\{\hat{P}^{(m)}_{Y\mid Z}, m\in \mathbb{N}\}$ that estimates $P_{Y\mid Z}$ with greater precision as $m$ increases. Here, $m$ can be viewed as the size of a sample to estimate $P_{Y\mid Z}$. Although the exact procedure for estimating the conditional distribution is not relevant to this section, we have discussed in our introductory section that this can be estimated using a \textit{label model} \citep{ratner2019training,fu2020fast} in applications with weak supervision or in a variety of other ways for applications beyond weak supervision. Later in this section, we will formalize the precision required for the estimates. To simplify our notation, we omit the superscript $m$ in $\hat{P}_{Y\mid Z}^{(m)}$, whenever it is convenient to do so.

Thus, the Fr\'echet bounds are estimated as
\begin{talign}\label{eq:ulhat}
     &\hat{L}_\varepsilon = \underset{a \in \cA}{\sup} ~\frac{1}{n}\sum_{i=1}^n \hat{f}_{l,\varepsilon}(X_i,Z_i,a) ~~\text{and }\hat{U}_\varepsilon = \underset{a \in \cA}{\inf} ~\frac{1}{n}\sum_{i=1}^n \hat{f}_{u,\varepsilon}(X_i,Z_i,a)
\end{talign}
\vspace{-2mm}
 where 
\vspace{-1mm}
\begin{equation*}
    \begin{aligned}
         \hat f_{l,\varepsilon}(x,z,a) \triangleq & \textstyle -\varepsilon\log\left[\frac{1}{|\cY|}\sum_{y \in \cY}\exp\left(\frac{g(x,y,z)+a_{yz}}{-\varepsilon}\right)\right]-\Ex_{\hat P_{Y\mid Z}}\left[a_{Yz}\mid Z=z\right]\\
         \hat f_{u,\varepsilon}(x,z,a)\triangleq &\textstyle~ \varepsilon\log\left[\frac{1}{|\cY|}\sum_{y \in \cY}\exp\left(\frac{g(x,y,z)+a_{yz}}{\varepsilon}\right)\right]-\Ex_{\hat P_{Y\mid Z}}\left[a_{Yz}\mid Z=z\right]
    \end{aligned}
\end{equation*}
 In our practical implementations we eliminate the constraint that $\sum_y a_{yz} = 0$ for all $z\in\cZ$ by adding a penalty term $\sum_{z\in\cZ}(\sum_{y\in\cY}a_{yz})^2$ to $\hat{U}_\varepsilon$ (and its negative to $\hat{L}_\varepsilon$) and then solve unconstrained convex programs using the L-BFGS algorithm \citep{liu1989limited}. Since the penalty term vanishes only when $\sum_{y} a_{yz} = 0$ for all $z\in\cZ$, we guarantee that the optimal solution is in $\cA$. 

\vspace{-.3cm}
\subsection{Asymptotic properties of the estimated bounds}
\vspace{-.2cm}

In the following, we state the assumptions required for our asymptotic analysis of $\hat{L}_\varepsilon$ and $\hat{U}_\varepsilon$. We start with some regularity assumptions.

\begin{assumption}\label{assump:assump1}
    $L_\varepsilon$ and $U_\varepsilon$ are attained by some optimizers in $\cA$ \eqref{eq:uleps}.
\end{assumption}

\begin{assumption}\label{assump:assump2}
    Let $\hat{a}$ represent the optimizer for any problem in \eqref{eq:ulhat}, which is assumed to exist. Suppose $\norm{\hat{a}}_\infty=\cO_P(1)$ as $m\to \infty$.
\end{assumption}

We show in Lemmas \ref{lemma:tech1}, \ref{lemma:tech2}, and \ref{lemma:tech3} that Assumptions \ref{assump:assump1} and \ref{assump:assump2} can be derived in the binary classification case ($|\cY|=2$) if $\Pr(Y=y \mid Z = z)$ is bounded away from both zero and one, \ie\ $\kappa < \Pr(Y=y \mid Z = z) < 1- \kappa$ for some $\kappa > 0$ for every $y \in \cY$ and $z\in \cZ$.

In our next assumption, we formalize the degree of precision for the sequence $\{\hat{P}^{(m)}_{Y\mid Z}, m\in \mathbb{N}\}$ of estimators that we require for desired performances of the bound estimates. 
\begin{assumption}\label{assump:assump3}
   Denote the total variation distance (TV) between probability measures as $d_\textup{TV}$. For every $z\in\cZ$, for some $\lambda>0$, we have that $d_\textup{TV}\big(\hat{P}_{Y\mid Z=z}^{(m)}, P_{Y\mid Z=z}\big) = \cO_P(m^{-\lambda})$.
\end{assumption}
From \citet{ratner2019training}'s Theorem 2 and a Lipschitz property of the label model\footnote{\citet{ratner2019training} derive that in page 24 of their arXiv paper version.}, we can conclude $\lambda=1/2$ for a popular label model used in the PWS literature. The asymptotic distributions for the estimated bounds follow.
\begin{theorem}\label{thm:clt}
    Assume \ref{assump:assump1}, \ref{assump:assump2}, and \ref{assump:assump3}, and let $n$ be a function of $m$ such that $n\to\infty$ and $n=o(m^{2\lambda})$ when $m\to\infty$. Then, as $m \to \infty$
     \begin{talign*}
     \sqrt{n}(\hat{L}_\varepsilon-L_\varepsilon)\Rightarrow N(0, \sigma_{l,\varepsilon}^2)\text{ and }\sqrt{n}(\hat{U}_\varepsilon-U_\varepsilon)\Rightarrow N(0,  \sigma_{u,\varepsilon}^2)
     \end{talign*}
     where $ \sigma_{l,\varepsilon}^2\triangleq\Var f_{l,\varepsilon}(X,Z,a^*_{l,\varepsilon})$, $\sigma_{u,\varepsilon}^2\triangleq\Var f_{u,\varepsilon}(X,Z,a^*_{u,\varepsilon})$, and $a^*_{l,\varepsilon}$ and  $a^*_{u,\varepsilon}$ are the unique optimizers to attain $L_\varepsilon$ and $U_\varepsilon$ \eqref{eq:uleps}.
\end{theorem}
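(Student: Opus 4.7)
The plan is to prove the result for $\hat L_\varepsilon$; the argument for $\hat U_\varepsilon$ is symmetric. I treat this as a smoothed M-estimation problem and combine a Taylor expansion around the population optimizer with a negligibility argument for the error from $\hat P_{Y\mid Z}$. The scaling $n = o(m^{2\lambda})$ is chosen precisely so that the label-model error is asymptotically dominated by the $\sqrt n$-rate empirical noise from the $(X_i, Z_i)$ sample.

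First, I would establish uniqueness and consistency. The map $a \mapsto \Ex[f_{l,\varepsilon}(X,Z,a)]$ is the sum of a softmin (concave in $a$, and strictly concave modulo the invariance $a_{\cdot z} \mapsto a_{\cdot z} + b_z$) and a term linear in $a$; restricting to $\cA$ kills this invariance, so the population criterion is strictly concave on $\cA$ and admits the unique maximizer $a^*_{l,\varepsilon}$ asserted in the statement. Assumption~\ref{assump:assump4} moreover guarantees that the Hessian is negative definite on $\cA$ at $a^*_{l,\varepsilon}$, since the log-sum-exp contributes nondegenerate curvature only when $P_{Y\mid Z}$ sits in the interior of the simplex. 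Combined with coercivity and the uniform convergence of the empirical criterion on compact subsets --- which follows from the Lipschitz continuity of $\hat f_{l,\varepsilon}(x,z,\cdot)$ in $a$ and from Assumption~\ref{assump:assump3} --- a Newey--McFadden style argument yields $\hat a_{l,\varepsilon} - a^*_{l,\varepsilon} = o_P(1)$, and the usual first-order-condition sharpening gives
\[
\|\hat a_{l,\varepsilon} - a^*_{l,\varepsilon}\| = \cO_P\bigl(n^{-1/2} + m^{-\lambda}\bigr),
\]
where the two terms come from the CLT scaling of the empirical gradient at $a^*_{l,\varepsilon}$ and from the TV deviation of $\hat P_{Y\mid Z}$ appearing in that gradient.

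A quadratic Taylor expansion of $a \mapsto \tfrac{1}{n}\sum_i \hat f_{l,\varepsilon}(X_i,Z_i,a)$ around $a^*_{l,\varepsilon}$, evaluated at $\hat a_{l,\varepsilon}$, then gives
\[
\hat L_\varepsilon = \tfrac{1}{n}\sum_{i=1}^n \hat f_{l,\varepsilon}(X_i,Z_i,a^*_{l,\varepsilon}) + \cO_P\bigl(\|\hat a_{l,\varepsilon} - a^*_{l,\varepsilon}\|^2\bigr) = \tfrac{1}{n}\sum_{i=1}^n \hat f_{l,\varepsilon}(X_i,Z_i,a^*_{l,\varepsilon}) + o_P(n^{-1/2}),
\]
the final identity using $\sqrt n (n^{-1} + m^{-2\lambda}) = n^{-1/2} + \sqrt n/m^{2\lambda} \to 0$ under $n=o(m^{2\lambda})$. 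Decomposing further,
\[
\sqrt n\bigl(\hat L_\varepsilon - L_\varepsilon\bigr) = \tfrac{1}{\sqrt n}\sum_{i=1}^n\bigl[f_{l,\varepsilon}(X_i,Z_i,a^*_{l,\varepsilon}) - L_\varepsilon\bigr] + \sqrt n\,\Delta_n + o_P(1),
\]
where $\Delta_n = \tfrac{1}{n}\sum_i[\hat f_{l,\varepsilon} - f_{l,\varepsilon}](X_i,Z_i,a^*_{l,\varepsilon})$ depends only on $\Ex_{\hat P_{Y\mid Z}}[a^*_{l,\varepsilon,Yz}\mid Z=z] - \Ex_{P_{Y\mid Z}}[a^*_{l,\varepsilon,Yz}\mid Z=z]$. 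A summand-wise bound gives $|\Delta_n| \le \max_z \|a^*_{l,\varepsilon,\cdot z}\|_\infty \cdot \max_z d_{\textup{TV}}(\hat P_{Y\mid Z=z}, P_{Y\mid Z=z}) = \cO_P(m^{-\lambda})$, so $\sqrt n\,\Delta_n = \cO_P(\sqrt n/m^\lambda) = o_P(1)$. The leading i.i.d.\ sum converges to $N(0,\sigma_{l,\varepsilon}^2)$ by the ordinary CLT, since $f_{l,\varepsilon}$ is bounded, completing the proof.

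The main obstacle is the quadratic-expansion step: one needs a quantitative negative-definiteness bound on the Hessian of the population criterion on the subspace $\cA$, so that the M-estimator rate can be translated into an $o_P(n^{-1/2})$ remainder. This curvature is exactly where Assumption~\ref{assump:assump4} enters, and it must be combined with uniform Lipschitz control of the Hessian of $\hat f_{l,\varepsilon}$ near $a^*_{l,\varepsilon}$ together with Assumption~\ref{assump:assump3} in order to pass from the population Hessian to the empirical one with a stochastic error that remains negligible under $n=o(m^{2\lambda})$.
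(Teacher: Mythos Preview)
Your argument is correct and close in spirit to the paper's, but the decomposition is organized differently. The paper introduces an intermediate oracle quantity $\tilde U_\varepsilon = \inf_a \frac{1}{n}\sum_i \tilde f_{u,\varepsilon}(X_i,Z_i,a)$ computed with the \emph{true} $P_{Y\mid Z}$ and the penalized objective $\tilde f_{u,\varepsilon}(x,z,a) = f_{u,\varepsilon}(x,z,a) + \sum_{z'}(\sum_y a_{yz'})^2$; it first obtains the CLT for $\sqrt n(\tilde U_\varepsilon - U_\varepsilon)$ via Niemiro's (1992) M-estimation result and a Taylor expansion around the empirical optimizer $\tilde a_\varepsilon$, and then separately shows $\sqrt n\,|\hat U_\varepsilon - \tilde U_\varepsilon| = o_P(1)$ by a direct optimality-swap bound (no Taylor step), using only tightness of both optimizers and the TV rate from Assumption~\ref{assump:assump3}. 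You instead absorb both perturbations into a single Taylor expansion of the fully empirical criterion and control the remainder via the combined rate $\|\hat a_{l,\varepsilon} - a^*_{l,\varepsilon}\| = \cO_P(n^{-1/2}+m^{-\lambda})$. Your route is more streamlined; the paper's separates the two error sources cleanly, needs only $\cO_P(1)$ (not a rate) on the optimizers for the label-model piece, and the penalty trick lets them invoke unconstrained M-estimation on $\reals^{|\cY|\times|\cZ|}$ rather than argue strict concavity on the subspace $\cA$.

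One small misattribution in your write-up: Assumption~\ref{assump:assump4} is not what delivers negative-definiteness of the Hessian on $\cA$ --- the log-sum-exp Hessian does not involve $P_{Y\mid Z}$ at all, and the curvature on $\cA$ holds regardless. Assumption~\ref{assump:assump4} is instead what confines the optimizers (population and empirical) to a compact set, i.e.\ the coercivity-type input you also invoke; the paper isolates this in separate lemmas.
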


Theorem \ref{thm:clt} tells us that, if the label model is consistent (Assumption \ref{assump:assump3}), under some mild regularity conditions (Assumption \ref{assump:assump1} and \ref{assump:assump2}), our estimators and will be asymptotically Gaussian with means $L_\varepsilon$ and $U_\varepsilon$ and variances $\sigma_{l,\varepsilon}^2/n$ and $\sigma_{u,\varepsilon}^2/n$. The above theorem requires $m^{2\lambda}$ to grow faster than $n$ implying that, through assumption \ref{assump:assump3}, $P_{Y \mid Z}$ is estimated with a precision greater than the approximation error when we replace $P_{X, Z}$ with $\frac1n \sum_{i} \delta_{X_i, Z_i}$. In the case which $\lambda=1/2$, this condition translates to $n/m\to 0$ as $n\to \infty$. This allows us to derive the asymptotic distribution when combined with classical results from M-estimation (see proof in Appendix \ref{append:proof_estimation}). 

\textbf{Construction of confidence bounds:} One interesting use of Theorem \ref{thm:clt} is that we can construct an approximate confidence interval for the estimates of the bounds. For example, an approximate $1 - \gamma$ confidence interval for $L_\varepsilon$ can is constructed as 
\[
\hat{I}=\Big[\hat{L}_\varepsilon-\frac{\tau_\gamma \hat{\sigma}_{l,\varepsilon}}{\sqrt{n}},~~ \hat{L}_\varepsilon+\frac{\tau_\gamma \hat{\sigma}_{l,\varepsilon}}{\sqrt{n}}\Big],
\]
where $\tau_\gamma=\Phi^{-1}(1-\gamma/2)$ and $\hat{\sigma}_{l,\varepsilon}$ is the empirical standard deviation of $f_{l,\varepsilon}(X,Z,\cdot)$, substituting the estimate $\hat{a}$ (solution for the problem in \ref{eq:ulhat}). For such interval, it holds $\Pr\big( L_\varepsilon \in \hat{I}\big) \approx 1 - \gamma$,
\ie, with approximately $1-\gamma$ confidence we can say that the true $L_\varepsilon$ is in the interval above.  An interval for $U_\varepsilon$ can be constructed similarly.

\vspace{-.3cm}
\section{Evaluation of model performance in  weak supervision}\label{sec:applic}
\vspace{-.2cm}

In this section, we describe how to use the ideas presented in Section \ref{sec:partial_ident} to estimate non-trivial bounds for the evaluation metrics of a weakly supervised classifier $h$ when no high-quality labels are available. In the standard weak supervision setup, only unlabeled data ($X$) is available, but the practitioner can extract weak labels ($Z$) from the available data. More specifically, we assume access to the dataset $\{(X_i,Z_i)\}_{i=1}^m$, i.i.d. with distribution $P_{X,Z}$, used in its entirety to estimate a label model $\hat{P}_{Y\mid Z}$ \citep{ratner2019training, fu2020fast} and where part of it, \eg, a random subset of size $n$, is used to estimate bounds\footnote{It is also possible to use distinct datasets for estimating the label model and the bounds as long as the dataset to estimate the label is much bigger. This is our approach in the experiments.}. To simplify the exposition, we assume the classifier $h$ is fixed\footnote{In practice, it can be obtained using all the data not used to estimate bounds.}.

\vspace{-.3cm}
\subsection{Risk and accuracy}
\vspace{-.2cm}

Let $\ell$ be a generic classification loss function. The risk of a classifier $h$ is defined as $R(h)=\Ex[\ell(h(X),Y)]$, which cannot be promptly estimated in a weak supervision problem, where we do not observe any $Y$. In this situation, we can make use of our bound estimators in Section \ref{sec:estimating}, where we set $g(x,y,z)=\ell(h(x),y)$ to obtain bounds for $R(h)$. Furthermore, we can estimate an uncertainty set for the accuracy of the classification simply by letting $g(x,y,z)=\ones[h(x)=y]$. 

\subsection{Precision, recall, and $F_1$ score}
\vspace{-.2cm}

For a binary classification problem, where $\cY = \{0, 1\}$, the precision, recall, and $F_1$ score of a classifier $h$ are defined as 
\begin{talign*}
     &p\triangleq  \textstyle \Pr(Y=1\mid h(X)=1)= \frac{\Pr(h(X)=1, Y=1)}{\Pr(h(X)=1)},
   ~~ r\triangleq  \textstyle \Pr(h(X)=1\mid Y=1)= \frac{\Pr(h(X)=1, Y=1)}{\Pr(Y=1)},\\
     &F\triangleq  \textstyle \frac{2}{r^{-1}+p^{-1}}= \frac{2\Pr(h(X)=1, Y=1)}{\Pr(h(X)=1)+\Pr(Y=1)}\,.
\end{talign*}

The quantities $\Pr(h(X)=1)$ and $\Pr(Y=1)$ in the above definitions   are identified, since the marginals $P_{X, Z}$ and $P_{Y, Z}$ are specified in the Fr\'echet problem in \eqref{eq:lo_up_objective}. The $\Pr(h(X)=1)$ can be estimated from the full dataset $\{(X_i, Z_i)\}_{i = 1}^m$ simply using $\hat{\Pr}(h(X)=1) \triangleq \frac{1}{m}\sum_{i=1}^m \ones[h(X_i)=1]$. On the other hand, in most weak supervision applications, $\Pr(Y=1)$ is assumed to be known from some prior knowledge or can be estimated from an auxiliary dataset, \eg, using the method described in the appendix of \citet{ratner2019training}. Estimating or knowing $\Pr(Y=1)$ is required to fit the label model \citep{ratner2019training,fu2020fast} in the first place, so it is beyond our scope of discussion. Then, we assume we have an accurate estimate $\hat{\Pr}(Y=1)$.

The probability $\Pr(h(X)=1,  Y=1)$, which is the final ingredient in the definition of precision, recall, and F1 score is not identifiable as $P_{X, Y}$ is unknown. The uncertainty bounds for this quantity can be estimated using our method simply by letting $g(x,y,z)=\ones[h(x)=1 \text{ and } y=1]$. Let $\hat{L}_\varepsilon$ and $\hat{U}_\varepsilon$ denote the estimated lower and upper bounds for $\Pr(h(X)=1,  Y=1)$ obtained using \eqref{eq:ulhat}. Naturally, the lower bound estimators for precision, recall, and $F_1$ score are
\begin{talign*}
       & \textstyle \hat{p}_{l,\varepsilon} \triangleq \frac{\hat{L}_\varepsilon}{\hat{\Pr}(h(X)=1)}, ~ \hat{r}_{l,\varepsilon} \triangleq \frac{\hat{L}_\varepsilon}{\hat{\Pr}(Y=1)}, \text{ and }\hat{F}_{l,\varepsilon} \triangleq  \frac{2\hat{L}_\varepsilon}{\hat{\Pr}(h(X)=1)+\hat{\Pr}(Y=1)}\,,
\end{talign*} 
while the upper bound estimators $\hat{p}_{u,\varepsilon}$, $\hat{r}_{u,\varepsilon}$, and $\hat{F}_{u,\varepsilon}$ are given by substituting $\hat{L}_\varepsilon$ by $\hat{U}_\varepsilon$ above. In the following corollary, we show that the bounds converge asymptotically to normal distributions, which we use for calculating their coverage bounds presented in our applications.

\begin{corollary} \label{cor:precision-recall}
Let $n$ be a function of $m$ such that $n\to\infty$ and $n=o\left(m^{(2\lambda) \wedge  1}\right)$ when $m\to\infty$.   Assume the conditions of Theorem \ref{thm:clt} hold.  Then as $m \to \infty$ 
\begin{equation*}
    \begin{aligned}
        &\textstyle \circ  \sqrt{n}\big(\hat{p}_{l,\varepsilon}-p_{l,\varepsilon}\big)\Rightarrow N(0, \sigma_{p, l,\varepsilon}^2) \text{ with } 
        p_{l,\varepsilon} = \frac{L_\varepsilon}{\Pr(h(X) = 1)},  ~  \sigma_{p, l,\varepsilon}^2\triangleq \frac{\sigma_{ l,\varepsilon}^2}{\Pr(h(X)=1)^2},\\
        & \textstyle \circ 
         \sqrt{n}\big(\hat{r}_{l,\varepsilon}-r_{l,\varepsilon}\big)\Rightarrow N(0, \sigma_{r, l,\varepsilon}^2) \text{ with } 
        r_{l,\varepsilon} = \frac{L_\varepsilon}{\Pr(Y = 1)}, ~ \sigma_{r, l,\varepsilon}^2\triangleq \frac{\sigma_{ l,\varepsilon}^2}{\Pr(Y=1)^2},\\
        & \textstyle \circ 
         \sqrt{n}\big(\hat{F}_{l,\varepsilon}-F_{l,\varepsilon}\big)\Rightarrow N(0, \sigma_{F, l,\varepsilon}^2)\text{ with } 
        F_{l,\varepsilon} = \frac{2L_\varepsilon}{\Pr(h(X) = 1)+\Pr(Y = 1)}~ \& ~ 
        \sigma_{F, l,\varepsilon}^2\triangleq \frac{4\sigma_{ l,\varepsilon}^2}{[\Pr(h(X) = 1)+\Pr(Y = 1)]^2},\\
    \end{aligned}
\end{equation*} 
    where $L_\varepsilon$,  $\sigma_{ l,\varepsilon}^2$ are defined in Theorem \ref{thm:clt}. Asymptotic distributions for $\textstyle \sqrt{n}\big(\hat{p}_{u,\varepsilon}-p_{u,\varepsilon}\big)$, $ \sqrt{n}\big(\hat{r}_{u,\varepsilon}-r_{u,\varepsilon}\big)$, and $\sqrt{n}\big(\hat{F}_{u,\varepsilon}-F_{u,\varepsilon}\big)$ are obtained in a similar way by changing $L_\varepsilon$  to $U_\varepsilon$ and  $\sigma_{ l,\varepsilon}^2$  to $\sigma_{ u,\varepsilon}^2$.
\end{corollary}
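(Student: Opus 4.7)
\textbf{Proof plan for Corollary \ref{cor:precision-recall}.} The strategy is to reduce each of the three statements to Theorem \ref{thm:clt} via Slutsky's theorem, using the fact that the marginal probabilities $\Pr(h(X)=1)$ and $\Pr(Y=1)$ are estimated at a rate faster than $\sqrt{n}$, and therefore contribute only negligible noise at the scale $\sqrt{n}$ at which $\hat{L}_\varepsilon$ fluctuates.

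First, I would establish the auxiliary convergence rates for the denominators. Since $\hat{\Pr}(h(X)=1)=\tfrac{1}{m}\sum_{i=1}^m \mathbf{1}[h(X_i)=1]$ is an i.i.d.\ binomial average based on the full sample of size $m$, the classical CLT gives $\sqrt{m}\bigl(\hat{\Pr}(h(X)=1)-\Pr(h(X)=1)\bigr)=\mathcal O_P(1)$. The assumption $n=o(m^{(2\lambda)\wedge 1})$ implies in particular $n=o(m)$, hence
\[
\sqrt{n}\bigl(\hat{\Pr}(h(X)=1)-\Pr(h(X)=1)\bigr)=\mathcal O_P\bigl(\sqrt{n/m}\bigr)=o_P(1),
\]
and by the continuous mapping theorem $\hat{\Pr}(h(X)=1)\to \Pr(h(X)=1)$ in probability. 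The analogous statement for $\hat{\Pr}(Y=1)$ follows from the paper's standing assumption that it is an accurate estimator with $\sqrt{m}$-rate (e.g., obtained via an auxiliary sample as described before the corollary); formally, $\sqrt{n}\bigl(\hat{\Pr}(Y=1)-\Pr(Y=1)\bigr)=o_P(1)$.

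Next, for precision, I would expand the difference as
\begin{equation*}
\hat{p}_{l,\varepsilon}-p_{l,\varepsilon}
=\frac{\hat{L}_\varepsilon-L_\varepsilon}{\hat{\Pr}(h(X)=1)}
+L_\varepsilon\!\left(\frac{1}{\hat{\Pr}(h(X)=1)}-\frac{1}{\Pr(h(X)=1)}\right).
\end{equation*}
Multiplying by $\sqrt{n}$: the first term converges in distribution to $N(0,\sigma_{l,\varepsilon}^2/\Pr(h(X)=1)^2)=N(0,\sigma_{p,l,\varepsilon}^2)$ by Theorem \ref{thm:clt} combined with Slutsky (using $\hat{\Pr}(h(X)=1)\to\Pr(h(X)=1)$ in probability and positivity of the limit), while the second term is $o_P(1)$ by the rate above. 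This yields the precision statement; the recall statement follows identically with $\hat{\Pr}(Y=1)$ playing the role of the denominator. For the $F_1$ estimator, one applies the same decomposition with denominator $\hat{\Pr}(h(X)=1)+\hat{\Pr}(Y=1)$, which converges to $\Pr(h(X)=1)+\Pr(Y=1)$ in probability and whose fluctuations are again $o_P(1/\sqrt n)$; an extra factor of $2$ appears from the definition, yielding the stated variance (with the denominator squared, in the obvious reading of the formula). Finally, the upper-bound analogues are obtained by repeating the argument with $(\hat{U}_\varepsilon,U_\varepsilon,\sigma_{u,\varepsilon}^2)$ in place of $(\hat{L}_\varepsilon,L_\varepsilon,\sigma_{l,\varepsilon}^2)$.

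The main (and essentially only) obstacle is bookkeeping the rate condition: one has to verify that the condition $n=o(m^{(2\lambda)\wedge 1})$ is precisely what is needed so that \emph{both} the label-model-driven error in Theorem \ref{thm:clt} (controlled by $n=o(m^{2\lambda})$) \emph{and} the plug-in error from the empirical marginals (controlled by $n=o(m)$) are negligible after multiplication by $\sqrt{n}$. Once these two rates are in hand, the rest is a direct Slutsky/continuous-mapping argument, with no delta-method expansion of $L_\varepsilon$ itself required because the denominators are smooth functions of quantities converging faster than $\sqrt{n}$.
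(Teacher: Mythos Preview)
Your approach is essentially the same as the paper's: decompose the ratio, invoke Theorem~\ref{thm:clt} for the numerator, and kill the denominator fluctuation with Slutsky. The algebraic decomposition you write is equivalent to the one the paper derives (the paper reaches the same two-term split after a short chain of rewritings and a first-order Taylor expansion of $1/\hat{\Pr}$).

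There is one point worth tightening. You assert that $\hat{\Pr}(Y=1)$ has a $\sqrt{m}$-rate ``by the paper's standing assumption,'' but the paper does not assume a rate for this quantity; it \emph{constructs} the estimator
\[
\hat{\Pr}(Y=1)=\frac{1}{m}\sum_{i=1}^m \Ex_{\hat P_{Y\mid Z}}[Y\mid Z=Z_i]
\]
and proves in a separate lemma that its rate is $\cO_P\big(m^{-(\lambda\wedge 1/2)}\big)$, not $\cO_P(m^{-1/2})$, because the error inherits both the $m^{-1/2}$ sampling error in $Z$ and the $m^{-\lambda}$ error from the label model. This is the actual reason the corollary needs $n=o\big(m^{(2\lambda)\wedge 1}\big)$ for recall and $F_1$: it is exactly the condition $\sqrt{n}\cdot m^{-(\lambda\wedge 1/2)}\to 0$. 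Your explanation that the ``$\wedge\,1$'' comes from the $\sqrt{m}$-rate of the empirical marginals is correct for $\hat{\Pr}(h(X)=1)$ but incomplete for $\hat{\Pr}(Y=1)$; when $\lambda<1/2$ the binding constraint on $\hat{\Pr}(Y=1)$ is $n=o(m^{2\lambda})$, which happens to coincide with the Theorem~\ref{thm:clt} requirement, so your conclusion survives under the stated hypothesis even though the bookkeeping is slightly off.
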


Reiterating our discussion in the final paragraph in Section \ref{sec:estimating}, asymptotic distributions are important for constructing confidence intervals for the bounds, which can be done in a similar manner. 

\vspace{-.3cm}
\section{Experiments}\label{sec:exp}
\vspace{-.2cm}

All experiments are structured to emulate conditions where high-quality labels are inaccessible during training, validation, and testing phases, and all weakly-supervised classifiers are trained using the noise-aware loss \citep{ratner2016data}.  To fit the label models, we assume $P_Y$ is known (computed using the training set). Unless stated, we use $l_2$-regularized logistic regressors as classifiers, where the regularization strength is determined according to the validation noise-aware loss.

\textbf{Wrench datasets:} To carry out realistic experiments within the weak supervision setup and study accuracy/F1 score estimation, we utilize datasets incorporated in Wrench (\textbf{W}eak Supe\textbf{r}vision B\textbf{ench}mark) \citep{zhang2021wrench}. This standardized benchmark platform features real-world datasets and pre-generated weak labels for evaluating weak supervision methodologies. Most of Wrench's datasets are designed for classification tasks, encompassing diverse data types such as tabular, text, and image; all contain their pre-computed weak labels. Specifically, we utilize Census \cite{census_income_20}, YouTube \citep{youtube_spam_collection_380}, SMS \citep{sms_spam_collection_228}, IMDB \citep{10.5555/2002472.2002491}, Yelp \citep{zhang2015character}, AGNews \citep{zhang2015character}, TREC \citep{li2002learning}, Spouse \citep{corney2016million}, SemEval \citep{hendrickx-etal-2010-semeval}, CDR \citep{davis2017comparative}, ChemProt \citep{krallinger2017overview},  Commercial \citep{fu2020fast}, Tennis Rally \citep{fu2020fast}, Basketball \citep{fu2020fast}. For text datasets, we employ the \texttt{paraphrase-MiniLM-L6-v2} model from the \textit{sentence-transformers}\footnote{Accessible at \url{https://huggingface.co/sentence-transformers/paraphrase-MiniLM-L6-v2}.}
library for feature extraction \citep{reimers-2019-sentence-bert}. Features were extracted for the image datasets before their inclusion in Wrench.


\begin{figure*}[t]
    \includegraphics[width=1\textwidth]{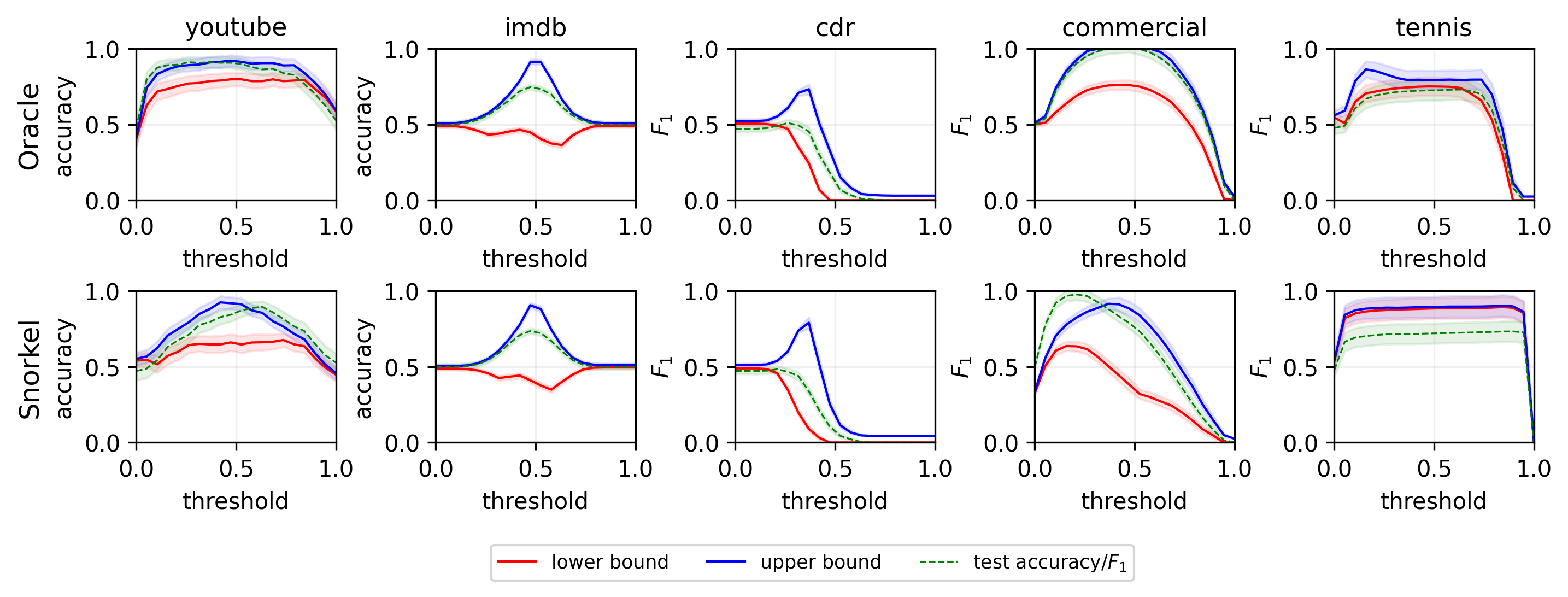} 
    \centering
    
    \caption{\small We apply our method to bound test metrics such as accuracy and F1 score (in green) when no true labels are used to estimate performance. In the first row (``Oracle''), we use true labels to estimate the conditional distribution $P_{Y\mid Z}$, thus approximating a scenario in which the label model is reasonably specified. On the second row (``Snorkel''), we use a label model to estimate $P_{Y\mid Z}$ without access to any true labels. Despite potential misspecification in Snorkel's label model, it performs comparably to using labels to estimate $P_{Y\mid Z}$, giving approximate but meaningful bounds.} 
    \label{fig:experiment1}
    \vspace{-.5cm}
\end{figure*}



\textbf{Hate Speech Dataset \citep{gibert2018hate}:} This dataset contains sentence-level annotations for hate speech in English, sourced from posts from white supremacy forums. It encompasses thousands of sentences classified into either \texttt{Hate} ($1$) or \texttt{noHate} ($0$) categories. This dataset provides an ideal ground for examining recall and precision estimation. Social media moderators aim to maximize the filtering of hate posts, \ie, increasing recall, while ensuring that non-hate content is rarely misclassified as offensive, maintaining high precision. Analogously to the Wrench text datasets, we utilize \texttt{paraphrase-MiniLM-L6-v2} for feature extraction.

\vspace{-.1cm}
\subsection{Bounding the performance of weakly supervised classifiers}\label{sec:performance}
\vspace{-.2cm}

In this section, we conduct an empirical study using some of the Wrench and Hate Speech datasets to verify the validity and usefulness of our methodology. We compare results for which $P_{Y\mid Z}$ is estimated using the true labels $Y$ (``Oracle'') and those derived using Snorkel's \citep{ratner2017snorkel,ratner2018snorkel} default label model with no hyperparameter tuning and a thousand epochs. Such a comparison facilitates an evaluation of our method's efficacy, especially in cases where the label model could be incorrectly specified. Results for other Wrench datasets and one extra label model (FlyingSquid, \citep{fu2020fast}) are presented in Appendix \ref{sec:more_exp}.




\begin{wraptable}[9]{r}{0.53\textwidth}
\vspace{-0.4cm}

    \centering

\caption{Bounding accuracy in multinomial classification.}

{\tiny
\begin{tabular}{c|cccc}
    \toprule
    \rowcolor{Gray} Dataset & Lab. model & Lo. bound & Up. bound & Test acc \bigstrut\\
    \midrule
    \multirow{2}{*}[-1ex]{\rotatebox{0}{agnews}} & Oracle & $0.46_{\pm 0.01}$ & $0.95_{\pm 0.01}$ & $0.80_{\pm 0.01}$ \bigstrut\\
    & \cellcolor{cornsilk}Snorkel &\cellcolor{cornsilk} $0.42_{\pm 0.01}$ &\cellcolor{cornsilk} $0.9_{\pm 0.01}$ &\cellcolor{cornsilk} $0.76_{\pm 0.01}$ \bigstrut\\
    \midrule
    \multirow{2}{*}[-1ex]{\rotatebox{0}{semeval}} & Oracle & $0.54_{\pm 0.04}$ & $0.78_{\pm 0.03}$ & $0.72_{\pm 0.04}$ \bigstrut\\
    & \cellcolor{cornsilk}Snorkel &\cellcolor{cornsilk} $0.36_{\pm 0.03}$ &\cellcolor{cornsilk} $0.70_{\pm 0.03}$ &\cellcolor{cornsilk} $0.56_{\pm 0.04}$ \bigstrut\\
    \bottomrule
    \end{tabular}}
\label{tab:multi_class}
\vspace{-5mm}
\end{wraptable}

In Figure \ref{fig:experiment1}, we demonstrate our approaches for bounding test metrics, such as accuracy and F1 score (shown in green), when no true labels are available to estimate performance at various classification thresholds for binary classification tasks on Wrench datasets. In the first row (``Oracle''), true labels are used to estimate the conditional distribution $P_{Y\mid Z}$, representing a (close to) ideal scenario with a well-specified label model. In the second row (``Snorkel''), however, we use a label model to estimate $P_{Y\mid Z}$ without relying on any true labels. Despite potential inaccuracies in Snorkel's label model, it achieves results close to those obtained using true labels to estimate $P_{Y\mid Z}$, yielding approximate but useful bounds. This indicates that even if Snorkel's label model is imperfectly specified, its effectiveness in estimating bounds remains similar to that of the ``Oracle'' approach, underscoring the value of bounding metrics regardless of label model accuracy. Delving deeper into Figure \ref{fig:experiment1}, results for ``youtube'', ``commercial'', and ``tennis'' highlight that our uncertainty about out-of-sample performance is small, even without labeled samples. However, there is a noticeable increase in uncertainty for ``imdb'' and ``cdr'', making weakly supervised models deployment riskier without additional validation. Yet, the bounds retain their informative nature. For instance, for those willing to accept the risk, the ``imdb'' classifier's ideal threshold stands at $.5$. This is deduced from the flat worst-case and peaking best-case accuracy at this threshold. Table \ref{tab:multi_class} presents some results for ``agnews'' (4 classes) and ``semeval'' (9 classes). From Table \ref{tab:multi_class}, we can see that both ``Oracle'' and ``Snorkel'' approaches produce valid bounds.

\begin{wrapfigure}[15]{r}{0.5\textwidth}
\vspace{-.6cm}
     \centering
    \renewcommand{\arraystretch}{0}
    \includegraphics[width=.925\linewidth]{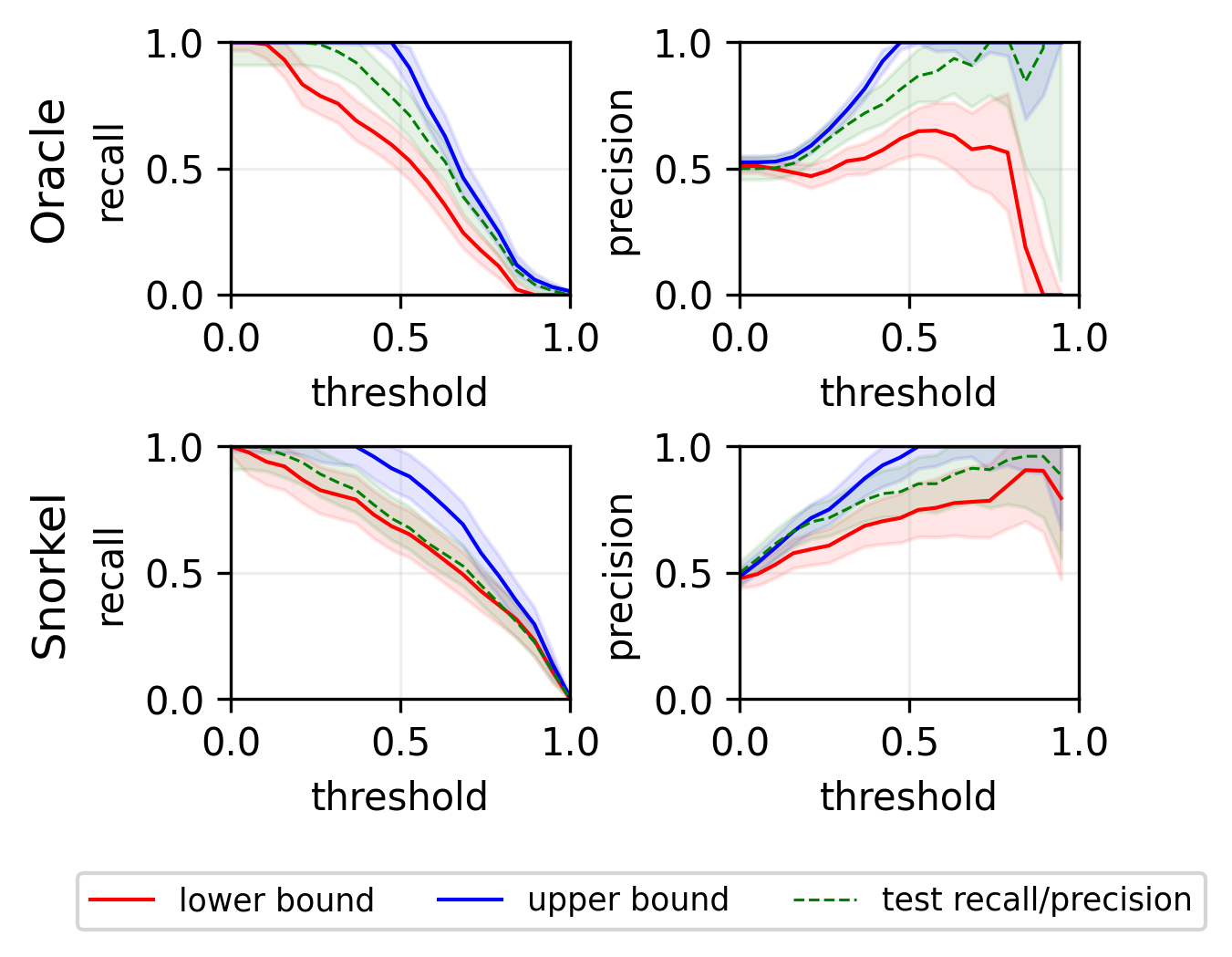}
    \vspace{-.2cm}
    \caption{\small Precision and recall bounds for hate speech detection.}
    \label{fig:experiment3}
\end{wrapfigure}

Now, we present bounds on the classifiers' precision and recall across different classification thresholds for the hate speech dataset. This dataset did not provide weak labels, so we needed to generate them. We employed four distinct weak labelers. The initial weak labeler functions are based on keywords and terms. Should words or phrases match those identified as hate speech in the lexicon created by \citet{davidson2017automated}, we categorize the sentence as $1$; if not, it's designated $0$. The second weak labeler is based on TextBlob's sentiment analyzer \citep{loria2018textblob}: a negative text polarity results in a $1$ classification, while other cases are labeled $0$. Our final pair of weak labelers are language models, specifically BERT \citep{devlin2018bert} and RoBERTa \cite{liu2019roberta}, that have undergone fine-tuning for detecting toxic language or hate speech \citep{logacheva-etal-2022-paradetox, kralj2022handling}. Figure \ref{fig:experiment3} presents both recall and precision bounds and test estimates for the weakly-supervised hate speech classifier. Mirroring observations from Figure \ref{fig:experiment1}, Snorkel's standard label model gives valuable bounds analogous to scenarios where we employ labels to estimate $P_{Y\mid Z}$. If used by practitioners, Figure \ref{fig:experiment3} could help trade-off recall and precision by choosing an appropriate classification threshold in the absence of high-quality labels.

\vspace{-.3cm}
\subsection{Choosing a set of weak labels}\label{sec:choosing}
\vspace{-.2cm}

In this experiment, we examine how our approach performs under the influence of highly informative weak labels as opposed to scenarios with less informative weak labels. Using the YouTube dataset provided by Wrench, we attempt to classify YouTube comments into categories of \texttt{SPAM} or \texttt{HAM}, leveraging Snorkel to estimate $P_{Y\mid Z}$. Inspired by \citet{smith2022language}, we craft three few-shot weak labelers by prompting\footnote{More details regarding the prompts can be found in Appendix \ref{sec:prompt}.} the large language model (LLM) \texttt{Llama-2-13b-chat-hf} \citep{touvron2023llama}. For each dataset entry, we pose three distinct queries to the LLM. Initially, we inquire if the comment is \texttt{SPAM} or \texttt{HAM}. Next, we provide clear definitions of \texttt{SPAM} and \texttt{HAM}, then seek the classification from LLM. In the third prompt, leveraging in-context learning ideas \citep{dong2022survey}, we provide five representative comments labeled as \texttt{SPAM}/\texttt{HAM} prior to requesting the LLM's verdict on the comment in question. In cases where LLM's response diverges from \texttt{SPAM} or \texttt{HAM}, we interpret it as LLM's abstention.

After obtaining this triad of weak labels, we analyze two situations. Initially, we integrate the top five\footnote{The most informative weak labels are determined based on their alignment with the true label.} weak labels (``high-quality'' labels) from Wrench. In the subsequent scenario, we synthetically generate weak labels (``low-quality'' labels) that do not correlate with $Y$. The first plot in Figure \ref{fig:experiment2} depicts the bounds of our classifier based solely on weak few-shot labels, which unfortunately do not provide substantial insights. Enhancing the bounds requires the inclusion of additional weak labels. Yet, as indicated by the subsequent pair of plots, it becomes evident that only the incorporation of ``high-quality'' weak labels results in significant shrinkage and upward shift of the bounds. As confirmed by the test accuracy, if a practitioner had used our method to select the set of weak labels, that would have led to a significant boost in performance. 


\begin{figure*}[t]
    \includegraphics[width=.9\textwidth]{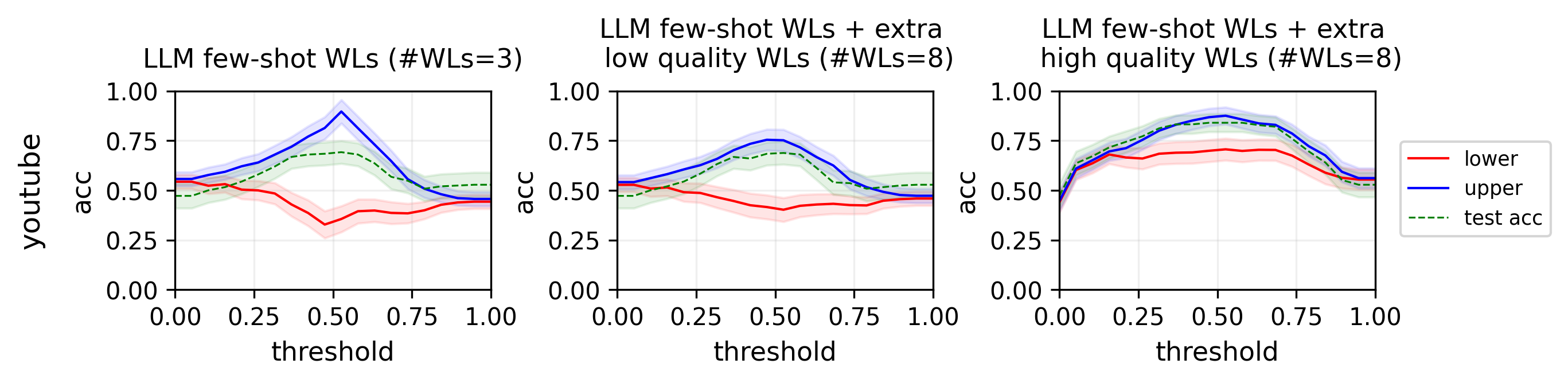} 
    \centering
    
    \caption{\small Performance bounds for classifiers on the YouTube dataset, initially relying solely on few-shot weak labels obtained via prompts to the LLM \texttt{Llama-2-13b-chat-hf}. The progression of plots illustrates the comparative impact of integrating ``high-quality'' labels from Wrench versus synthetically generated ``low-quality'' labels. Evidently, the addition of ``high-quality'' labels significantly enhances the bounds, underscoring their superior utility over ``low-quality'' labels for optimal classification of \texttt{SPAM} and \texttt{HAM} comments.}
    \label{fig:experiment2}
    \vspace{-.5cm}
\end{figure*}

\vspace{-.3cm}
\subsection{Model selection strategies using the Fr\'echet bounds}
\vspace{-.2cm}

In Sections \ref{sec:performance} and \ref{sec:choosing}, we implicitly touched on the topic of model selection when discussing the classification threshold and weak label selection. Here, we explicitly discuss the use of our Fr\'echet bounds for model selection purposes. Consider a set of possible models $\cH\triangleq\{h_1,\cdots, h_K\}$ from which we wish to find the best model according to a specific metric, \eg, accuracy, or F1 score. We consider three approaches for model selection using the Fr\'echet bounds: choosing the model with the best possible (i) lower bound, (ii) upper bound,  and (iii) average of lower and upper bounds on the metric of interest. Strategy (i) works well for the worst-case scenario and can be seen as the distributionally robust optimization (DRO) \citep{chen2020distributionally} solution when the uncertainty set is given by $\Pi$  in \eqref{eq:lo_up_objective}, while (ii) is suitable for an optimistic scenario, and (iii) is suggested when one wants to balance between the worst- and best-case scenarios. Please check Appendix \ref{sec:model_sel} for more details.

In this experiment, we select multilayer-perceptrons (MLPs). The considered MLPs have one hidden layer with a possible number of neurons in $\{50,100\}$. Training is carried out with Adam \citep{kingma2014adam}, with possible learning rates in $\{.1, .001\}$ and weight decay ($l_2$ regularization parameter) in $\{.1, .001\}$. For those datasets that use the F1 score as the evaluation metric, we also tune the classification threshold in $\{.2, .4, .5, .6, .8\}$ (otherwise, they return the most probable class as a prediction). In total, $\cH$ is composed of $8$ trained models when evaluating accuracy and $40$ models when evaluating the F1 score. We also consider directly using the label model (Snorkel \citep{ratner2018snorkel}) to select models. For example, when the metric considered is accuracy, \ie, we use select the model $\argmax_{h_k \in \cH}\frac1n \sum_{i=1}^n \Ex_{\hat{P}_{Y\mid Z}}\ones[h_k(X)=Y\mid Z=Z_i]$, which is a natural choice when $X\ind Y\mid Z$. As baselines, we consider having a few labeled samples.

\begin{wraptable}[9]{r}{0.7\textwidth}
\vspace{-0.4cm}

    \centering

\caption{Performance of selected models}
\vspace{-3mm}
{\tiny
\begin{tabular}{cc|ccc|c}
\toprule
\rowcolor{Gray} & metric & Lower bound &Bounds avg & Label model & Labeled ($n=100$) \bigstrut\\
\midrule
\rowcolor{cornsilk}agnews &acc & $0.77_{\pm 0.00}$ & $0.78_{\pm 0.00}$ & $0.77_{\pm 0.00}$  & $0.77_{\pm 0.00}$ \bigstrut\\
imdb &acc & $0.72_{\pm 0.00}$ & $0.73_{\pm 0.00}$ & $0.73_{\pm 0.00}$  & $0.72_{\pm 0.01}$ \bigstrut\\
\rowcolor{cornsilk} yelp &acc & $0.81_{\pm 0.00}$  & $0.81_{\pm 0.00}$ & $0.81_{\pm 0.00}$  &  $0.82_{\pm 0.01}$ \bigstrut\\
 tennis & F1& $0.76_{\pm 0.01}$ & $0.76_{\pm 0.01}$ & $0.75_{\pm 0.01}$  &  $0.71_{\pm 0.02}$ \bigstrut\\
\rowcolor{cornsilk}commercial &F1 & $0.96_{\pm 0.00}$  & $0.96_{\pm 0.00}$ & $0.96_{\pm 0.00}$  &  $0.91_{\pm 0.01}$ \bigstrut\\
\bottomrule
\end{tabular}}
\label{tab:selection_bestscore_main}
\vspace{-5mm}
\end{wraptable}



In Table \ref{tab:selection_bestscore_main}, we report a subset of our results (please check Appendix \ref{sec:model_sel} for the full set of results). In this table, we report the average test scores of the chosen models over $10$ repetitions for different random seeds (standard deviation report as subscript). We can extract some lessons from the table. First, using metrics derived from the Fr\'echet bounds is most useful when our uncertainty about the model performance is low, \eg, ```commercial'' and ``tennis'' in Figure \ref{fig:experiment1}. In those cases, using our metrics for model selection gives better results even when compared to a labeled validation set of size $n=100$. Moreover, once the practitioner knows that the uncertainty is low, using the label model approach also does well.

\vspace{-.4cm}
\section{Discussion}\label{sec:conclusion}
\vspace{-.3cm}

\subsection{Extensions}
\vspace{-.2cm}
An extension we do not address in the main text is the evaluation of end-to-end weak supervision methods \citep{yu2020fine,sam2023losses,ruhling2021end}, where the separation between the label model and the final predictor is less clear than in our primary setting. Our approach remains compatible with these methods as long as we can fit a label model (\eg, Snorkel) separately and utilize it solely for the evaluation step. Another possible extension is the application of the ideas presented in this work in different fields of machine learning or statistics. One could consider applying our ideas to the problem of ``statistical matching'' (SM) \citep{d2006statistical,conti2016statistical,d2019statistical,lewaa2021data, conti2019overview}, for example. The classic formulation of SM involves observing two distinct datasets that contain replications of $(X,Z)$ and $(Y,Z)$, but the triplet $(X,Y,Z)$ is never observed. The primary goal is to make inferences about the relationship between $X$ and $Y$. For instance, if our focus is on bounding $\Pr((X,Y)\in B)=\Ex[\ones_B(X,Y)]$ for a certain event $B$, we could define $g(x,y,z)=\ones_B(x,y)$ and apply our method.

\vspace{-.3cm}
\subsection{Limitations}
\vspace{-.2cm}
We discuss some limitations of our methods. Firstly, our method and theoretical results are only applicable to cases where $\cY$ and $\cZ$ are finite sets, such as in classification problems. Extending the dual formulation in Theorem \ref{thm:dual_problem} to general $\cY$ and $\cZ$ is possible but would require optimizing over function spaces, which is computationally and theoretically challenging. Additionally, if $|\cZ|$ is large, convergence may be slow, necessitating a large unlabeled dataset for accurate bounds. Using a smaller, curated set of weak labels, may be more effective for bounds estimation and performance. We end this subsection with two other limitations related to misspecification in the label model and informativeness of the bounds. The proofs of the results introduced in this section are placed in Appendix \ref{append:proof_extra}.

\textbf{Label model misspecification:} In our asymptotic results we assumed that the label models are well-specified, \ie, the estimates $\{\hat P_{Y\mid Z}^{(m)}, m \in \bbN\}$ converge to the true label model $P_{Y\mid Z}$ at $m \to \infty$. To understand the qualities of our bound when this assumption is violated, we introduce the misspecification: $\hat P_{Y\mid Z}^{(m)} \to Q_{Y\mid Z}$ and $Q_{Y\mid Z} \neq P_{Y\mid Z}$. In our investigation on the misspecification of the label model, we control the level of misspecification as $d_{\textsc{TV}} (Q_{Y\mid Z=z}, P_{Y\mid Z=z}) \le \delta$ and then study the subsequent errors in our Fr\'echet bounds. The following theorem formalizes the result.  
\begin{theorem} \label{th:missp}
    Recall from equation \eqref{eq:uleps} that $L_\eps$ and $U_\eps$ are the smoothened upper and lower Fr\'echet 
 bounds with the true $P_{Y\mid Z=z}$. Additionally, let us define similar  $\cL_\eps$ and $\cU_\eps$ 
 bounds, but with a misspecified $Q_{Y\mid Z=z}$, \ie\ 
 \begin{equation}\label{eq:uleps-missp}
     \begin{aligned}
          & \cL_\eps \triangleq \underset{a \in \cA}{\sup}~\Ex[\chf_{l,\varepsilon}(X,Z,a)] ~~\text{ and }~~ \cU_\eps \triangleq \underset{a \in \cA}{\inf}~\Ex[\chf_{u,\varepsilon}(X,Z,a)]\,, \\
          & \chf_{l,\varepsilon}(x,z,a) \triangleq  \textstyle -\varepsilon\log\left[\frac{1}{|\cY|}\sum_{y \in \cY}\exp\left(\frac{g(x,y,z)+a_{yz}}{-\varepsilon}\right)\right] -\Ex_{Q_{Y\mid Z}}\left[a_{Yz}\mid Z=z\right]\\
         & \chf_{u,\varepsilon}(x,z,a)\triangleq \textstyle~\varepsilon\log\left[\frac{1}{|\cY|}\sum_{y \in \cY}\exp\left(\frac{g(x,y,z)+a_{yz}}{\varepsilon}\right)\right] -\Ex_{Q_{Y\mid Z}}\left[a_{Yz}\mid Z=z\right]
     \end{aligned}
 \end{equation}
 Assume $Q_{Y\mid Z}$ is in a set of conditional distributions such that the optimizers for \eqref{eq:uleps-missp}, which are assumed to exist, are uniformly bounded. If $d_{\textsc{TV}} \big(Q_{Y\mid Z=z}, P_{Y\mid Z=z}\big) \le \delta$, then for some $C> 0$ which is independent of $\delta> 0$, we have
\begin{equation}
    \max\left(|\cL_\eps- L_\eps|, |\cU_\eps- U_\eps|\right) \le C \delta\,.
\end{equation}
\end{theorem}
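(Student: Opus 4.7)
My approach reduces the bound $|\cL_\varepsilon-L_\varepsilon|$ (and analogously $|\cU_\varepsilon-U_\varepsilon|$) to a uniform $\ell_\infty$ bound on the four optimizers $a^*_{l,\varepsilon},a^*_{u,\varepsilon}$ of the true problem and their misspecified analogues $\check{a}^*_{l,\varepsilon},\check{a}^*_{u,\varepsilon}$. The key observation is that the soft-min/soft-max parts of $f_{l,\varepsilon}$ and $\chf_{l,\varepsilon}$ are identical, so their pointwise difference collapses to a linear functional of $a$:
\begin{equation*}
f_{l,\varepsilon}(x,z,a)-\chf_{l,\varepsilon}(x,z,a)=\Ex_{Q_{Y\mid Z}}[a_{Yz}\mid Z=z]-\Ex_{P_{Y\mid Z}}[a_{Yz}\mid Z=z]=\sum_{y}\bigl(Q(y\mid z)-P(y\mid z)\bigr)a_{yz},
\end{equation*}
whose absolute value is at most $2\norm{a}_\infty\,d_\textup{TV}(Q_{Y\mid Z=z},P_{Y\mid Z=z})\le 2\norm{a}_\infty\,\delta$ by Assumption \ref{assmp:label-missp}. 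The identical identity holds for $f_{u,\varepsilon}-\chf_{u,\varepsilon}$.

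The main obstacle is establishing that every one of these optimizers satisfies $\norm{a^*}_\infty\le C_1\triangleq 2\norm{g}_\infty+\varepsilon\log(1/\kappa)$. My plan is as follows. Strict concavity of $a\mapsto\Ex[f_{l,\varepsilon}(X,Z,a)]$ (inherited from the strict concavity of log-sum-exp) together with coercivity on $\cA$ (using $P(y\mid z)\ge\kappa$ and $\sum_y a_{yz}=0$ one shows $\Ex[f_{l,\varepsilon}]\le\norm{g}_\infty+\varepsilon\log|\cY|+\kappa|\cY|\min_y a_{yz}$, which tends to $-\infty$ as $\min_y a_{yz}\to-\infty$) guarantees that $a^*_{l,\varepsilon}$ exists uniquely in $\cA$ and satisfies the first-order condition $\Ex[\sigma_y(X,Z,a^*_{l,\varepsilon})\mid Z=z]=P(Y=y\mid Z=z)$ for every $y,z$, where $\sigma_y$ is the softmin probability assigned to label $y$ (the Lagrange multipliers for $\sum_y a_{yz}=0$ vanish upon summing the identity over $y$). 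Fixing $z$ and writing $y^+=\argmax_y a^*_{yz}$, $y^-=\argmin_y a^*_{yz}$, and $M(a^*,z)\triangleq a^*_{y^+z}-a^*_{y^-z}$, a one-line computation that lower-bounds the softmin denominator by its $y^-$ summand yields the pointwise estimate $\sigma_{y^+}(x,z,a^*)\le\exp\bigl((2\norm{g}_\infty-M(a^*,z))/\varepsilon\bigr)$. Combining this with $P(y^+\mid z)\ge\kappa$ from Assumption \ref{assump:assump4} forces $M(a^*,z)\le C_1$, and since $\sum_y a^*_{yz}=0$ implies $\norm{a^*_{\cdot z}}_\infty\le M(a^*,z)$, the $\ell_\infty$ bound follows. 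The upper-bound problem is symmetric: one instead bounds the softmax probability $\tilde\sigma_{y^-}(x,z,a^*)\le\exp\bigl((2\norm{g}_\infty-M(a^*,z))/\varepsilon\bigr)$ and uses $P(y^-\mid z)\ge\kappa$ to obtain the same constant. The identical argument applies verbatim to $\check{a}^*_{l,\varepsilon}$ and $\check{a}^*_{u,\varepsilon}$ since $Q_{Y\mid Z}$ is assumed to satisfy Assumption \ref{assump:assump4} as well.

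Given $\norm{a^*}_\infty\le C_1$ for all four optimizers, the conclusion is an envelope argument. Because $\check{a}^*_{l,\varepsilon}$ maximizes $\Ex[\chf_{l,\varepsilon}(X,Z,\cdot)]$, replacing it by $a^*_{l,\varepsilon}$ only decreases that expectation, so
\begin{equation*}
L_\varepsilon-\cL_\varepsilon\le\Ex[f_{l,\varepsilon}(X,Z,a^*_{l,\varepsilon})]-\Ex[\chf_{l,\varepsilon}(X,Z,a^*_{l,\varepsilon})]\le 2C_1\delta,
\end{equation*}
and symmetrically $L_\varepsilon-\cL_\varepsilon\ge\Ex[f_{l,\varepsilon}(X,Z,\check{a}^*_{l,\varepsilon})]-\Ex[\chf_{l,\varepsilon}(X,Z,\check{a}^*_{l,\varepsilon})]\ge-2C_1\delta$, giving $|\cL_\varepsilon-L_\varepsilon|\le 2C_1\delta$. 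The same reasoning with infima replacing suprema bounds $|\cU_\varepsilon-U_\varepsilon|\le 2C_1\delta$, so the theorem holds with $C=2C_1=4\norm{g}_\infty+2\varepsilon\log(1/\kappa)$.
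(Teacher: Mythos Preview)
Your proof is correct and follows the same architecture as the paper's: the pointwise identity $f_{\cdot,\varepsilon}-\chf_{\cdot,\varepsilon}=\sum_y(Q(y\mid z)-P(y\mid z))a_{yz}$ bounded by $2\|a\|_\infty\delta$, the two-sided envelope (swap) argument on the optimizers, and boundedness of the optimizers obtained from the first-order condition together with Assumption~\ref{assump:assump4}. The paper defers the last step to its Lemma~\ref{lemma:tech2} (itself built on Lemma~\ref{lemma:tech1}, which uses Popoviciu's inequality and does not extract an explicit radius), whereas your direct softmin/softmax estimate is slightly cleaner and yields the explicit constant $C=4\|g\|_\infty+2\varepsilon\log(1/\kappa)$; apart from that, the two proofs are essentially identical.
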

The above theorem reveals how misspecification translates to the errors in subsequent Fr\'echet bounds. In an ideal scenario, with access to an $\{(Y_i, Z_i)\}_{i = 1}^m$ sample we can consistently estimate $P_{Y \mid Z}$, leading to $\delta = 0$. In situations when this $\{(Y_i, Z_i)\}_{i = 1}^m$ sample is not accessible and we have to rely on a label model, we require the misspecification in this model to be small.

\textbf{Informativeness of the bounds:} The bounds $L$ and $U$ are especially useful when their difference $U-L$ is small because in that case, we obtain a tight bound for $\Ex[g(X,Y,Z)]$ and narrow it down with high precision even if the joint random vector $(X,Y,Z)$ is never observed. But when is this bound small? In the next theorem, we provide an upper bound on this difference. 
\begin{theorem}\label{eq:informativeness}
    Let $L$ and $U$ be defined as in equation \eqref{eq:lo_up_objective}. Then
    \begin{talign*}
        U-L \leq \sqrt{8\norm{g}^2_\infty \min\{H(X\mid Z),H(Y\mid Z)\}}
    \end{talign*}
    where $H(X\mid Z)$ (resp. $H(Y\mid Z)$) denotes the conditional entropy of $X$ (resp. $Y$) given $Z$.
\end{theorem}
To understand the result better, recall our setting: we do not observe the joint distribution $P_{X, Y, Z}$ and only observe the marginals $P_{Y, Z}$ and $P_{X, Z}$. The only way we can infer about the joint distribution is by connecting these two marginals through $Z$. So, readers can guess that it is more favorable when $Z$ is informative for either $X$ or $Y$. For example, take the extreme case when $Y=h(Z)$ for a function $h: \cY \to \cZ$. In this case the $P_{X, Y, Z} = P_{X, g(Z), Z}$ is precisely known from the $P_{X, Z}$ and we can exactly pinpoint the $\Ex[g(X,Y,Z)]$ as $\Ex[g(X,h(Z),Z)]$. In this case, $H(Y \mid Z) = 0$, leading to $U - L = 0$, \ie, its Fr\'echet bounds can precisely pinpoint it as well; ideally at least one of the $H(X\mid Z)$ and $H(Y\mid Z)$ is small.


\vspace{-.3cm}
\section{Acknowledgements}
\vspace{-.2cm}

This paper is based upon work supported by the National Science Foundation (NSF) under grants no.\ 1916271, 2027737, 2113373, and 2113364.

\bibliographystyle{plainnat}
\bibliography{FMP}

\newpage
\appendix
\section{Connection to optimal transport}\label{sec:ot}

The optimizations in the Frechet bounds \eqref{eq:lo_up_objective} can be connected to an optimization problem \citep{peyre2019computational}. We only explain this connection for the lower bound, but the connection to the upper bound is quite similar. The optimization lower bound is 
\begin{talign*}
   &  \inf_{\substack{\pi_{X, Z} = P_{X, Z}\\\pi_{Y\mid Z} = P_{Y\mid Z}}} \Ex_\pi [g(X, Y, Z)]= \\
   & = \inf_{\substack{\pi_{X, Z} = P_{X, Z}\\\pi_{Y\mid Z} = P_{Y\mid Z}}} \sum_z \Pr(Z = z) \Ex_\pi [g(X, Y, Z) \mid Z = z] \\
   & = \sum_z \Pr(Z = z)\left \{ \inf_{\substack{\pi_{X\mid  Z = z} = P_{X\mid  Z = z}\\\pi_{Y\mid Z = z} = P_{Y\mid Z = z}}} \Ex_{\pi_{X , Y \mid Z = z}} [g(X, Y, z)] \right\}
\end{talign*}
where we notice that the inner minimization is an optimal transport problem between the probability distributions $P_{X \mid Z = z}$ and $P_{Y \mid Z = z}$ with the cost function $d_z(x, y) =  g(x, y, z)$.

\newpage
\section{A proof for the duality result}\label{append:proof_duality}

\begin{proof}[Proof of Theorem \ref{thm:dual_problem}]
    We start proving the result for $L$. See that
    \begin{align*}\textstyle
        L&= \inf_{\pi \in \Pi}  \Ex_{\pi}[g(X,Y,Z)]\\
        & = \inf_{\{\pi_z \in \Pi_z\}_{z\in\cZ}} \sum_{z \in \cZ} \Pr(Z=z)\cdot \Ex_{\pi_{z}}[g(X,Y,Z)\mid Z=z]\\
        & = \sum_{z \in \cZ} \Pr(Z=z)\cdot \inf_{\pi_z \in \Pi_z}\Ex_{\pi_{z}}[g(X,Y,Z)\mid Z=z]
    \end{align*}
    
    with $\Pi_z$, is defined as 
    \begin{talign*}
        \Pi_z \triangleq \{&\pi_z\in \Delta(\cX \times \cY):  \pi_z\circ \rho_{X}^{-1}=P_{X\mid Z=z} \text{ and } \pi_z\circ \rho_{Y}^{-1}=P_{Y\mid Z=z}\}
    \end{talign*}
    
    That is, for each $z \in \cZ$, $\Pi_z$ represents the set of couplings such that marginals are given by $P_{X\mid Z=z}$ and $P_{Y\mid Z=z}$. We can represent the problem in this way since the marginal distribution of $Z$ is fixed and, given that distribution, $\{\Pi_z\}$ specifies the same set of distributions as $\Pi$.
    
    Realize that we have broken down our initial maximization problem in $|\cZ|$ smaller minimization problems. Each of those minimization problems can be treated as an optimal transportation problem. Consequently, by \citet[Theorem 1]{beiglbock2011duality}, for each $z\in\cZ$, we get the following duality result
    \begin{talign*}
        \inf_{\pi_z \in \Pi_z}\Ex_{\pi_{z}}[g(X,Y,Z)\mid Z=z] = \sup_{(\beta_z, \alpha_z)\in \Psi_z} \Ex[\beta_z(X) + \alpha_z(Y) \mid Z=z]
    \end{talign*}
    with
    \begin{talign*}
        \Psi_z\triangleq
        \left\{(\beta_z, \alpha_z): \begin{array}{l}
            \beta_z: X \rightarrow[-\infty, \infty), \alpha_z: Y \rightarrow[-\infty, \infty) \\
            \Ex[|\beta_z(X)| \mid Z=z]<\infty, \Ex[|\alpha_z(Y)| \mid Z=z]<\infty\\
            \beta_z(x)+\alpha_z(y) \leq g(x, y, z) \text { for all }(x, y) \in X \times Y
        \end{array}\right\}
    \end{talign*}
    Moreover, partially optimizing on $\beta_z(x)$, we can set $\beta^*_z(x) =  \min_{y\in\cY}[g(x, y,z) - \alpha_z(y)]$ and then
    \begin{talign*}
        &\inf_{\pi_z \in \Pi_z}\Ex_{\pi_{z}}[g(X,Y,Z)\mid Z=z] =\\
        &=\sup_{\alpha_z} \Ex\left[\min_{y\in\cY}[g(X, y,Z) + \alpha_z(y)] - \alpha_z(Y) \mid Z=z\right]
    \end{talign*}
    where $\alpha_z$ is a simple function taking values in the real line.

    Consequently,
    \begin{align*}\textstyle
        L&= \sum_{z \in \cZ} \Pr(Z=z)\cdot \sup_{\alpha_z} \Ex\left[\min_{y\in\cY}[g(X, y,Z) + \alpha_z(y)] - \alpha_z(Y) \mid Z=z\right]\\
        &= \sup_{\{\alpha_z\}_{z\in\cZ}}\sum_{z \in \cZ} \Pr(Z=z)\cdot  \Ex\left[\min_{y\in\cY}[g(X, y,Z) + \alpha_z(y)] - \alpha_z(Y) \mid Z=z\right]\\
        &=\sup_{\{\alpha_z\}_{z\in\cZ}}\Ex\left[\min_{y\in\cY}[g(X, y,Z) + \alpha_Z(y)] - \alpha_Z(Y)\right]
    \end{align*}
    Because each $\alpha_z$ is a function assuming at most $|\cY|$ values and we have $|\cZ|$ functions (one for each value of $z$), we can equivalently solve an optimization problem on $\reals^{|\cY|\times|\cZ|}$. Adjusting the notation,
    \begin{align*}\textstyle
        L &= \underset{a \in \reals^{|\cY|\times|\cZ|}}{\sup}~\Ex\Big[\underset{\bar{y} \in \cY}{\min}\left[g(X,\bar{y},Z)+a_{\bar{y}Z}\right]\Big]-\Ex\left[a_{YZ}\right]
    \end{align*}

    From \citet[Theorem 2]{beiglbock2011duality}, we know that the maximum is attained by some $a^* \in \reals^{|\cY|\times|\cZ|}$. To show that there is a maximizer in $\cA$, we need to update the solution 
    \begin{align}\label{eq:shift}\textstyle
        a^*_{\cdot z} \leftarrow a^*_{\cdot z} - \sum_{y\in\cY}a^*_{yz} ~~~~\text{ (shift sub-vector by a constant)}
    \end{align}
    
    for every $z\in\cZ$. The objective function is not affected by such translations.

    To prove the result for $U$, first realize that because $g$ is bounded, with no loss of generality, we can assume its range is a subset of $[0,1]$. Define $c(x,y,z)=1-g(x,y,z)$ and see that
    \begin{align*}\textstyle
        U&= \sup_{\pi \in \Pi}  \Ex_{\pi}[1-c(X,Y,Z)]= 1 - \inf_{\pi \in \Pi}  \Ex_{\pi}[c(X,Y,Z)]
    \end{align*}

    Proceeding as before, we can obtain the final result by finding the dual formulation for $\inf_{\pi \in \Pi}  \Ex_{\pi}[c(X,Y,Z)]$.
\end{proof}

\newpage
\section{Proofs for the estimation results}\label{append:proof_estimation}

\textbf{We will analyze the estimator for $U$ (results for the estimator of $L$ can be obtained analogously).}

For the next results, we define
    \begin{talign*}
        \tilde{f}_{u,\varepsilon}(x,z,a) \triangleq f_{u,\varepsilon}(x,z,a) + \sum_{z'\in\cZ}\left(\sum_{y\in\cY}a_{yz'}\right)^2.
    \end{talign*}

\begin{proof}[Proof of Theorem \ref{thm:clt}]

    By Assumption \ref{assump:assump1} and Lemmas \ref{lemma:reduction2} and \ref{lemma:pos_def}, we can guarantee that $\Ex[\tilde{f}_{u,\varepsilon}(X,Z,a)]$ is minimized by a unique $a^*_{u,\varepsilon}$ (equalling it to $U_\varepsilon$) and that $\nabla^2_a\Ex[\tilde{f}_{u,\varepsilon}(X,Z,a^*_{u,\varepsilon})]$ is positive definite. Also, from the proof of Lemma \ref{lemma:pos_def}, we can see that $\tilde{f}_{u,\varepsilon}$ is convex in $a$ (because its Hessian is positive semidefinite). It is also true that the second moment of $\tilde{f}_{u,\varepsilon}(X,Z,a)$ is well defined (exists and finite) for each $a$ since $g$ is bounded. Define
    \begin{talign*}
     \tilde{U}_\varepsilon \triangleq \underset{a \in \reals^{|\cY|\times|\cZ|}}{\inf} ~\frac{1}{n}\sum_{i=1}^n \tilde{f}_{u,\varepsilon}(X_i,Z_i,a)
    \end{talign*}
    and let $\tilde{a}_\varepsilon$ denote a value that attains that minimum; from \citet[Theorem 4]{niemiro1992asymptotics} and the conditions discussed above, we know that $\sqrt{n}(\tilde{a}_\varepsilon -a^*_{u,\varepsilon})=\cO_P(1)$. The existence of $\tilde{a}_\varepsilon$ is discussed by \citet{niemiro1992asymptotics}.
    Then
\begin{equation}  \label{eq:taylor}
 \begin{aligned}
     &\sqrt{n}(\tilde{U}_\varepsilon-U_\varepsilon)\\ 
     &=\frac{1}{\sqrt{n}}\left(\sum_i\tilde{f}_{u,\varepsilon}(X_i,Z_i,\tilde{a}_\varepsilon)-\sum_i\tilde{f}_{u,\varepsilon}(X_i,Z_i,a^*_{u,\varepsilon})\right)+\sqrt{n}\left(\frac{1}{n}\sum_i\tilde{f}_{u,\varepsilon}(X_i,Z_i,a^*_{u,\varepsilon})-U_\varepsilon \right)\\
     &=\frac{1}{\sqrt{n}}\left([\sqrt{n}(\tilde{a}_\varepsilon-a^*_{u,\varepsilon})]^\top[\frac{1}{n}\sum_i \nabla^2_a\tilde{f}_{u,\varepsilon}(X_i,Z_i,\bar{a})][\sqrt{n}(\tilde{a}_\varepsilon-a^*_{u,\varepsilon})]\right)+\\
     & ~~~~~ +\sqrt{n}\left(\frac{1}{n}\sum_i\tilde{f}_{u,\varepsilon}(X_i,Z_i,a^*_{u,\varepsilon})-U_\varepsilon \right)+o_P(1)
\end{aligned}   
\end{equation}
    where the first term is obtained by a second-order Taylor expansion of the summing functions around $\tilde{a}_\varepsilon$ ($\bar{a}$ is some random vector). Also, from the standard central limit theorem, we know that
    \begin{talign*}
     &\sqrt{n}\left(\frac{1}{n}\sum_i\tilde{f}_{u,\varepsilon}(X_i,Z_i,a^*_{u,\varepsilon})-U_\varepsilon \right)\Rightarrow N(0, \Var \tilde{f}_{u,\varepsilon}(X,Z,a^*_{u,\varepsilon}))
    \end{talign*}
    
    Given that $\sqrt{n}(\tilde{a}_\varepsilon-a^*_{u,\varepsilon})=\cO_P(1)$ and that the Hessian has bounded entries (then $\cO_P(1)$ as well), the first term in \ref{eq:taylor} is $o_P(1)$. Because $a^*_{u,\varepsilon}\in \cA$, we have that $f_{u,\varepsilon}(X,Z,a^*_{u,\varepsilon})=\tilde{f}_{u,\varepsilon}(X,Z,a^*_{u,\varepsilon})$ and then
    \[\textstyle
    \sqrt{n}(\tilde{U}_\varepsilon-U_\varepsilon) \Rightarrow N(0, \Var f_{u,\varepsilon}(X,Z,a^*_{u,\varepsilon}))
    \]
    by Slutsky's theorem. Since
    \[\textstyle
    \sqrt{n}(\hat{U}_\varepsilon-U)=\sqrt{n}(\hat{U}_\varepsilon-\tilde{U}_\varepsilon)+\sqrt{n}(\tilde{U}_\varepsilon-U_\varepsilon),
    \]
    if we can show that
    \[\textstyle
    \sqrt{n}(\tilde{U}_\varepsilon-\hat{U}_\varepsilon)=o_P(1)
    \]
    we are done. 

    Let $\hat{a}$ be solution for the problem in \ref{eq:ulhat} and see that
    \begin{talign*}
        &|\tilde{U}_\varepsilon-\hat{U}_\varepsilon|=\\
        &=\left|\frac{1}{n}\sum_{i=1}^n \tilde{f}_{u,\varepsilon}(X_i,Z_i,\tilde{a}_\varepsilon)-\frac{1}{n}\sum_{i=1}^n \varepsilon\log\left[\frac{1}{|\cY|}\sum_{y \in \cY}\exp\left(\frac{g(X_i,y,Z_i)+\hat{a}_{yZ_i}}{\varepsilon}\right)\right] +\Ex_{\hat{P}_{Y\mid Z}}\left[\hat{a}_{YZ}\mid Z=Z_i\right]\right|\\
        &\leq\left|\frac{1}{n}\sum_{i=1}^n \tilde{f}_{u,\varepsilon}(X_i,Z_i,\tilde{a}_\varepsilon)-\frac{1}{n}\sum_{i=1}^n \tilde{f}_{u,\varepsilon}(X_i,Z_i,\hat{a})\right|+\\
        &~~~~ +\left|\frac{1}{n}\sum_{i=1}^n \tilde{f}_{u,\varepsilon}(X_i,Z_i,\hat{a})-\frac{1}{n}\sum_{i=1}^n \varepsilon\log\left[\frac{1}{|\cY|}\sum_{y \in \cY}\exp\left(\frac{g(X_i,y,Z_i)+\hat{a}_{yZ_i}}{\varepsilon}\right)\right] +\Ex_{\hat{P}_{Y\mid Z}}\left[\hat{a}_{YZ}\mid Z=Z_i\right]\right|\\
        &=(\frac{1}{n}\sum_{i=1}^n \tilde{f}_{u,\varepsilon}(X_i,Z_i,\hat{a})-\frac{1}{n}\sum_{i=1}^n \tilde{f}_{u,\varepsilon}(X_i,Z_i,\tilde{a}_\varepsilon))+\\
        &~~~~ +\left|\frac{1}{n}\sum_{i=1}^n \sum_y [\Pr(Y=y\mid Z=Z_i)-\hat{\Pr}(Y=y\mid Z=Z_i)]\hat{a}_{yZ_i}\right|\\
        &\leq \left(\frac{1}{n}\sum_{i=1}^n \tilde{f}_{u,\varepsilon}(X_i,Z_i,\hat{a})-\frac{1}{n}\sum_{i=1}^n \varepsilon\log\left[\frac{1}{|\cY|}\sum_{y \in \cY}\exp\left(\frac{g(X_i,y,Z_i)+\hat{a}_{yZ_i}}{\varepsilon}\right)\right] +\Ex_{\hat{P}_{Y\mid Z}}\left[\hat{a}_{YZ}\mid Z=Z_i\right]\right)\\
        &~~~~ +\left(\frac{1}{n}\sum_{i=1}^n \varepsilon\log\left[\frac{1}{|\cY|}\sum_{y \in \cY}\exp\left(\frac{g(X_i,y,Z_i)+\tilde{a}_{\varepsilon_{yZ_i}}}{\varepsilon}\right)\right] -\Ex_{\hat{P}_{Y\mid Z}}\left[\tilde{a}_{\varepsilon_{YZ}}\mid Z=Z_i\right]-\frac{1}{n}\sum_{i=1}^n \tilde{f}_{u,\varepsilon}(X_i,Z_i,\tilde{a}_\varepsilon)\right)\\
        &~~~~ +\left|\frac{1}{n}\sum_{i=1}^n \sum_y [\Pr(Y=y\mid Z=Z_i)-\hat{\Pr}(Y=y\mid Z=Z_i)]\hat{a}_{yZ_i}\right|\\
        &= \left(\frac{1}{n}\sum_{i=1}^n \sum_y [\hat{\Pr}(Y=y\mid Z=Z_i)-\Pr(Y=y\mid Z=Z_i)]\hat{a}_{yZ_i}\right)\\
        &~~~~ +\left(\frac{1}{n}\sum_{i=1}^n \sum_y [\Pr(Y=y\mid Z=Z_i)-\hat{\Pr}(Y=y\mid Z=Z_i)]\tilde{a}_{\varepsilon_{yZ_i}}\right)\\
        &~~~~ +\left|\frac{1}{n}\sum_{i=1}^n \sum_y [\Pr(Y=y\mid Z=Z_i)-\hat{\Pr}(Y=y\mid Z=Z_i)]\hat{a}_{yZ_i}\right|\\
        &\leq 2 \norm{\hat{a}}_\infty \frac{1}{n}\sum_{i=1}^n \sum_y |\Pr(Y=y\mid Z=Z_i)-\hat{\Pr}(Y=y\mid Z=Z_i)|+\\
        &~~~~+\norm{\tilde{a}_\varepsilon}_\infty \frac{1}{n}\sum_{i=1}^n \sum_y |\Pr(Y=y\mid Z=Z_i)-\hat{\Pr}(Y=y\mid Z=Z_i)|\\
        &\leq 2 \norm{\hat{a}}_\infty \sum_z \sum_y |\Pr(Y=y\mid Z=z)-\hat{\Pr}(Y=y\mid Z=z)|+\\
        &~~~~+\norm{\tilde{a}_\varepsilon}_\infty \sum_z \sum_y |\Pr(Y=y\mid Z=z)-\hat{\Pr}(Y=y\mid Z=z)|\\
        &\leq 4 \norm{\hat{a}}_\infty \sum_z d_\textup{TV}\left(\hat{P}_{Y\mid Z=z}, P_{Y\mid Z=z}\right)+2\norm{\tilde{a}_\varepsilon}_\infty \sum_z d_\textup{TV}\left(\hat{P}_{Y\mid Z=z}, P_{Y\mid Z=z}\right)\\
        &=\cO_P(m^{-\lambda})
    \end{talign*}
  
    where the last equality is obtained using Assumptions \ref{assump:assump2} and \ref{assump:assump3}, and the fact that $\norm{\tilde{a}_\varepsilon}_\infty$ is tight (derived from $\sqrt{n}(\tilde{a}_\varepsilon -a^*_{u,\varepsilon})=\cO_P(1)$). Consequently,
    \[\textstyle
    \sqrt{n}(\tilde{U}_\varepsilon-\hat{U}_\varepsilon)=\sqrt{n}\cO_P(m^{-\lambda})=o(m^{\lambda})\cO_P(m^{-\lambda})=o_P(1)
    \]

    Finally, using Slutsky's theorem,
    \[\textstyle
    \sqrt{n}(\hat{U}_\varepsilon-U)=\sqrt{n}(\hat{U}_\varepsilon-\tilde{U}_\varepsilon)+\sqrt{n}(\tilde{U}_\varepsilon-U)\Rightarrow N(0,\Var f_{u,\varepsilon}(X,Z,a^*_{u,\varepsilon}))
    \]
\end{proof}

\begin{proof}[Proof of Corollary \ref{cor:precision-recall}]
    We prove the asymptotic distribution of $\sqrt{n}\big(\hat{p}_{u,\varepsilon}-p_{u,\varepsilon}\big)$. The result for the lower bound can be obtained analogously.

First, note that
\begin{talign}\label{eq:cor_obs}
    \hat{\Pr}(h(X)=1)-\Pr(h(X)=1) = \cO_P(m^{-1/2})
\end{talign}

by the standard central limit theorem.

Next, see that
\begin{talign*}
    &\sqrt{n}\left(\hat{p}_{u,\varepsilon}-p_{u,\varepsilon}\right)=\\
    &=\sqrt{n}\left(\frac{\hat{U}_\varepsilon}{\hat{\Pr}(h(X)=1)}-\frac{U_\varepsilon}{\Pr(h(X)=1)}\right)\\
    &= \frac{1}{\hat{\Pr}(h(X)=1)}\sqrt{n}\left(\hat{U}_\varepsilon-\frac{\hat{\Pr}(h(X)=1)}{\Pr(h(X)=1)}U_\varepsilon\right)\\
    &= \frac{1}{\hat{\Pr}(h(X)=1)}\sqrt{n}\left(\hat{U}_\varepsilon-U_\varepsilon\right) + \frac{1}{\hat{\Pr}(h(X)=1)}\sqrt{n}\left(U_\varepsilon-\frac{\hat{\Pr}(h(X)=1)}{\Pr(h(X)=1)}U_\varepsilon\right)\\
    &= \frac{1}{\hat{\Pr}(h(X)=1)}\sqrt{n}\left(\hat{U}_\varepsilon-U_\varepsilon\right) + U_\varepsilon\sqrt{n}\left(\frac{1}{\hat{\Pr}(h(X)=1)}-\frac{1}{\Pr(h(X)=1)}\right)\\
    &= \frac{1}{\hat{\Pr}(h(X)=1)}\sqrt{n}\left(\hat{U}_\varepsilon-U_\varepsilon\right) + U_\varepsilon\sqrt{n}\left(\hat{\Pr}(h(X)=1)-\Pr(h(X)=1)\right)\left(\frac{-1}{\Pr(h(X)=1)^2}\right) + o_P\left(m^{-1/2}\right)\\
    &= \frac{1}{\hat{\Pr}(h(X)=1)}\sqrt{n}\left(\hat{U}_\varepsilon-U_\varepsilon\right)+  o_P\left(1\right)\\
    &\Rightarrow N(0,  \sigma_{p,u,\varepsilon}^2) 
\end{talign*}
where the (i) fifth and sixth lines equality is obtained using Taylor's theorem, (ii) sixth and seventh lines equality is obtained using observation \ref{eq:cor_obs} and the fact that $n=o(m^{(2\lambda)\wedge 1})$, and (iii) seventh to eighth lines equality is obtained using observation \ref{eq:cor_obs}, Theorem \ref{thm:clt}, and Slutsky's theorem.

We prove the asymptotic distribution of $\sqrt{n}\big(\hat{r}_{u,\varepsilon}-r_{u,\varepsilon}\big)$. The result for the lower bound can be obtained analogously. From Lemma \ref{lemma:est_p}, we know that there is an estimator $\hat{\Pr}(Y=1)$ such that $\hat{\Pr}(Y=1)-\Pr(Y=1) = \cO_P(m^{-(\lambda\wedge 1/2)})$, \ie, it has enough precision. We use that estimator.

Next, see that
\begin{talign*}
    &\sqrt{n}\left(\hat{r}_{u,\varepsilon}-r_{u,\varepsilon}\right)=\\
    &=\sqrt{n}\left(\frac{\hat{U}_\varepsilon}{\hat{\Pr}(Y=1)}-\frac{U_\varepsilon}{\Pr(Y=1)}\right)\\
    &= \frac{1}{\hat{\Pr}(Y=1)}\sqrt{n}\left(\hat{U}_\varepsilon-\frac{\hat{\Pr}(Y=1)}{\Pr(Y=1)}U_\varepsilon\right)\\
    &= \frac{1}{\hat{\Pr}(Y=1)}\sqrt{n}\left(\hat{U}_\varepsilon-U_\varepsilon\right) + \frac{1}{\hat{\Pr}(Y=1)}\sqrt{n}\left(U_\varepsilon-\frac{\hat{\Pr}(Y=1)}{\Pr(Y=1)}U_\varepsilon\right)\\
    &= \frac{1}{\hat{\Pr}(Y=1)}\sqrt{n}\left(\hat{U}_\varepsilon-U_\varepsilon\right) + U_\varepsilon\sqrt{n}\left(\frac{1}{\hat{\Pr}(Y=1)}-\frac{1}{\Pr(Y=1)}\right)\\
    &= \frac{1}{\hat{\Pr}(Y=1)}\sqrt{n}\left(\hat{U}_\varepsilon-U_\varepsilon\right) + U_\varepsilon\sqrt{n}\left(\hat{\Pr}(Y=1)-\Pr(Y=1)\right)\left(\frac{-1}{\Pr(Y=1)^2}\right) + o_P\left(m^{-1/2}\right)\\
    &= \frac{1}{\hat{\Pr}(Y=1)}\sqrt{n}\left(\hat{U}_\varepsilon-U_\varepsilon\right)+  o_P\left(1\right)\\
    &\Rightarrow N(0,  \sigma_{r,u,\varepsilon}^2) 
\end{talign*}

Finally, we prove the asymptotic distribution of $\sqrt{n}\big(\hat{F}_{u,\varepsilon}-F_{u,\varepsilon}\big)$. The result for the lower bound can be obtained analogously. From the facts stated above, we know that
\begin{talign*}
    &\hat{\Pr}(h(X)=1)+\hat{\Pr}(Y=1) -[\Pr(h(X)=1) +\Pr(Y=1)] =\\
    &=[\hat{\Pr}(h(X)=1)-\Pr(h(X)=1)] + [\hat{\Pr}(Y=1)-\Pr(Y=1)]=\\
    &= \cO_P(m^{-(\lambda\wedge 1/2)})
\end{talign*}

Then,
\begin{talign*}
    &\sqrt{n}\left(\hat{F}_{u,\varepsilon}-F_{u,\varepsilon}\right)=\\
    &=\sqrt{n}\left(\frac{2\hat{U}_\varepsilon}{[\hat{\Pr}(h(X)=1)+\hat{\Pr}(Y=1)]}-\frac{2U_\varepsilon}{[\Pr(h(X)=1)+\Pr(Y=1)]}\right)\\
    &= \frac{2}{[\hat{\Pr}(h(X)=1)+\hat{\Pr}(Y=1)]}\sqrt{n}\left(\hat{U}_\varepsilon-\frac{[\hat{\Pr}(h(X)=1)+\hat{\Pr}(Y=1)]}{[\Pr(h(X)=1)+\Pr(Y=1)]}U_\varepsilon\right)\\
    &= \frac{2}{[\hat{\Pr}(h(X)=1)+\hat{\Pr}(Y=1)]}\sqrt{n}\left(\hat{U}_\varepsilon-U_\varepsilon\right) + \frac{1}{[\hat{\Pr}(h(X)=1)+\hat{\Pr}(Y=1)]}\sqrt{n}\left(2U_\varepsilon-\frac{[\hat{\Pr}(h(X)=1)+\hat{\Pr}(Y=1)]}{[\Pr(h(X)=1)+\Pr(Y=1)]}2U_\varepsilon\right)\\
    &= \frac{2}{[\hat{\Pr}(h(X)=1)+\hat{\Pr}(Y=1)]}\sqrt{n}\left(\hat{U}_\varepsilon-U_\varepsilon\right) + 2U_\varepsilon\sqrt{n}\left(\frac{1}{[\hat{\Pr}(h(X)=1)+\hat{\Pr}(Y=1)]}-\frac{1}{[\Pr(h(X)=1)+\Pr(Y=1)]}\right)\\
    &= \frac{2}{[\hat{\Pr}(h(X)=1)+\hat{\Pr}(Y=1)]}\sqrt{n}\left(\hat{U}_\varepsilon-U_\varepsilon\right)+\\
    &~~~~~ + 2U_\varepsilon\sqrt{n}\left([\hat{\Pr}(h(X)=1)+\hat{\Pr}(Y=1)]-[\Pr(h(X)=1)+\Pr(Y=1)]\right)\left(\frac{-1}{[\Pr(h(X)=1)+\Pr(Y=1)]^2}\right) + o_P\left(m^{-1/2}\right)\\
    &= \frac{2}{[\hat{\Pr}(h(X)=1)+\hat{\Pr}(Y=1)]}\sqrt{n}\left(\hat{U}_\varepsilon-U_\varepsilon\right)+  o_P\left(1\right)\\
    &\Rightarrow N(0,  \sigma_{F,u,\varepsilon}^2) 
\end{talign*}

where all the steps are justified as before.

\end{proof}

\subsection{Auxiliary lemmas}

\begin{lemma}\label{lemma:reduction2}
    Define
    \begin{talign*}
        \tilde{f}_{u,\varepsilon}(x,z,a) \triangleq f_{u,\varepsilon}(x,z,a) + \sum_{z'\in\cZ}\left(\sum_{y\in\cY}a_{yz'}\right)^2
    \end{talign*}
    Then
    \begin{talign*}
        \underset{a \in \reals^{|\cY|\times|\cZ|}}{\inf} ~\Ex [\tilde{f}_{u,\varepsilon}(X,Z,a)] = \underset{a \in \cA}{\inf} ~\Ex [f_{u,\varepsilon}(X,Z,a)] 
    \end{talign*}
        
    \begin{proof}
        First, see that
        \[\textstyle
        \Ex [\tilde{f}_{u,\varepsilon}(X,Z,a)]\geq \Ex [f_{u,\varepsilon}(X,Z,a)]
        \]
        From Assumption \ref{assump:assump1}, we know that there exists some $a^*_{u,\varepsilon}\in \reals^{|\cY|\times|\cZ|}$ such that 
        \begin{talign*}
            \underset{a \in \cA}{\inf} ~\Ex [f_{u,\varepsilon}(X,Z,a)]  = \Ex [f_{u,\varepsilon}(X,Z,a^*_{u,\varepsilon})]
        \end{talign*}
        For that specific $a^*_{u,\varepsilon}$, we have that
        \begin{talign*}
            \Ex [\tilde{f}_{u,\varepsilon}(X,Z,a^*_{u,\varepsilon})]  = \Ex [f_{u,\varepsilon}(X,Z,a^*_{u,\varepsilon})]
        \end{talign*}
        Consequently,
        \begin{talign*}
            \underset{a \in \reals^{|\cY|\times|\cZ|}}{\inf} ~\Ex [\tilde{f}_{u,\varepsilon}(X,Z,a)] = \Ex [\tilde{f}_{u,\varepsilon}(X,Z,a^*_{u,\varepsilon})]  = \Ex [f_{u,\varepsilon}(X,Z,a^*_{u,\varepsilon})] = \underset{a \in \cA}{\inf} ~\Ex [f_{u,\varepsilon}(X,Z,a)] 
        \end{talign*}
    \end{proof}
\end{lemma}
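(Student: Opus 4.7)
The plan is to exploit a shift invariance of $f_{u,\varepsilon}$ together with the nonnegativity of the added penalty. First I would observe that the penalty $\sum_{z'\in\cZ}(\sum_{y\in\cY} a_{yz'})^2$ is nonnegative and vanishes precisely on $\cA$, so $\tilde{f}_{u,\varepsilon}(x,z,a) \geq f_{u,\varepsilon}(x,z,a)$ always, with equality whenever $a \in \cA$. This immediately gives $\inf_{a\in\cA} \Ex[\tilde{f}_{u,\varepsilon}(X,Z,a)] = \inf_{a\in\cA} \Ex[f_{u,\varepsilon}(X,Z,a)]$, and the trivial bound $\inf_{\reals^{|\cY|\times|\cZ|}} \leq \inf_{\cA}$ applied to $\tilde{f}_{u,\varepsilon}$ yields one of the two required inequalities.

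The key step is the reverse inequality, which I would obtain via the shift invariance noted in the discussion following Theorem \ref{thm:dual_problem}. Namely, for any $b_z \in \reals$, the map $a_{\cdot z} \mapsto a_{\cdot z} + b_z$ leaves $f_{u,\varepsilon}(x,z,a)$ unchanged: a direct computation shows the log-sum-exp term picks up $+b_z$ (factor $\exp(b_z/\varepsilon)$ out of the sum), while $-\Ex_{P_{Y\mid Z}}[a_{Yz}\mid Z=z]$ picks up $-b_z$ because $P_{Y\mid Z=z}$ is a probability measure, and the two cancel.

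Equipped with this invariance, for any $a \in \reals^{|\cY|\times|\cZ|}$ I would project to $\cA$ by choosing $b_z = -\tfrac{1}{|\cY|}\sum_{y'\in\cY} a_{y'z}$ and setting $a'_{yz} \triangleq a_{yz} + b_z$. Then $\sum_y a'_{yz}=0$ for every $z$, so $a' \in \cA$, while invariance gives $f_{u,\varepsilon}(x,z,a') = f_{u,\varepsilon}(x,z,a)$ and the penalty evaluated at $a'$ is zero. Chaining these, $\Ex[\tilde{f}_{u,\varepsilon}(X,Z,a')] = \Ex[f_{u,\varepsilon}(X,Z,a')] = \Ex[f_{u,\varepsilon}(X,Z,a)] \leq \Ex[\tilde{f}_{u,\varepsilon}(X,Z,a)]$. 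Taking the infimum over all $a \in \reals^{|\cY|\times|\cZ|}$ on both ends collapses the unconstrained infimum of $\tilde{f}_{u,\varepsilon}$ to the $\cA$-constrained infimum of $f_{u,\varepsilon}$, which is exactly the claim.

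I do not anticipate any serious obstacle in this proof; the argument is essentially bookkeeping of two ingredients (nonnegativity of the penalty and shift invariance of $f_{u,\varepsilon}$). One subtlety worth noting is that existence of an attaining minimizer in $\cA$ is \emph{not} logically needed for equality of infima, since the pointwise projection argument above avoids any reference to attainment; the analogous statement for $\hat{L}_\varepsilon$ follows by the same reasoning applied to $-f_{l,\varepsilon}$.
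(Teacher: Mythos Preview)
Your argument is correct and, in fact, slightly cleaner than the paper's. The paper's proof invokes Lemma \ref{lemma:tech2} to obtain an actual minimizer $a^*_{u,\varepsilon}\in\cA$ of $\Ex[f_{u,\varepsilon}]$, then uses the vanishing of the penalty at $a^*_{u,\varepsilon}$ to get $\inf_{\reals^{|\cY|\times|\cZ|}}\Ex[\tilde f_{u,\varepsilon}]\le \Ex[\tilde f_{u,\varepsilon}(X,Z,a^*_{u,\varepsilon})]=\inf_{\cA}\Ex[f_{u,\varepsilon}]$; the reverse inequality follows from $\tilde f_{u,\varepsilon}\ge f_{u,\varepsilon}$ together with the (implicit) shift invariance giving $\inf_{\reals^{|\cY|\times|\cZ|}}\Ex[f_{u,\varepsilon}]=\inf_{\cA}\Ex[f_{u,\varepsilon}]$. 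Your route instead makes the shift invariance fully explicit and uses it to \emph{project} any $a$ onto $\cA$ without changing $f_{u,\varepsilon}$, which simultaneously handles both inequalities and, as you correctly remark, avoids any appeal to attainment (hence to Lemma \ref{lemma:tech2} and its reliance on Assumption \ref{assump:assump4}). The trade-off is minor: the paper's version is shorter once Lemma \ref{lemma:tech2} is available, while yours is self-contained and makes transparent exactly which structural property of $f_{u,\varepsilon}$ is doing the work.
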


\begin{lemma}\label{lemma:pos_def}
    The function of $a$ given by $\Ex[\tilde{f}_{u,\varepsilon}(X,Z,a)]$ has positive definite Hessian, \ie,
    \begin{talign*}
        H_\varepsilon(a)=\nabla^2_a\Ex[\tilde{f}_{u,\varepsilon}(X,Z,a)]   \succ 0
    \end{talign*}
    and, consequently, it is strictly convex.
    \begin{proof}
    See that
    \begin{talign*}
        H_\varepsilon(a)  &= \nabla^2_a\Ex[f_{u,\varepsilon}(X,Z,a)] + \nabla^2_a\left[\sum_{z'\in\cZ}\left(\sum_{y\in\cY}a_{yz'}\right)^2\right]
    \end{talign*}
    
    We start computing the first term in the sum.
    
    First, for an arbitrary pair $(k,l)\in\cY\times\cZ$, define
    \begin{talign*}
         s_{kl}(x) \triangleq \frac{\exp\left(\frac{g(x,k,l)+a_{kl}}{\varepsilon}\right)}{\sum_y\exp\left(\frac{g(x,y,l)+a_{yl}}{\varepsilon}\right)}
    \end{talign*}
    
    Now, see that
    \begin{talign*}
         \frac{\partial }{\partial a_{kl}}f_{u,\varepsilon}(x,z,a) = \ones_{\{l\}}(z)\Bigg[s_{kl}(x)-\Pr(Y=k\mid Z=l)\Bigg] 
    \end{talign*}
    
    and
    \begin{talign*}
         &\frac{\partial^2 }{\partial a_{pl}\partial a_{kl}}f_{u,\varepsilon}(x,z,a) =\frac{1}{\varepsilon}\ones_{\{l\}}(z)\Bigg[  \ones_{\{k\}}(p) s_{kl}(x) -  s_{kl}(x)s_{pl}(x) \Bigg]
    \end{talign*}
    
    See that $\frac{\partial^2 }{\partial a_{pb}\partial a_{kl}}f_{u,\varepsilon}(x,z,a)=0$ if $b\neq l$. Consequently, the Hessian $\nabla_a^2 ~f_{u,\varepsilon}(x,z,a)$ is block diagonal.

    Consequently, because the second derivatives are bounded, we can push them inside the expectations and get
    \begin{talign*}
         &\frac{\partial^2 }{\partial a_{pl}\partial a_{kl}}\Ex\left[f_{u,\varepsilon}(X,Z,a)\right] =\\
         &=\Ex\left[\frac{\partial^2 }{\partial a_{pl}\partial a_{kl}}f_{u,\varepsilon}(X,Z,a)\right]\\
         &=\frac{1}{\varepsilon}\Ex\left[\ones_{\{l\}}(Z)\Big[  \ones_{\{k\}}(p) s_{kl}(X) -  s_{kl}(X)s_{pl}(X) \Big] \right]\\
         &=\frac{1}{\varepsilon}\ones_{\{k\}}(p)\cdot\Ex\left[\Pr(Z=l\mid X) s_{kl}(X) \right]-\frac{1}{\varepsilon}\Ex\left[\Pr(Z=l\mid X) s_{kl}(X)s_{pl}(X) \right]
    \end{talign*}
    
    Because $\nabla_a^2 ~f_{u,\varepsilon}(x,z,a)$ is block diagonal, we know that the Hessian 
    \[\textstyle
    \nabla_a^2 ~\Ex\left[f_{u,\varepsilon}(X,Z,a)\right]
    \]
    
    is block diagonal (one block for each segment $a_{\cdot z}$ of the vector $a$). Now, realize that
    \begin{talign*}
    \nabla^2_a\left[\sum_{z'\in\cZ}\left(\sum_{y\in\cY}a_{yz'}\right)^2\right]
    \end{talign*}
    
    is also block diagonal, with each block being matrices of ones. In this case, we also have one block for each segment $a_{\cdot z}$ of the vector $a$. Consequently, $H_\varepsilon(a)$ is block diagonal, and it is positive definite if and only if all of its blocks are positive definite. Let us analyse an arbitrary block of $H_\varepsilon(a)$, \eg, $\nabla_{a_{\cdot l}}^2 ~\Ex\left[\tilde{f}_{u,\varepsilon}(X,Z,a)\right]$. Let $\bs_{\cdot l}(x)$ be the vector composed of $s_{k l}(x)$ for all $k$. If $\ones \in \reals^{|\cY|}$ denotes a vector of ones, then,
    \begin{talign*}
         &\nabla_{a_{\cdot l}}^2 ~\Ex\left[\tilde{f}_{u,\varepsilon}(X,Z,a)\right]=\\
         &=\frac{1}{\varepsilon}\text{diag}\Big(\Ex\left[\Pr(Z=l\mid X) \bs_{\cdot l}(X) \right]\Big)-\frac{1}{\varepsilon}\Ex\left[\Pr(Z=l\mid X) \bs_{\cdot l}(X)\bs_{\cdot l}(X)^\top \right]+\ones\ones^\top\\
         &=\Ex\Bigg\{\frac{1}{\varepsilon}\Pr(Z=l\mid X)\Big[\text{diag}\big( \bs_{\cdot l}(X)\big)- \bs_{\cdot l}(X)\bs_{\cdot l}(X)^\top +\frac{\varepsilon}{\Pr(Z=l)}\ones\ones^\top\Big]\Bigg\}\\
         &=\Ex\Bigg\{\frac{1}{\varepsilon}\Pr(Z=l\mid X)\Big[\text{diag}\big( \bs_{\cdot l}(X)\big)- \bs_{\cdot l}(X)\bs_{\cdot l}(X)^\top +\tilde{\ones}\tilde{\ones}^\top\Big]\Bigg\}
    \end{talign*}
    where $\tilde{\ones}=\sqrt{\frac{\varepsilon}{\Pr(Z=l)}}\ones$.
    See that $\text{diag}\big( \bs_{\cdot l}(x)\big)$ has rank $|\cY|$ (full rank) while $\bs_{\cdot l}(x)\bs_{\cdot l}(x)^\top$ is rank one  for every $x\in\reals^{d_X}$. Consequently, the rank of the difference 
    \begin{talign*}
         D(x)=\text{diag}\big( \bs_{\cdot l}(x)\big)-\bs_{\cdot l}(x)\bs_{\cdot l}(x)^\top
    \end{talign*}
    
    is greater or equal $|\cY|-1$. It is the case that $\text{rank}(D(x))=|\cY|-1$ because $\tilde{\ones}$ is in the null space of $D$:
    \begin{talign*}
         &D(x)\tilde{\ones} = \Big[\text{diag}\big( \bs_{\cdot l}(x)\big)- \bs_{\cdot l}(x)\bs_{\cdot l}(x)^\top \Big]\tilde{\ones}=\sqrt{\frac{\varepsilon}{\Pr(Z=l)}}(\bs_{\cdot l}(x)- \bs_{\cdot l}(x))=0
    \end{talign*}
    
    Moreover, the range of $D(x)$ and $\tilde{\ones}\tilde{\ones}^\top$ are orthogonal. For any two vectors $\bv,\bu \in \reals^{|\cY|}$, we have that
    \begin{talign*}
         &(D(x)\bv)^\top (\tilde{\ones}\tilde{\ones}^\top\bu) = \bv^\top D(x)^\top\tilde{\ones}\tilde{\ones}^\top\bu  = \bv^\top D(x)\tilde{\ones}\tilde{\ones}^\top\bu = \bv^\top 0\tilde{\ones}^\top\bu =0
    \end{talign*}
    
    That implies $D(x)+\tilde{\ones}\tilde{\ones}^\top$ is full rank. To see that, let $\bv\in \reals^{|\cY|}$ be arbitrary and see
    \begin{talign*}
         &(D(x)+\tilde{\ones}\tilde{\ones}^\top)\bv = 0 \Rightarrow D(x)\bv=\tilde{\ones}\tilde{\ones}^\top\bv=0 
    \end{talign*}
    
    Because  $D(x)\bv=0$, it means that $\bv =\theta \tilde{\ones}$ for some constant $\theta$. If $\theta\neq0$, then $\tilde{\ones}\tilde{\ones}^\top\bv=\theta |\cY| \tilde{\ones}\neq 0$. Therefore, $\theta=0$ and $\bv=0$.
    
    Now, let $\bu\in \reals^{|\cY|}$ be arbitrary non-null vector and see
    \begin{talign*}
         \bu^\top D(x)\bu &= \bu^\top\text{diag}\big( \bs_{\cdot l}(x)\big)\bu- \bu^\top\bs_{\cdot l}(x)\bs_{\cdot l}(x)^\top \bu\\
         &=\sum_y u^2_y s_{y l}(x) -\left(\sum_y u_y s_{y l}(x)\right)^2 
         \\
         &=\left(\sum_y s_{y l}(x)\right)\left(\sum_y u^2_y s_{y l}(x)\right) -\left(\sum_y u_y s_{y l}(x)\right)^2\\
         &=\left(\sum_y \sqrt{s_{y l}(x)}\sqrt{s_{y l}(x)}\right)\left(\sum_y \sqrt{u^2_ys_{y l}(x)}\sqrt{u^2_ys_{y l}(x)}\right) -\left(\sum_y u_y s_{y l}(x)\right)^2\\
         &\geq \left(\sum_y u_y s_{y l}(x)\right)^2 -\left(\sum_y u_y s_{y l}(x)\right)^2=0
    \end{talign*}
    
    by the Cauchy–Schwarz inequality. Then, $D(x)$ is positive semidefinite, and because $\tilde{\ones}\tilde{\ones}^\top$ is also positive semidefinite, their sum needs to be positive definite for all $x$ (that matrix is full rank). Each block of $H_\varepsilon(a)$ is positive definite; consequently, $H_\varepsilon(a)$ is positive definite.
    \end{proof}
\end{lemma}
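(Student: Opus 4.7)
The plan is to compute $H_\varepsilon(a)$ explicitly, show it is block-diagonal across $z\in\cZ$ because different values of $z$ decouple in $\tilde f_{u,\varepsilon}$, and then establish positive definiteness of each block by summing a positive-semidefinite softmax-covariance matrix (whose null space is $\mathrm{span}(\ones)$) with the rank-one contribution $2\ones\ones^\top$ coming from the quadratic penalty. The penalty precisely fills the one-dimensional kernel of the softmax block, yielding strict positive definiteness.

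First I would differentiate $\tilde f_{u,\varepsilon}$ twice in $a$. The linear term $-\Ex_{P_{Y\mid Z}}[a_{Yz}\mid Z=z]$ contributes nothing. Letting $s_{kl}(x)\triangleq \exp((g(x,k,l)+a_{kl})/\varepsilon)/\sum_{y}\exp((g(x,y,l)+a_{yl})/\varepsilon)$, a direct softmax computation yields
\[
\tfrac{\partial^2 f_{u,\varepsilon}}{\partial a_{kl}\,\partial a_{pq}}(x,z,a)=\varepsilon^{-1}\ones_{\{l\}}(z)\ones_{\{l\}}(q)\big[\ones_{\{k\}}(p)s_{kl}(x)-s_{kl}(x)s_{pl}(x)\big],
\]
which vanishes whenever $l\neq q$; boundedness of $g$ and of softmax outputs lets me exchange derivative and expectation. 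Since $\nabla^2\sum_{z'}(\sum_y a_{yz'})^2$ is block-diagonal in $z'$ with each block equal to $2\ones\ones^\top$, the full $H_\varepsilon(a)$ is block-diagonal in $z$, and each $l$-block has the form
\[
M_l \;=\; \tfrac{1}{\varepsilon}\Ex\!\left[\Pr(Z=l\mid X)\big(\mathrm{diag}(\bs_{\cdot l}(X))-\bs_{\cdot l}(X)\bs_{\cdot l}(X)^\top\big)\right]+2\ones\ones^\top.
\]

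Next I would verify $M_l\succ 0$. Pointwise in $x$, the matrix $\mathrm{diag}(\bs_{\cdot l}(x))-\bs_{\cdot l}(x)\bs_{\cdot l}(x)^\top$ is the covariance of a categorical distribution; Cauchy--Schwarz gives $\bu^\top(\mathrm{diag}(\bs_{\cdot l}(x))-\bs_{\cdot l}(x)\bs_{\cdot l}(x)^\top)\bu = \sum_y u_y^2 s_{yl}(x)-(\sum_y u_y s_{yl}(x))^2\ge 0$, with equality iff $\bu$ is a constant multiple of $\ones$. Now if $\bu^\top M_l\bu=0$, both non-negative summands must vanish: the first (assuming $\Pr(Z=l)>0$, WLOG by discarding unattained values of $\cZ$) forces $\bu\in\mathrm{span}(\ones)$, and then $2(\ones^\top\bu)^2=0$ forces $\ones^\top\bu=0$, giving $\bu=0$. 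Hence $H_\varepsilon(a)\succ 0$ and $\Ex[\tilde f_{u,\varepsilon}(X,Z,a)]$ is strictly convex.

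The main obstacle is ruling out that taking the expectation enlarges the null space of the softmax block beyond $\mathrm{span}(\ones)$. Pointwise Cauchy--Schwarz equality pins down the null space at each $x$, but transferring this characterization to the expectation requires the weight $\Pr(Z=l\mid X)$ to be non-zero on a set of positive $P_X$-measure, which holds exactly when $\Pr(Z=l)>0$. Everything else is routine differentiation and linear algebra.
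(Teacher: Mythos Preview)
Your proposal is correct and follows essentially the same approach as the paper: compute the softmax second derivatives, observe the block-diagonal structure in $z$, identify each block as a positive-semidefinite softmax covariance (via Cauchy--Schwarz) with one-dimensional kernel $\mathrm{span}(\ones)$, and use the quadratic penalty's $\ones\ones^\top$ contribution to fill that kernel. The only cosmetic difference is that the paper pulls the penalty term inside the expectation (rescaling to $\tilde{\ones}\tilde{\ones}^\top$) and argues pointwise positive definiteness of the integrand, whereas you leave the penalty outside and argue directly on the sum; your version is slightly more streamlined and also gets the factor of $2$ in the penalty Hessian right.
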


\begin{lemma}\label{lemma:est_p}
Assume Assumption \ref{assump:assump3} holds. Let
\begin{talign*}
   \hat{\Pr}(Y=1) = \frac{1}{m}\sum_{i=1}^m \Ex_{\hat{P}_{Y\mid Z}}[Y\mid Z=Z_i]
\end{talign*}

Then 
\begin{talign*}
    \hat{\Pr}(Y=1)-\Pr(Y=1) = \cO_P(m^{-(\lambda\wedge 1/2)})
\end{talign*}

\begin{proof}
To derive this result, see that
\begin{talign*}
    &|\hat{\Pr}(Y=1)-\Pr(Y=1)|=\\
    &=\left|\frac{1}{m}\sum_{i=1}^m\Ex_{P_{Y\mid Z}}[Y\mid Z=Z_i] - \Pr(Y=1) + \frac{1}{m}\sum_{i=1}^m\Ex_{\hat{P}_{Y\mid Z}}[Y\mid Z=Z_i]-\Ex_{P_{Y\mid Z}}[Y\mid Z=Z_i] \right|\\
    &\leq \left|\frac{1}{m}\sum_{i=1}^m\Ex_{P_{Y\mid Z}}[Y\mid Z=Z_i] - \Pr(Y=1)\right| + \frac{1}{m}\sum_{i=1}^m\left|\Ex_{\hat{P}_{Y\mid Z}}[Y\mid Z=Z_i]-\Ex_{P_{Y\mid Z}}[Y\mid Z=Z_i] \right|\\
    &=\cO_P(m^{-1/2}) + \frac{1}{m}\sum_{i=1}^m\left|\Pr_{\hat{P}_{Y\mid Z}}[Y=1\mid Z=Z_i]-\Pr_{P_{Y\mid Z}}[Y=1\mid Z=Z_i] \right|\\
    &\leq\cO_P(m^{-1/2}) + \sum_{z\in \cZ}\left|\Pr_{\hat{P}_{Y\mid Z}}[Y=1\mid Z=z]-\Pr_{P_{Y\mid Z}}[Y=1\mid Z=z] \right|\\
    &\leq\cO_P(m^{-1/2}) + \sum_{y\in \{0,1\}}\sum_{z\in \cZ}\left|\Pr_{\hat{P}_{Y\mid Z}}[Y=y\mid Z=z]-\Pr_{P_{Y\mid Z}}[Y=y\mid Z=z] \right|\\
    &=\cO_P(m^{-1/2}) + 2\sum_{z\in \cZ} d_\textup{TV}\left(\hat{P}_{Y\mid Z=z}, P_{Y\mid Z=z}\right)\\
    &=\cO_P(m^{-1/2}) + \cO_P(m^{-\lambda})\\
    &= \cO_P(m^{-(\lambda\wedge 1/2)})
\end{talign*}
where the standard central limit theorem obtains the third step, the sixth step is obtained by the formula of the total variation distance for discrete measures, and the seventh step is obtained by Assumption \ref{assump:assump3}.
\end{proof}
\end{lemma}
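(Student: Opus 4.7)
The plan is to decompose the error $\hat{\Pr}(Y=1)-\Pr(Y=1)$ into two pieces by adding and subtracting the ``oracle'' quantity $\frac{1}{m}\sum_{i=1}^m \Ex_{P_{Y\mid Z}}[Y \mid Z=Z_i]$. The first piece, $\frac{1}{m}\sum_{i=1}^m \Ex_{P_{Y\mid Z}}[Y \mid Z=Z_i] - \Pr(Y=1)$, is a centered empirical average of the i.i.d.\ bounded random variables $\Ex_{P_{Y \mid Z}}[Y \mid Z = Z_i]$, so the standard central limit theorem (or Chebyshev) gives that it is $\cO_P(m^{-1/2})$. The second piece, $\frac{1}{m}\sum_{i=1}^m \left(\Ex_{\hat{P}_{Y\mid Z}}[Y \mid Z=Z_i] - \Ex_{P_{Y\mid Z}}[Y \mid Z=Z_i]\right)$, captures the model-estimation error and will be controlled using Assumption \ref{assump:assump3}.

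For the model-estimation piece, I would first observe that $\Ex_{\hat{P}_{Y\mid Z}}[Y \mid Z = z_i] - \Ex_{P_{Y \mid Z}}[Y \mid Z = z_i]$ equals $\hat{\Pr}(Y=1 \mid Z=z_i) - \Pr(Y=1\mid Z=z_i)$ since $Y\in\{0,1\}$, and each such term is bounded in absolute value by $d_\textup{TV}(\hat{P}_{Y\mid Z=z_i}, P_{Y\mid Z=z_i})$ (a standard consequence of the variational characterization of TV distance). Taking absolute values inside the average and using the fact that $\cZ$ is finite, I would upper-bound the sum $\frac{1}{m}\sum_{i=1}^m d_\textup{TV}(\hat{P}_{Y\mid Z=Z_i}, P_{Y\mid Z=Z_i})$ by $\sum_{z \in \cZ} d_\textup{TV}(\hat{P}_{Y\mid Z=z}, P_{Y\mid Z=z})$, which by Assumption \ref{assump:assump3} is $\cO_P(m^{-\lambda})$ (the finite cardinality of $\cZ$ absorbs into the constant).

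Combining both bounds via the triangle inequality yields $|\hat{\Pr}(Y=1) - \Pr(Y=1)| \le \cO_P(m^{-1/2}) + \cO_P(m^{-\lambda}) = \cO_P(m^{-(\lambda \wedge 1/2)})$. There is no real obstacle here: the argument is genuinely routine once the decomposition is fixed, and the only step requiring mild care is the pointwise TV bound (one must note that passing from the empirical average over $Z_i$ to a sum over $\cZ$ uses the nonnegativity of TV and the finiteness of $\cZ$, which is why no extra randomness survives). This is exactly the computation carried out in the displayed chain at the end of the lemma's proof, and my proposal matches it step-for-step.
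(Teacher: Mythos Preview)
Your proposal is correct and follows essentially the same route as the paper: the same add-and-subtract decomposition with the oracle average, the CLT for the first piece, and the total-variation bound combined with finiteness of $\cZ$ and Assumption~\ref{assump:assump3} for the second. The only cosmetic difference is that the paper inserts an extra line summing over $y\in\{0,1\}$ before invoking the TV formula, whereas you appeal directly to the variational characterization of TV; the resulting bounds are identical.
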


\begin{lemma} \label{lemma:tech1}
   For some $\kappa > 0$ and any $y \in \cY$ assume that $ \kappa \le p_y \le 1- \kappa$. Define $\cA = \{a_y \in \reals: \sum_y a_y = 0\}$.  Then the optima of the following problems
   \begin{equation}
       \begin{aligned}
          & \textstyle \inf_{a \in \cA} f_u(a), ~~ f_u(a) \triangleq  \Ex\Big[ \eps \log \Big\{ \frac 1{|\cY|} \sum_y \exp\big(\frac{g(X, y) + a_y}\eps\big) \Big\}\Big] - \sum_y p_y a_y\,,\\
          & \textstyle \sup_{a \in \cA}  f_l(a), ~~ f_l(a) \triangleq\Ex\Big[ -\eps \log \Big\{ \frac 1{|\cY|} \sum_y \exp\big(\frac{g(X, y) + a_y}{-\eps}\big) \Big\}\Big] - \sum_y p_y a_y
       \end{aligned}
   \end{equation}
   are attained in a compact set $K(\kappa, L) \subset \cA$ where $\|g\|_\infty \le L$. 
\end{lemma}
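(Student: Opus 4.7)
My strategy is to show that $f_u$ is continuous and coercive on the closed linear subspace $\cA$, so that every nonempty sublevel set is compact; the sublevel set at level $f_u(0)$ is then a compact set in which the infimum is attained, with size depending only on $\kappa$ and $L$. A symmetric argument handles $f_l$, replacing coercivity from below by coercivity from above (super-level set compactness).

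\textbf{Step 1 (Continuity).} Since $\|g\|_\infty \le L$ and the log-sum-exp integrand is smooth and bounded uniformly in $X$ for each fixed $a$, dominated convergence yields continuity of $f_u$ and $f_l$ on $\reals^{|\cY|}$, and in particular on $\cA$.

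\textbf{Step 2 (Softmax lower bound).} Using $\varepsilon\log\bigl(\tfrac{1}{K}\sum_k e^{b_k/\varepsilon}\bigr) \ge \max_k b_k - \varepsilon\log K$, and $\|g\|_\infty \le L$,
\begin{equation*}
f_u(a) \ge \Ex\!\bigl[\max_y(g(X,y)+a_y)\bigr] - \varepsilon\log|\cY| - \sum_y p_y a_y \ge \max_y a_y - L - \varepsilon\log|\cY| - \sum_y p_y a_y.
\end{equation*}

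\textbf{Step 3 (Exploit $\sum_y a_y = 0$ together with $p_y \ge \kappa$).} For $a \in \cA$, write $\max_y a_y - \sum_y p_y a_y = \sum_y p_y(\max_{y'} a_{y'} - a_y)$. Each summand is nonnegative, so $p_y \ge \kappa$ and $\sum_y a_y=0$ give
\begin{equation*}
\max_y a_y - \sum_y p_y a_y \;\ge\; \kappa\sum_y \bigl(\max_{y'} a_{y'} - a_y\bigr) \;=\; \kappa\,|\cY|\max_y a_y.
\end{equation*}

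\textbf{Step 4 (Relate $\max_y a_y$ to $\|a\|_\infty$).} For $a \in \cA$, the $|\cY|-1$ non-maximal entries sum to $-\max_y a_y$, so at least one is $\le -\max_y a_y/(|\cY|-1)$, giving $-\min_y a_y \le (|\cY|-1)\max_y a_y$. A symmetric argument gives $\max_y a_y \le (|\cY|-1)(-\min_y a_y)$. Consequently $\max_y a_y \ge \|a\|_\infty/(|\cY|-1)$. Combining Steps 2--4,
\begin{equation*}
f_u(a) \;\ge\; \frac{\kappa\,|\cY|}{|\cY|-1}\,\|a\|_\infty \;-\; L \;-\; \varepsilon\log|\cY|,
\end{equation*}
so $f_u$ is coercive on $\cA$. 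By continuity, the infimum is attained inside the compact set $K_u \triangleq \{a \in \cA : \|a\|_\infty \le \tfrac{|\cY|-1}{\kappa |\cY|}(f_u(0)+L+\varepsilon\log|\cY|)\}$, and $f_u(0) \le L$ so $K_u$ depends only on $\kappa,L,\varepsilon,|\cY|$.

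\textbf{Symmetric argument for $f_l$.} The softmin bound $-\varepsilon\log\bigl(\tfrac{1}{|\cY|}\sum_y e^{-(g+a_y)/\varepsilon}\bigr) \le \min_y(g(X,y)+a_y)+\varepsilon\log|\cY|$ gives $f_l(a) \le L + \min_y a_y + \varepsilon\log|\cY| - \sum_y p_y a_y$. Mirroring Step 3, $\min_y a_y - \sum_y p_y a_y = -\sum_y p_y(a_y - \min_{y'} a_{y'}) \le -\kappa|\cY|(-\min_y a_y + \min_y a_y \cdot 0) \cdot$ \dots\ more concretely $\min_y a_y - \sum_y p_y a_y \le \kappa|\cY|\min_y a_y$ (each summand $a_y - \min_{y'}a_{y'} \ge 0$, then use $\sum_y a_y = 0$). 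Since Step 4 also gives $-\min_y a_y \ge \|a\|_\infty/(|\cY|-1)$, we conclude $f_l(a) \le -\tfrac{\kappa|\cY|}{|\cY|-1}\|a\|_\infty + L + \varepsilon\log|\cY|$, whence the supremum is attained in an analogous compact set $K_l \subset \cA$. Set $K(\kappa,L) \triangleq K_u \cup K_l$.

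\textbf{Main obstacle.} The essential point is Steps 3--4: without either the constraint $\sum_y a_y = 0$ or the bound $p_y \ge \kappa$, the linear term $-\sum_y p_y a_y$ could grow as fast as the softmax and destroy coercivity. The constraint $\sum_y a_y = 0$ forces $\sum_y p_y a_y$ to sit strictly inside the range $[\min_y a_y,\max_y a_y]$ (by a $\kappa|\cY|$ margin), which is precisely the slack that makes the softmax term dominant as $\|a\|_\infty \to \infty$.
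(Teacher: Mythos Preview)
Your approach is correct and takes a genuinely different route from the paper. The paper first establishes strict convexity of $f_u$ on $\cA$ via a Hessian computation, then argues through the first-order condition $\Ex[p(X,y,a)]=p_y$: it shows that when $\|a\|_2$ is large, the expected softmax at the coordinate $y_{\min}=\argmin_y a_y$ drops below $\kappa$, so the stationarity equation cannot hold outside a ball. The key device there is Popoviciu's inequality, used to convert $\|a\|_2>M$ into a lower bound on the spread $a_{y_{\max}}-a_{y_{\min}}$. You instead bypass convexity and stationarity entirely and prove coercivity directly: bound the log-sum-exp by $\max_y a_y$ and use $\sum_y a_y=0$ together with $p_y\ge\kappa$ to show the linear term cannot cancel it. This is more elementary (no Hessian, no Popoviciu) and yields an explicit radius for the compact set; the paper's route, on the other hand, delivers strict convexity and hence uniqueness of the optimizer as a byproduct, which the paper needs later but is not part of this lemma's statement.

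One small exposition issue in Step~4: from ``the $|\cY|-1$ non-maximal entries sum to $-\max_y a_y$, so at least one is $\le -\max_y a_y/(|\cY|-1)$'' you obtain $-\min_y a_y \ge \max_y a_y/(|\cY|-1)$, not the inequality $-\min_y a_y \le (|\cY|-1)\max_y a_y$ that you actually need. The latter follows instead from observing that the $|\cY|-1$ non-\emph{minimal} entries sum to $-\min_y a_y$ and are each $\le \max_y a_y$. With that swap the argument goes through, and your final bound $\max_y a_y \ge \|a\|_\infty/(|\cY|-1)$ is correct. The $f_l$ paragraph is also a bit garbled around the ellipsis but the intended mirror argument is clear and valid.
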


\begin{proof}[Proof of lemma \ref{lemma:tech1}]
    We shall only prove this for the minimization problem. The conclusion for the maximization problem follows in a similar way. 
    \paragraph{Strict convexity:} 
    The second derivation of $f_u$ is 
    \begin{equation}
    \textstyle    \nabla_{a_y, a_{y'}} f_u(a) = \frac1 \eps\Ex \big[p(X, y, a) \{ \delta_{y, y'} - p(X, y', a) \}\big]
    \end{equation} where $p(x, y, a) \triangleq \frac{\exp\big(\frac{g(X, y) + a_y}\eps\big)}{\sum_{i \in \cY}\exp\big(\frac{g(X, i) + a_i}\eps\big)}$. For any $u \in \reals^{\cY}$ we have 
    \begin{equation}
        \begin{aligned}
         u ^\top \nabla^2 f_u(a) u & =    \textstyle \sum_{i, j\in \cY} u_i u_j  \nabla_{a_i, a_{j}} f_u(a)\\
         & = \textstyle\frac1 \eps \sum_{i, j\in \cY} u_i u_j \Ex \big[p(X, i, a) \{ \delta_{i, j} - p(X, j, a) \}\big]\\
         & = \textstyle\frac1 \eps \Ex\big[\sum_{i} u_i^2 p(X, i, a) - \big\{\sum_i u_i p(X, i, a)\big\}^2\big] = \frac1 \eps\Ex[\sigma^2 (X, u, a)] \ge 0
        \end{aligned}
    \end{equation} where $\sigma^2 (x, u, a)$ is the variance of a categorical random variable taking the value $u_i$ with probability $p(x, i, a)$. This leads to the conclusion that the function is convex.

    To establish strict convexity, we fix $a \in \cA$ and notice that $0 < p(x, y, a) < 1$ (because $\|g\|_\infty \le L$). Therefore, the $\sigma^2(x, u, a)$ can be zero only when the $u_i$'s are all equal. Since $\cA = \{a\in \reals^\cY: \sum_y a_y = 0\}$, such $u$ belongs to the space $\cA$ in the unique case $u_i = 0 $. This leads to the conclusion that for any $u \in \cA$ and $u \neq 0$
    \[ \textstyle
    u ^\top \nabla^2 f_u(a) u = \frac1 \eps\Ex[\sigma^2 (X, u, a)] > 0\,.
    \] Therefore, $f_u$ is strictly convex and has a unique minimizer in $\cA$. In the remaining part of the proof, we focus on the first-order condition. 

\paragraph{First order condition:} The first-order condition is 
\begin{equation} \label{eq:first-order-condition-tech1}
    h(a, y) = p_y \text{ for all }y\in \cY,\text{ where } h(a, y) \triangleq \Ex[p(X, y, a)]
\end{equation} To show that \eqref{eq:first-order-condition-tech1} has a solution in a compact ball  $K(\eps, \kappa, L) \triangleq \{ a \in \cA: \|a\|_2 \le M(\eps , \kappa, L) \}$ we construct an $M(\eps, \kappa, L) > 0$ such that $\min_y h(a, y) < \kappa$ whenever $\|a\|_2 > M(\eps, \kappa, L)$. Since $\kappa \le p_y \le 1- \kappa$ for all $y\in \cY$ this concludes that the solution to  \eqref{eq:first-order-condition-tech1} must be inside the $K(\eps, \kappa, L)$.


\paragraph{Construction of the compact set:}  
Let us define $y_{\min}\triangleq \argmin_{y} a_y$ and $y_{\max} \triangleq \argmax_y a_y$. 
To construct $M(\kappa, L, \eps)$ (we shall write it simply as $M$ whenever it is convenient to do so) we notice that
\begin{equation} \label{eq:eq1-tech1}
    \begin{aligned}
    h(a, y_{\min})
    & = \textstyle\Ex[p(X, y_{\min}, a)]\\
    & = \textstyle\Ex\Big[ \frac{\exp\big(\frac{g(X, 1) + a_{y_{\min}}}\eps\big)}{\sum_{i \in \cY}\exp\big(\frac{g(X, i) + a_i}\eps\big)}\Big]\\
    & \le \textstyle \frac{\exp\big(\frac{L + a_{y_{\min}}}\eps\big)}{ \exp\big(\frac{L + a_{y_{\min}}}\eps\big) + \sum_{i \ge 2}\exp\big(\frac{-L + a_i}\eps\big)}\\
    & =\textstyle  \frac{\exp\big(\frac{2L}\eps\big)}{ \exp\big(\frac{2L}\eps\big) + \sum_{i \ge 2}\exp\big(\frac{a_i - a_{y_{\min}}}\eps\big)}\\
    & \le \textstyle\frac{\exp\big(\frac{2L}\eps\big)}{ \exp\big(\frac{2L}\eps\big) + \exp\big(\frac{a_{y_{\max}} - a_{y_{\min}}}\eps\big)} \le \textstyle\frac{\exp\big(\frac{2L}\eps\big)}{ \exp\big(\frac{2L}\eps\big) + \exp\big(\frac{R}\eps\big)}\,,
\end{aligned}
\end{equation}
where 
\[
\textstyle R \triangleq \min\{ \max_y a_{y} - \min_y a_y: \sum_i a_i = 0, \sum_{i} a_i^2 >  M^2 \}\,.
\] We rewrite the constraints of the optimization as:
\[
\textstyle R \triangleq \min\big\{ \max_y a_{y} - \min_y a_y: \text{mean} \{a_i\} = 0, \text{var} \{a_i\} > \frac{M^2}{|\cY|}\big\}\,.
\] where we use the Popoviciu's inequality on variance to obtain 
\[
\textstyle \frac{M^2}{|\cY|} < \text{var} \{a_i\} \le \frac{(\max_y a_{y} - \min_y a_y)^2 }{4}, ~~ \text{or} ~~ \max_y a_{y} - \min_y a_y > \frac{2M}{\sqrt{|\cY|}}\,,
\] Thus $R \ge \frac{2M}{\sqrt{|\cY|}}$ whenever $\|a\|_2 > M$.  We use this inequality in \eqref{eq:eq1-tech1} and obtain 
\[
\begin{aligned}
    \textstyle h(a, y_{\min}) \le \textstyle\frac{\exp\big(\frac{2L}\eps\big)}{ \exp\big(\frac{2L}\eps\big) +\exp\big(\frac{R}\eps\big)} \le \textstyle\frac{\exp\big(\frac{2L}\eps\big)}{ \exp\big(\frac{2L}\eps\big) + \exp\big(\frac{2M}{\eps\sqrt{|\cY|}}\big)}\,.
\end{aligned}
\]
Finally, we choose $M = M(\eps, \kappa, L) > 0$ large enough such that 
\[
\textstyle \textstyle h(a, y_{\min})  \le \textstyle\frac{\exp\big(\frac{2L}\eps\big)}{ \exp\big(\frac{2L}\eps\big) + \exp\big(\frac{2M}{\eps\sqrt{|\cY|}}\big)} < \kappa\,.
\] For such an $M$ we concludes that $h(a, y_{\min}) < \kappa$ whenever $\|a\|_2 > M$.

\end{proof}

\begin{lemma} \label{lemma:tech2}
     $L_\varepsilon$ and $U_\varepsilon$ are attained by some optimizers in a compact set $K(\eps, \kappa_z, z \in \cZ; L) \supset \cA$ (\eqref{eq:uleps}), where  $\kappa_z = \min\{p_{y \mid z}, 1 - p_{y\mid z}: y \in \cY \}$.
\end{lemma}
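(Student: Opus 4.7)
The plan is to exploit the block-column structure of the dual objective in the matrix variable $a$ and reduce the problem to $|\cZ|$ independent subproblems, each handled directly by Lemma \ref{lemma:tech1}. The key observation is that for every fixed $z$, the integrand $f_{u,\varepsilon}(x,z,a)$ (and likewise $f_{l,\varepsilon}(x,z,a)$) depends on $a$ only through the column $a_{\cdot z}$; moreover, the constraint defining $\cA$, namely $\sum_{y\in\cY} a_{yz} = 0$ for each $z$, is separable across these columns.

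First, I would condition on $Z$ and write
\[
    \Ex[f_{u,\varepsilon}(X,Z,a)] \;=\; \sum_{z \in \cZ} \Pr(Z=z)\,\Ex[f_{u,\varepsilon}(X,z,a) \mid Z=z],
\]
where the $z$-th summand depends only on $a_{\cdot z}$. Consequently,
\[
    U_\varepsilon \;=\; \sum_{z \in \cZ} \Pr(Z=z)\,\inf_{a_{\cdot z} \in \cA_z}\Ex[f_{u,\varepsilon}(X,z,a) \mid Z=z],
\]
where $\cA_z \triangleq \{a_{\cdot z} \in \reals^{|\cY|}: \sum_y a_{yz} = 0\}$. For each $z$ with $\Pr(Z=z) > 0$, I would then invoke Lemma \ref{lemma:tech1} with the random variable $X$ replaced by its conditional law given $Z=z$, with the bounded function $y \mapsto g(\cdot, y, z)$ (whose sup-norm is at most $L = \|g\|_\infty$), and with marginal $p_y \gets \Pr(Y = y \mid Z=z)$. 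By Assumption \ref{assump:assump4} we have $\kappa_z < \Pr(Y=y \mid Z=z) < 1 - \kappa_z$ for every $y$, so the hypotheses of Lemma \ref{lemma:tech1} are met. The lemma then supplies a compact set $K_z(\varepsilon, \kappa_z, L) \subset \cA_z$ containing the unique minimizer $a^*_{\cdot z}$ of the $z$-th subproblem.

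To conclude, I would set $K(\varepsilon, \kappa_z, z \in \cZ; L) \triangleq \prod_{z \in \cZ} K_z(\varepsilon, \kappa_z, L)$, which is compact as a finite Cartesian product of compact sets and sits inside $\cA$. The concatenation of the per-slice optimizers is then a joint minimizer of the full problem and lies in $K$. The argument for $L_\varepsilon$ proceeds identically, invoking the maximization statement of Lemma \ref{lemma:tech1} instead.

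The proof is essentially a bookkeeping exercise once Lemma \ref{lemma:tech1} is in hand; the only mild subtlety is handling values $z \in \cZ$ with $\Pr(Z=z) = 0$, for which the conditional expectation is undefined. Such $z$ contribute nothing to the sum and can be dropped from $\cZ$ without loss of generality, or equivalently one may set the corresponding column $a_{\cdot z}$ to $0$ without affecting the objective. There is no genuine analytical obstacle beyond verifying that the per-slice hypotheses of Lemma \ref{lemma:tech1} transfer cleanly under conditioning on $Z$, which is immediate because $\cZ$ is finite.
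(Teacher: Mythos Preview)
Your proposal is correct and follows essentially the same route as the paper: decompose the objective over $z\in\cZ$ by conditioning on $Z$, observe that each summand depends only on the column $a_{\cdot z}$ with the separable constraint $\sum_y a_{yz}=0$, apply Lemma~\ref{lemma:tech1} to each subproblem, and take the Cartesian product of the resulting compact sets. Your handling of the $\Pr(Z=z)=0$ edge case is an extra bit of care that the paper omits.
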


\begin{proof}[Proof of lemma \ref{lemma:tech2}]
    We shall only prove the case of the minimization problem. The proof for the maximization problem uses a similar argument.

A decomposition of the minimization problem according to the values of $Z$ follows. 

\begin{equation}
    \begin{aligned}
    & \textstyle   \min_{a \in \cA} \Ex \Big[ \eps\log\left\{\frac{1}{|\cY|}\sum_{y \in \cY}\exp\big(\frac{g(X,y,Z)+a_{Y, Z}}{\eps}\big)\right\} -\Ex\left[a_{Y, Z}\mid Z\right] \Big]\\
    & = \textstyle \sum_z p_z \left \{ \underset{\substack{a_{\cdot, z} \in \reals^{\cY}\\ \sum_y{a_{y, z}} = 0}}{\min} \Ex \Big[ \eps\log\Big\{\frac{1}{|\cY|}\sum_{y \in \cY}\exp\big(\frac{g(X,y,z)+a_{Y, z}}{\eps}\big)\Big\}\mid Z = z \Big] -\Ex[a_{Y, z}\mid Z = z]  \right\}\\
    & = \textstyle \sum_z p_z \left \{ \underset{\substack{a_{\cdot, z} \in \reals^{\cY}\\ \sum_y{a_{y, z}} = 0}}{\min} \Ex \Big[ \eps\log\Big\{\frac{1}{|\cY|}\sum_{y \in \cY}\exp\big(\frac{g(X,y,z)+a_{Y, z}}{\eps}\big)\Big\}\mid Z = z \Big] -\sum_y a_{y, z}p_{y \mid z}  \right\}
    \end{aligned}
\end{equation} where $p_z = P(Z = z)$  and $p_{y \mid z} = P(Y = y\mid Z = z)$. We fix a $z$ and consider the corresponding optimization problem in the decomposition. Then according to lemma \ref{lemma:tech1} the optimal point is in a compact set $K(\eps, \kappa_z, L)$. Thus the optimal point of the full problem is in the Cartesian product $\prod_z K(\eps, \kappa_z, L) \subset \cA$, which a compact set. Thus we let $K(\eps, \kappa_z, z \in \cZ; L) = \prod_z K(\eps, \kappa_z, L)$. 
    
\end{proof}

\begin{lemma}
    \label{lemma:tech3}
    The optimizers for the problems in \eqref{eq:ulhat} are tight with respect to $m, n \to \infty$. 
\end{lemma}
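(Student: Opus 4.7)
The plan is to adapt the compact-bounding argument of Lemma \ref{lemma:tech1} (via Lemma \ref{lemma:tech2}) to the empirical objective, treating the randomness over $\{(X_i, Z_i)\}_{i=1}^n$ and over $\hat{P}^{(m)}_{Y\mid Z}$ jointly. The first step is to decompose the empirical objective by values of $Z$ exactly as in the proof of Lemma \ref{lemma:tech2}, so that the optimization over $a$ separates into $|\cZ|$ independent subproblems over the blocks $\hat{a}_{\cdot, z}\in\reals^{|\cY|}$ subject to $\sum_y \hat{a}_{y,z} = 0$. Letting $\hat{p}_z \triangleq n^{-1}|\{i: Z_i = z\}|$, the first-order condition for the $z$-th block (both for $\hat{L}_\eps$ and $\hat{U}_\eps$), whenever $\hat{p}_z > 0$, reads
\[
\textstyle \hat{h}_z(\hat{a}, y) \triangleq \frac{1}{|\{i: Z_i = z\}|} \sum_{i: Z_i = z} p_{\hat{a}}(X_i, y, z) = \hat{P}^{(m)}(Y=y\mid Z=z), \quad y \in \cY,
\]
which is the sample analog of \eqref{eq:first-order-condition-tech1}.

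The key observation is that the upper bound on $h(a, y_{\min})$ derived in \eqref{eq:eq1-tech1} uses only $\|g\|_\infty\le L$ and the structure of $a$, and is therefore distribution-free in $X$. Consequently the same inequality applies verbatim to $\hat{h}_z(\hat{a}, y_{\min})$, giving
\[
\textstyle \hat{h}_z(\hat{a}, y_{\min}) \le \frac{\exp(2L/\eps)}{\exp(2L/\eps)+\exp(2\|\hat{a}_{\cdot, z}\|_2/(\eps\sqrt{|\cY|}))}\,.
\]
I would then combine Assumptions \ref{assump:assump4} and \ref{assump:assump3} with the law of large numbers for $\{(X_i, Z_i)\}$ to show that for any $\kappa'\in(0,\kappa)$ and $\eta>0$ there exist $m_0, n_0$ such that the event
\[
\textstyle E_{m,n}\triangleq\left\{\kappa'\le \hat{P}^{(m)}(Y=y\mid Z=z)\le 1-\kappa' \text{ and } \hat{p}_z \ge \tfrac12 P(Z=z) \text{ for all } y,z \text{ with } P(Z=z)>0\right\}
\]
satisfies $\Pr(E_{m,n})\ge 1-\eta$ whenever $m\ge m_0$ and $n\ge n_0$; this uses a union bound over the finite set $\cY\times\cZ$ and the fact that total variation distance controls all conditional probabilities simultaneously. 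On $E_{m,n}$, plugging $\hat{h}_z(\hat{a}, y_{\min}) \ge \kappa'$ into the previous display and solving for $\|\hat{a}_{\cdot, z}\|_2$ yields $\|\hat{a}_{\cdot, z}\|_2 \le M(\eps, \kappa', L)$ for a constant independent of $m$, $n$, and $z$.

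Tightness then follows: given $\eta>0$, choosing $\kappa'\in(0,\kappa)$ and the associated $M$ gives $\Pr\big(\hat{a} \in [-M, M]^{|\cY|\times|\cZ|}\big) \ge 1-\eta$ for all $m\ge m_0$ and $n\ge n_0$, so this cube is a compact witness to tightness as $m, n\to\infty$. The main obstacle is the uniform-in-$m$ control $\hat{P}^{(m)}(Y=y\mid Z=z)\ge \kappa'$; this is precisely where Assumption \ref{assump:assump3} is essential, as the polynomial decay of $d_\textup{TV}$, together with Assumption \ref{assump:assump4}'s gap for the true conditional, prevents $\hat{P}^{(m)}_{Y\mid Z=z}$ from concentrating near the boundary of the simplex in probability.
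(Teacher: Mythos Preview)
Your proposal is correct and follows essentially the same route as the paper: both arguments observe that the compact-set bound of Lemma~\ref{lemma:tech1}/\ref{lemma:tech2} depends on the distribution of $X$ only through $\|g\|_\infty$, so it transfers verbatim to the empirical measure, and then use Assumption~\ref{assump:assump3} together with Assumption~\ref{assump:assump4} to guarantee that $\hat P^{(m)}_{Y\mid Z=z}$ stays uniformly bounded away from the simplex boundary with high probability. Your version is in fact slightly more careful than the paper's, since you explicitly include the event $\hat p_z \ge \tfrac12 P(Z=z)$ to ensure every block of the empirical objective is nondegenerate; the paper's proof simply asserts the conclusion holds ``for any $n$'' without commenting on this.
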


\begin{proof}[Proof of the lemma \ref{lemma:tech3}] Let $p_{y \mid z} = P(Y = y\mid Z = z)$ and $\hat p^{(m)}_{y\mid z} = \hat P^{(m)}_{Y \mid Z = z}(y)$.  Since 
\[ \textstyle
 \frac 12 \sum_{y} \big|\hat p^{(m)}_{y\mid z} - p_{y \mid z}\big| = d_\textup{TV}\left(\hat{P}^{(m)}_{Y\mid Z=z}, P_{Y\mid Z=z}\right) = \cO_P(m^{-\lambda})
\] according to the assumption \ref{assump:assump3}, for sufficiently large $m$ with high probability $p(m)$ ($p(m) \to 1$ as $m \to \infty$) $\frac{\kappa_z}{2} \le \hat p^{(m)}_{y\mid z} \le 1 - \frac{\kappa_z}{2}$ for all $y$. Fix such an $m$ and $\hat p^{(m)}_{y\mid z}$. In lemma \ref{lemma:tech2} we replace $P_{X, Z}$ with $\sum_{i = 1}^n\delta_{X_i, Z_i}$ and $p_{y \mid z}$ with $\hat p^{(m)}_{y \mid z}$ to reach the conclusion that the optimizer is in the compact set $K(\eps, \nicefrac{\kappa_z}2, z \in \cZ; L) $.  Thus, for sufficiently large $m$ and any $n$ 
\[ \textstyle
\Pr\big(\text{optimizer is in } K(\eps, \nicefrac{\kappa_z}2, z \in \cZ; L)\big) \ge p(m)\,.
\]
 This establishes that the optimizers are tight. 
    
\end{proof}

\newpage
\section{Proof of extra results}\label{append:proof_extra}

\begin{proof}[Proof of Theorem \ref{th:missp}]
    We only prove the theorem for upper Fr\'echet bound. The proof of lower bound is similar. 

First, notice that 
\begin{talign*} 
    & ~     \chf_{u,\varepsilon}(x,z,a) - f_{u, \eps} (x,z,a)=\\
    & = { \varepsilon\log\left[\frac{1}{|\cY|}\sum_{y \in \cY}\exp\left(\frac{g(x,y,z)+a_{yz}}{\varepsilon}\right)\right] } -\Ex_{Q_{Y\mid Z}}[a_{Yz}\mid Z=z]\\
    & ~ - \varepsilon\log\left[\frac{1}{|\cY|}\sum_{y \in \cY}\exp\left(\frac{g(x,y,z)+a_{yz}}{\varepsilon}\right)\right] +\Ex_{P_{Y\mid Z}}\left[a_{Yz}\mid Z=z\right] \\
    & = \Ex_{P_{Y\mid Z}}\left[a_{Yz}\mid Z=z\right]  -\Ex_{Q_{Y\mid Z}}[a_{Yz}\mid Z=z]
\end{talign*} and thus 
\begin{talign} 
    & ~     |\chf_{u,\varepsilon}(x,z,a) - f_{u, \eps} (x,z,a)|\\
    & = \big| \Ex_{P_{Y\mid Z}}\left[a_{Yz}\mid Z=z\right]  -\Ex_{Q_{Y\mid Z}}[a_{Yz}\mid Z=z] \big|\\
    & \le \|a\|_\infty \times  2 d_{\textsc{TV}} \big(Q_{Y\mid Z=z}, P_{Y\mid Z=z}\big)  && (\text{using } a^\top b \le \|a\|_\infty \|b\|_1) \\
    & \le 2 \|a\|_\infty \delta\,.  && \text{(by assumption)}\label{eq:tech-d6}
\end{talign}

Defining 
\begin{talign}
    a^\star \triangleq \argmin_ a \Ex[\chf_{u,\varepsilon}(X,Z,a)],  ~~ \check a \triangleq \argmin_ a \Ex[\chf_{u,\varepsilon}(X,Z,a)]\,,
\end{talign} we now establish that
\begin{talign}
    |\cU_\eps - U_\eps| \le 2\delta \max(\|a^\star\|_\infty , \|\check a\|_\infty) \,.
\end{talign} This is easily established from the following arguments: 
\begin{talign}
    \begin{aligned}
    \cU_\eps & = \Ex[\chf_{u,\varepsilon}(X,Z,\check a)]  \\
    & \le  \Ex[\chf_{u,\varepsilon}(X,Z, a^\star)] && (\check a ~\text{is the minimizer}) \\
    & \le \Ex[f_{u,\varepsilon}(X,Z, a^\star)] + 2 \|a^\star\|_\infty \delta && (\text{using eq. \eqref{eq:tech-d6}}) \\
    & = U_\eps + 2 \|a^\star\|_\infty \delta
    \end{aligned}
\end{talign} and similarly 
\begin{talign}
    U_\eps  = \Ex[f_{u,\varepsilon}(X,Z,a^\star)]  \le   \Ex[f_{u,\varepsilon}(X,Z, \check a)] 
     \le \Ex[\chf_{u,\varepsilon}(X,Z, \check a)] + 2 \|\check a\|_\infty \delta = \cU_\eps  + 2 \|a^\star\|_\infty \delta\,. 
\end{talign}
    By assumption,  $a^\star$ and $\check a$ are bounded.  Thus, we can find a $C>0$, which is independent of $\delta>0$, such that 
    \[ \textstyle
    2 \max(\|a^\star\|_\infty, \|\check a\|_\infty) \le C
    \] and the theorem holds for this $C$. 
\end{proof}

\begin{proof}[Proof of Theorem \ref{eq:informativeness}]
    For clarity, we shall assume $X$ is a continuous random variable in this proof, even though this is not strictly needed.
    
    Let $Q^{(1)}_{X,Y,Z}$ and $Q^{(2)}_{X,Y,Z}$ be two distributions in $\Pi$, \ie, with marginals $P_{X, Z}$ and $P_{Y, Z}$. Let the densities of $Q^{(1)}_{X,Y,Z}$ and $Q^{(2)}_{X,Y,Z}$ as $q^{(1)}_{X,Y,Z}$ and $q^{(2)}_{X,Y,Z}$. Then, see that
    \begin{talign}
    &\left|\int g\mathrm{d}Q^{(1)}_{X,Y,Z}-\int g\mathrm{d}Q^{(2)}_{X,Y,Z}\right|=\nonumber\\ 
    &=\left|\int \sum_{y,z}g(x,y,z)(q^{(1)}_{X,Y,Z}(x,y,z)-q^{(2)}_{X,Y,Z}(x,y,z))\mathrm{d}x\right|\nonumber\\
    &\leq \int \sum_{y,z}|g(x,y,z)|\cdot|q^{(1)}_{X,Y,Z}(x,y,z)-q^{(2)}_{X,Y,Z}(x,y,z)|\mathrm{d}x\nonumber\\
    &\leq 2 \norm{g}_\infty \sum_{z}p_{Z}(z)\frac{1}{2}\int \sum_{y}\cdot|q^{(1)}_{X,Y\mid Z}(x,y\mid z)-q^{(2)}_{X,Y\mid Z}(x,y\mid z)|\mathrm{d}x\nonumber\\
    &= 2 \norm{g}_\infty \Ex\left[d_\text{TV}(Q^{(1)}_{X,Y\mid Z},Q^{(2)}_{X,Y\mid Z})\right]\nonumber\\
    &\leq 2 \norm{g}_\infty \left\{\Ex\left[d_\text{TV}(Q^{(1)}_{X,Y\mid Z},P_{X\mid Z} P_{Y\mid Z})\right]+\Ex\left[d_\text{TV}(Q^{(2)}_{X,Y\mid Z}, P_{X\mid Z} P_{Y\mid Z})\right]\right\}\label{step_info:triangle}\\ 
    &\leq 2 \norm{g}_\infty \left\{\Ex\left[\sqrt{\frac12\text{KL}(Q^{(1)}_{X,Y\mid Z} || P_{X\mid Z} P_{Y\mid Z})}\right]+\Ex\left[\sqrt{\frac12\text{KL}(Q^{(2)}_{X,Y\mid Z} || P_{X\mid Z} P_{Y\mid Z})}\right]\right\}\label{step_info:pinkers}\\ 
    &\leq 2 \norm{g}_\infty \left\{\sqrt{\frac12\Ex\left[\text{KL}(Q^{(1)}_{X,Y\mid Z} || P_{X\mid Z} P_{Y\mid Z})\right]}+\sqrt{\frac12\Ex\left[\text{KL}(Q^{(2)}_{X,Y\mid Z} || P_{X\mid Z} P_{Y\mid Z})\right]}\right\}\label{step_info:jensens}\\ 
    &\leq 2 \norm{g}_\infty \left\{\sqrt{\frac12 H(X\mid Z)}+\sqrt{\frac12 H(X\mid Z)}\right\} \label{step_info:entropy}\\
    &\leq \sqrt{8 \norm{g}^2_\infty H(X\mid Z)} \nonumber
    \end{talign}
    where step \ref{step_info:triangle} is justified by triangle inequality, step \ref{step_info:pinkers} is justified by Pinsker's inequality, step \ref{step_info:jensens} is justified by Jensen's inequality, and step \ref{step_info:entropy} is justified by the fact that both expected KL terms are conditional mutual information terms, which can be bounded by the conditional entropy. Following the same idea, we can show that $\left|\int g\mathrm{d}Q^{(1)}_{X,Y,Z}-\int g\mathrm{d}Q^{(2)}_{X,Y,Z}\right|\leq \sqrt{8 \norm{g}^2_\infty H(Y\mid Z)}$. Consequently,
    \begin{talign*}
    \left|\int g\mathrm{d}Q^{(1)}_{X,Y,Z}-\int g\mathrm{d}Q^{(2)}_{X,Y,Z}\right|&\leq \min\left( \sqrt{8 \norm{g}^2_\infty H(X\mid Z)},\sqrt{8 \norm{g}^2_\infty H(Y\mid Z)} \right)\\
    &= \sqrt{8\norm{g}^2_\infty \min\big(H(X\mid Z),H(Y\mid Z)\big)}.
    \end{talign*}
    
    Because this statement is valid for any distributions $Q^{(1)}_{X,Y,Z}$ and $Q^{(2)}_{X,Y,Z}$ in $\Pi$, we have that
    \begin{talign*}
        U-L \leq \sqrt{8\norm{g}^2_\infty \min\big(H(X\mid Z),H(Y\mid Z)\big)}
    \end{talign*}
\end{proof}

\newpage

\section{Model selection using performance bounds}\label{sec:model_sel}

In this section, we propose and empirically evaluate three strategies for model selection using our estimated bounds in Equation \ref{eq:ulhat}. 

\subsubsection{Introducing model selection strategies}
Assume, for example, $g(x,y,z)=\ones[h(x)=y]$ for a given classifier $h$, \ie, we conduct model selection based on accuracy, even though we can easily extend the same idea to different choices of metrics, such as F1 score. The model selection problem consists of choosing the best model from a set $\cH\triangleq\{h_1,\cdots, h_K\}$ in order to maximize out-of-sample accuracy. Define $\hat{L}_\varepsilon(h)$ and $\hat{U}_\varepsilon(h)$ as the estimated accuracy lower and upper bounds for a certain model $h$. The first strategy is to choose the model with \textit{highest} accuracy lower bound, \ie,
\[\textstyle
h^*_{\text{lower}} = \argmax_{h_k \in \cH} \hat{L}_\varepsilon(h_k)
\]
Maximizing the accuracy lower bound approximates the distributionally robust optimization (DRO) \citep{chen2020distributionally} solution when the uncertainty set is given by $\Pi$  in \ref{eq:lo_up_objective}. That is, we optimize for the worst-case distribution in the uncertainty set. Analogously, we can choose the model that optimizes the best-case scenario
\[\textstyle
h^*_{\text{upper}} = \argmax_{h_k \in \cH} \hat{U}_\varepsilon(h_k).
\]
Moreover, if we want to guarantee that both worst and best-case scenarios are not bad, we can optimize the average of upper and lower bounds, \ie,
\[\textstyle
h^*_{\text{avg}} = \argmax_{h_k \in \cH} \frac{\hat{L}_\varepsilon(h_k)+\hat{U}_\varepsilon(h_k)}{2}.
\]

\subsubsection{Experiment setup}

In this experiment, we select multilayer-perceptrons (MLPs). The considered MLPs have one hidden layer with a possible number of neurons in $\{50,100\}$. Training is carried out with Adam \citep{kingma2014adam}, with possible learning rates in $\{.1, .001\}$ and weight decay ($l_2$ regularization parameter) in $\{.1, .001\}$. For those datasets that use the F1 score as the evaluation metric, we also tune the classification threshold in $\{.2, .4, .5, .6, .8\}$ (otherwise, they return the most probable class as a prediction). In total, $\cH$ is composed of $8$ trained models when evaluating accuracy and $40$ models when evaluating the F1 score. We also consider directly using the label model (Snorkel \citep{ratner2018snorkel}) to select models. For example, when the metric considered is accuracy, \ie, we use
\[\textstyle
 h^*_{\text{label\_model}} = \argmax_{h_k \in \cH}\frac1n \sum_{i=1}^n \Ex_{\hat{P}_{Y\mid Z}}\ones[h_k(X)=Y\mid Z=Z_i],
\]

which is a natural choice when $X\ind Y\mid Z$. As benchmarks, we consider having a few labeled samples.

In Table \ref{tab:selection_bestscore}, we report the average test scores of the chosen models over $10$ repetitions for different random seeds (standard deviation report as subscript). The main message here is that, for those datasets in which our uncertainty about the score (given by the different upper and lower bounds) is small, \eg, ```commercial'' and ``tennis'', using our approaches leads to much better results when compared to using small sample sizes.

\begin{table}[h]
\centering
\caption{Performance of selected models}
\scalebox{0.775}{
\hspace{-3cm}\begin{tabular}{cc|cccc|cccc}
\toprule
\rowcolor{Gray} &metric & Lower bound & Upper bound & Bounds avg & Label model & Labeled ($n=10$) & Labeled ($n=25$) & Labeled ($n=50$) & Labeled ($n=100$) \bigstrut\\
\midrule
\rowcolor{cornsilk}agnews &acc & $0.77_{\pm 0.00}$ & $0.78_{\pm 0.00}$ & $0.78_{\pm 0.00}$ & $0.77_{\pm 0.00}$  & $0.75_{\pm 0.02}$ & $0.75_{\pm 0.02}$ & $0.75_{\pm 0.03}$ & $0.77_{\pm 0.00}$ \bigstrut\\
trec & acc &$0.27_{\pm 0.01}$ & $0.29_{\pm 0.02}$ & $0.29_{\pm 0.02}$ & $0.29_{\pm 0.02}$  & $0.28_{\pm 0.02}$ & $0.28_{\pm 0.02}$ & $0.28_{\pm 0.02}$ & $0.28_{\pm 0.02}$ \bigstrut\\
\rowcolor{cornsilk} semeval & acc& $0.32_{\pm 0.01}$ & $0.26_{\pm 0.01}$ & $0.32_{\pm 0.01}$ & $0.32_{\pm 0.01}$  & $0.30_{\pm 0.02}$ & $0.29_{\pm 0.03}$ & $0.31_{\pm 0.02}$ & $0.31_{\pm 0.02}$ \bigstrut\\
chemprot & acc &$0.40_{\pm 0.00}$ & $0.38_{\pm 0.00}$ & $0.40_{\pm 0.00}$ & $0.40_{\pm 0.00}$  & $0.40_{\pm 0.01}$ & $0.40_{\pm 0.01}$ & $0.40_{\pm 0.00}$ & $0.39_{\pm 0.01}$ \bigstrut\\
\rowcolor{cornsilk} youtube &acc & $0.89_{\pm 0.02}$ & $0.84_{\pm 0.05}$ & $0.87_{\pm 0.01}$ & $0.89_{\pm 0.02}$  & $0.90_{\pm 0.02}$ & $0.90_{\pm 0.02}$ & $0.90_{\pm 0.02}$ & $0.91_{\pm 0.02}$ \bigstrut\\
imdb &acc & $0.72_{\pm 0.00}$ & $0.74_{\pm 0.00}$ & $0.73_{\pm 0.00}$ & $0.73_{\pm 0.00}$  & $0.69_{\pm 0.05}$ & $0.71_{\pm 0.02}$ & $0.73_{\pm 0.01}$ & $0.72_{\pm 0.01}$ \bigstrut\\
\rowcolor{cornsilk} yelp &acc & $0.81_{\pm 0.00}$ & $0.81_{\pm 0.00}$ & $0.81_{\pm 0.00}$ & $0.81_{\pm 0.00}$  & $0.81_{\pm 0.04}$ & $0.81_{\pm 0.03}$ & $0.81_{\pm 0.01}$ & $0.82_{\pm 0.01}$ \bigstrut\\
census &F1 & $0.49_{\pm 0.00}$ & $0.18_{\pm 0.00}$ & $0.49_{\pm 0.00}$ & $0.49_{\pm 0.00}$  & $0.49_{\pm 0.00}$ & $0.49_{\pm 0.00}$ & $0.49_{\pm 0.00}$ & $0.50_{\pm 0.00}$ \bigstrut\\
\rowcolor{cornsilk} tennis & F1& $0.76_{\pm 0.01}$ & $0.76_{\pm 0.01}$ & $0.76_{\pm 0.01}$ & $0.75_{\pm 0.01}$  & $0.71_{\pm 0.01}$ & $0.71_{\pm 0.01}$ & $0.71_{\pm 0.01}$ & $0.71_{\pm 0.02}$ \bigstrut\\
sms & F1 & $0.00_{\pm 0.00}$ & $0.14_{\pm 0.02}$ & $0.14_{\pm 0.02}$ & $0.14_{\pm 0.02}$  & $0.00_{\pm 0.00}$ & $0.00_{\pm 0.00}$ & $0.00_{\pm 0.00}$ & $0.00_{\pm 0.00}$ \bigstrut\\
\rowcolor{cornsilk} cdr &F1 & $0.48_{\pm 0.00}$ & $0.29_{\pm 0.00}$ & $0.48_{\pm 0.00}$ & $0.48_{\pm 0.00}$  & $0.47_{\pm 0.00}$ & $0.47_{\pm 0.00}$ & $0.47_{\pm 0.00}$ & $0.47_{\pm 0.00}$ \bigstrut\\
basketball &F1& $0.16_{\pm 0.01}$ &  $0.27_{\pm 0.01}$ & $0.26_{\pm 0.01}$ & $0.17_{\pm 0.01}$  & $0.20_{\pm 0.05}$ & $0.17_{\pm 0.03}$ & $0.16_{\pm 0.01}$ & $0.16_{\pm 0.01}$ \bigstrut\\
\rowcolor{cornsilk} spouse &F1 & $0.27_{\pm 0.01}$ & $0.27_{\pm 0.01}$ & $0.27_{\pm 0.01}$ & $0.29_{\pm 0.01}$  & $0.00_{\pm 0.00}$ & $0.00_{\pm 0.00}$ & $0.00_{\pm 0.00}$ & $0.10_{\pm 0.12}$ \bigstrut\\
commercial &F1 & $0.96_{\pm 0.00}$ & $0.96_{\pm 0.00}$ & $0.96_{\pm 0.00}$ & $0.96_{\pm 0.00}$  & $0.90_{\pm 0.01}$ & $0.90_{\pm 0.02}$ & $0.91_{\pm 0.01}$ & $0.91_{\pm 0.01}$ \bigstrut\\
\bottomrule
\end{tabular}}
\label{tab:selection_bestscore}
\end{table}

Now, we explore a different way of comparing models. Instead of making an explicit model selection, this experiment uses the proposed metrics, \ie, accuracy lower/upper bounds, bounds average, to rank MLP classifiers. We rank the models using both the test set accuracy/F1 score (depending on \citet{zhang2021wrench}) and alternative metrics, \ie, accuracy/F1 score lower/upper bounds, bounds average, label model, and small labeled sample sizes. Then, we calculate a Pearson correlation between rankings and display numbers in Table \ref{tab:selection_corr}. If the numbers are higher, it means that the proposed selection method is capable of distinguishing good from bad models. Table \ref{tab:selection_corr} shows that the bounds average and label model methods usually return the best results when no labels are used. Moreover, in some cases using a small labeled sample for model selection can relatively hurt performance (when the validation set is small, there is a chance all models will have the same or similar performances, leading to smaller or null rank correlations).

\begin{table}[h]
\centering
\caption{Ranking correlation when ranking using test set and alternative metric}
\scalebox{0.775}{
\hspace{-3cm}\begin{tabular}{cc|cccc|cccc}
\toprule
\rowcolor{Gray} & metric & Lower bound & Upper bound & Bounds avg & Label model & Labeled ($n=10$) & Labeled ($n=25$) & Labeled ($n=50$) & Labeled ($n=100$) \bigstrut\\
\midrule
\rowcolor{cornsilk}agnews &acc & $0.91_{\pm 0.01}$ & $0.86_{\pm 0.00}$ & $0.86_{\pm 0.00}$ & $0.92_{\pm 0.01}$ & $0.39_{\pm 0.31}$ & $0.35_{\pm 0.54}$ & $0.50_{\pm 0.53}$ & $0.76_{\pm 0.19}$ \bigstrut\\
trec &acc & $-0.32_{\pm 0.36}$ & $0.37_{\pm 0.32}$ & $0.19_{\pm 0.34}$ & $0.44_{\pm 0.39}$ & $-0.03_{\pm 0.55}$ & $-0.06_{\pm 0.39}$ & $-0.10_{\pm 0.34}$ & $0.17_{\pm 0.52}$ \bigstrut\\
\rowcolor{cornsilk} semeval &acc & $0.87_{\pm 0.04}$ & $-0.84_{\pm 0.04}$ & $0.88_{\pm 0.04}$ & $0.88_{\pm 0.04}$ & $0.13_{\pm 0.64}$ & $0.17_{\pm 0.69}$ & $0.64_{\pm 0.32}$ & $0.62_{\pm 0.41}$ \bigstrut\\
chemprot & acc &$0.75_{\pm 0.08}$ & $-0.66_{\pm 0.11}$ & $0.38_{\pm 0.11}$ & $0.49_{\pm 0.15}$ & $-0.01_{\pm 0.47}$ & $-0.12_{\pm 0.45}$ & $-0.02_{\pm 0.43}$ & $0.06_{\pm 0.41}$ \bigstrut\\
\rowcolor{cornsilk} youtube &acc & $0.38_{\pm 0.21}$ & $-0.42_{\pm 0.19}$ & $-0.01_{\pm 0.21}$ & $0.17_{\pm 0.29}$ & $0.41_{\pm 0.35}$ & $0.42_{\pm 0.27}$ & $0.38_{\pm 0.31}$ & $0.58_{\pm 0.19}$ \bigstrut\\
imdb & acc &$0.84_{\pm 0.01}$ & $0.76_{\pm 0.00}$ & $0.71_{\pm 0.02}$ & $0.92_{\pm 0.01}$ & $0.17_{\pm 0.49}$ & $0.40_{\pm 0.46}$ & $0.61_{\pm 0.28}$ & $0.65_{\pm 0.30}$ \bigstrut\\
\rowcolor{cornsilk} yelp &acc & $0.01_{\pm 0.08}$ & $-0.50_{\pm 0.05}$ & $-0.49_{\pm 0.05}$ & $-0.31_{\pm 0.10}$ & $0.08_{\pm 0.65}$ & $0.09_{\pm 0.53}$ & $0.14_{\pm 0.45}$ & $0.45_{\pm 0.30}$ \bigstrut\\
census & F1 &$0.85_{\pm 0.00}$ & $0.94_{\pm 0.00}$ & $0.96_{\pm 0.00}$ & $0.98_{\pm 0.00}$ & $0.00_{\pm 0.00}$ & $0.00_{\pm 0.00}$ & $0.03_{\pm 0.09}$ & $0.18_{\pm 0.17}$ \bigstrut\\
\rowcolor{cornsilk} tennis &F1 & $0.83_{\pm 0.02}$ & $0.74_{\pm 0.03}$ & $0.80_{\pm 0.02}$ & $0.90_{\pm 0.03}$ & $0.33_{\pm 0.24}$ & $0.41_{\pm 0.22}$ & $0.51_{\pm 0.12}$ & $0.52_{\pm 0.12}$ \bigstrut\\
sms &F1 & $0.00_{\pm 0.00}$ & $0.99_{\pm 0.02}$ & $0.99_{\pm 0.02}$ & $1.00_{\pm 0.00}$ & $0.00_{\pm 0.00}$ & $0.00_{\pm 0.00}$ & $0.00_{\pm 0.00}$ & $0.00_{\pm 0.00}$ \bigstrut\\
\rowcolor{cornsilk} cdr &F1 & $0.80_{\pm 0.00}$ & $0.89_{\pm 0.00}$ & $0.92_{\pm 0.00}$ & $0.98_{\pm 0.00}$ & $0.03_{\pm 0.08}$ & $0.07_{\pm 0.11}$ & $0.07_{\pm 0.11}$ & $0.09_{\pm 0.11}$ \bigstrut\\
basketball &F1 & $0.85_{\pm 0.01}$ & $0.69_{\pm 0.00}$ & $0.73_{\pm 0.01}$ & $0.81_{\pm 0.00}$ & $0.35_{\pm 0.29}$ & $0.53_{\pm 0.18}$ & $0.59_{\pm 0.00}$ & $0.59_{\pm 0.00}$ \bigstrut\\
\rowcolor{cornsilk} spouse &F1 & $0.53_{\pm 0.00}$ & $0.86_{\pm 0.00}$ & $0.86_{\pm 0.00}$ & $1.00_{\pm 0.00}$ & $0.00_{\pm 0.00}$ & $0.00_{\pm 0.00}$ & $0.00_{\pm 0.00}$ & $0.10_{\pm 0.12}$ \bigstrut\\
commercial &F1 & $0.95_{\pm 0.00}$ & $0.96_{\pm 0.00}$ & $0.99_{\pm 0.00}$ & $1.00_{\pm 0.00}$ & $0.14_{\pm 0.13}$ & $0.26_{\pm 0.14}$ & $0.28_{\pm 0.13}$ & $0.31_{\pm 0.06}$ \bigstrut\\
\bottomrule
\end{tabular}}
\label{tab:selection_corr}
\end{table}

\newpage
\section{More on experiments}\label{sec:more_exp}

\subsection{Extra results for the Wrench experiment}

Extra results for binary classification datasets can be found in Figures \ref{fig:experiment1_fs} and \ref{fig:experiment1_append_full}. In Table \ref{tab:multi_class_append_full}, we can see the results for multinomial classification datasets. 

\begin{figure}[h]
    \includegraphics[width=1\textwidth]{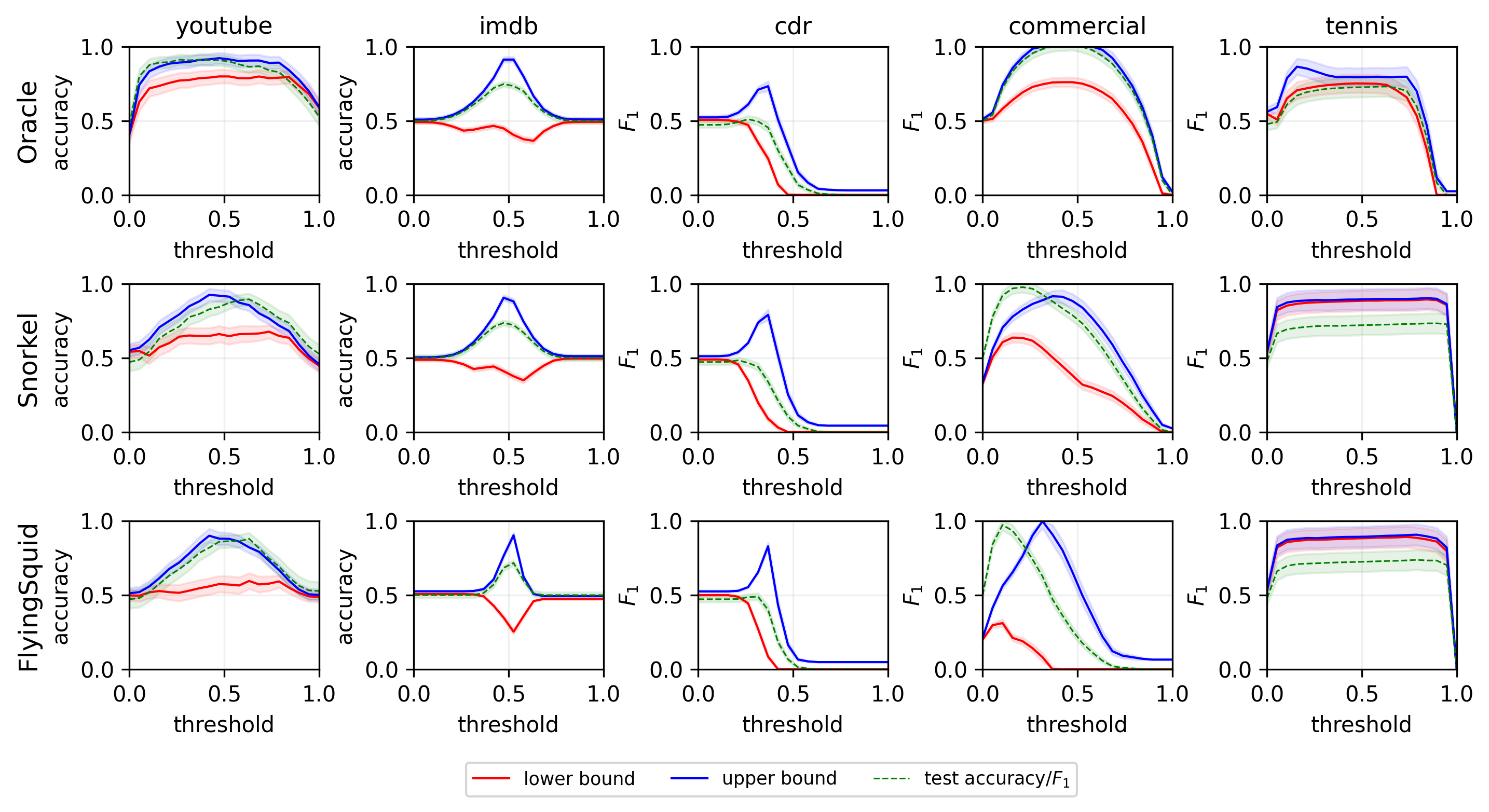} 
    \centering
    \caption{\small Bounds on classifier accuracies across classification thresholds for the Wrench datasets. Despite potential misspecification in Snorkel's and FlyingSquid's label model, it performs comparably to using labels to estimate $P_{Y\mid Z}$, giving approximate but meaningful bounds. .} 
    \label{fig:experiment1_fs}
\end{figure}

\begin{figure}[h]
    \includegraphics[width=1\textwidth]{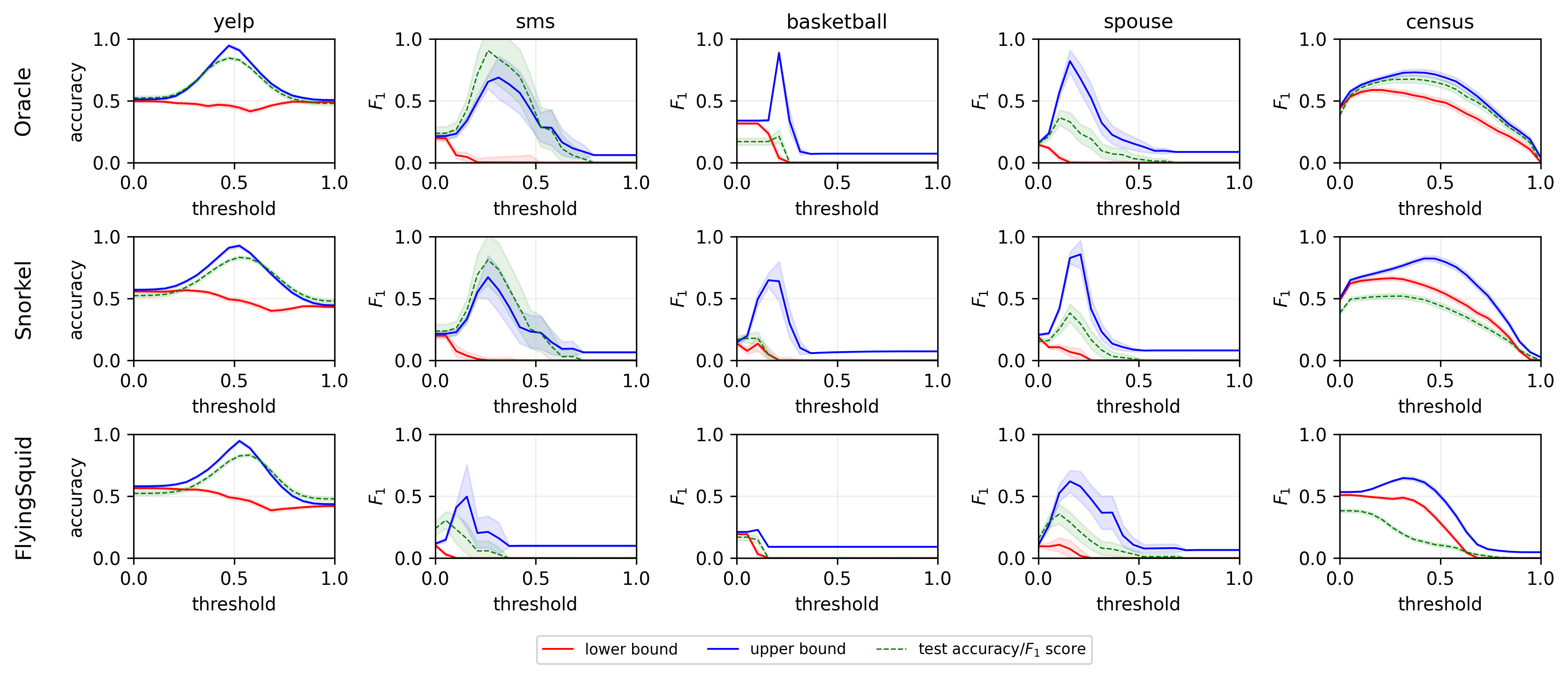} 
    \centering
    \caption{\small Bounds on classifier accuracies and F1 scores across classification thresholds for the Wrench datasets (using the full set of weak labels). Despite potential misspecification in Snorkel's and FlyingSquid's label model, it performs comparably to using labels to estimate $P_{Y\mid Z}$, giving approximate but meaningful bounds.} 
    \label{fig:experiment1_append_full}
\end{figure}

\begin{table}[h]
\centering
\caption{Bounding the accuracy of classifiers in multinomial classification}
\scalebox{0.775}{
\begin{tabular}{c|cccc}  
\toprule
\rowcolor{Gray} Dataset & Label model & Lower bound & Upper bound & Test accuracy \bigstrut\\
\midrule
\multirow{3}{*}[-1ex]{\rotatebox{90}{agnews}} & Oracle & $0.46_{\pm 0.01}$ & $0.95_{\pm 0.01}$ & $0.80_{\pm 0.01}$ \bigstrut\\
& \cellcolor{cornsilk}Snorkel &\cellcolor{cornsilk} $0.42_{\pm 0.01}$ &\cellcolor{cornsilk} $0.9_{\pm 0.01}$ &\cellcolor{cornsilk} $0.76_{\pm 0.01}$ \bigstrut\\
& FlyingSquid & $0.12_{\pm 0.0}$ & $0.61_{\pm 0.01}$ & $0.76_{\pm 0.01}$ \bigstrut\\
\midrule
\multirow{3}{*}[-1ex]{\rotatebox{90}{trec}} & Oracle & $0.34_{\pm 0.04}$ & $0.83_{\pm 0.03}$ & $0.68_{\pm 0.04}$ \bigstrut\\
& \cellcolor{cornsilk}Snorkel &\cellcolor{cornsilk} $0.31_{\pm 0.04}$ &\cellcolor{cornsilk} $0.70_{\pm 0.03}$ &\cellcolor{cornsilk} $0.47_{\pm 0.04}$ \bigstrut\\
& FlyingSquid & $0.07_{\pm 0.02}$ & $0.29_{\pm 0.02}$ & $0.27_{\pm 0.04}$ \bigstrut\\
\midrule
\multirow{3}{*}[-1ex]{\rotatebox{90}{semeval}} & Oracle & $0.54_{\pm 0.04}$ & $0.78_{\pm 0.03}$ & $0.72_{\pm 0.04}$ \bigstrut\\
& \cellcolor{cornsilk}Snorkel &\cellcolor{cornsilk} $0.36_{\pm 0.03}$ &\cellcolor{cornsilk} $0.70_{\pm 0.03}$ &\cellcolor{cornsilk} $0.56_{\pm 0.04}$ \bigstrut\\
& FlyingSquid & $0.12_{\pm 0.0}$ & $0.14_{\pm 0.0}$ & $0.32_{\pm 0.04}$ \bigstrut\\
\midrule
\multirow{3}{*}[-,5ex]{\rotatebox{90}{chemprot}} & Oracle & $0.43_{\pm 0.02}$ & $0.75_{\pm 0.02}$ & $0.60_{\pm 0.02}$ \bigstrut\\
& \cellcolor{cornsilk}Snorkel &\cellcolor{cornsilk} $0.37_{\pm 0.02}$ & \cellcolor{cornsilk}$0.73_{\pm 0.02}$ &\cellcolor{cornsilk} $0.49_{\pm 0.02}$ \bigstrut\\
& FlyingSquid & $0.05_{\pm 0.0}$ & $0.23_{\pm 0.01}$ & $0.46_{\pm 0.02}$ \bigstrut\\
\bottomrule
\end{tabular}}
\label{tab:multi_class_append_full}
\end{table}

\newpage
\subsection{Extra plots for the hate speech detection experiment}
 In Table \ref{fig:experiment3_fs}, we can see the same results already in the main text plus the results for FlyingSquid.
\begin{figure}[h]
    \includegraphics[width=.5\textwidth]{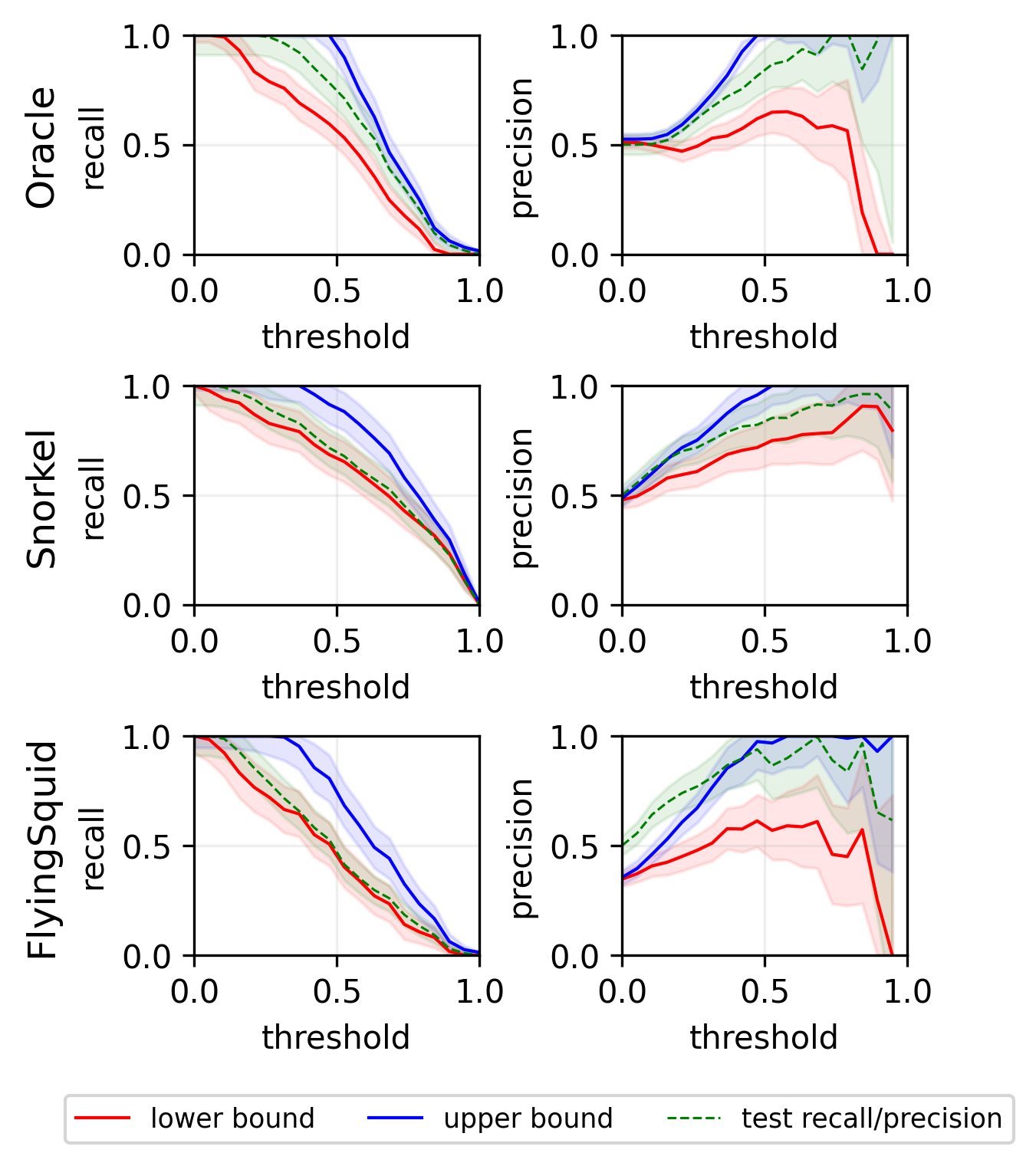} 
    \centering
    \caption{\small Precision and recall bounds for hate speech detection. These plots guide practitioners to trade off recall and precision in the absence of high-quality labels.} 
    \label{fig:experiment3_fs}
\end{figure}

\section{Computing resources}
All experiments were conducted using a virtual machine with 32 cores. The experiments are not computationally intensive and everything can be run within a few hours.

\newpage
\subsection{Examples of prompts used in Section \ref{sec:choosing}}\label{sec:prompt}

\textbf{Prompt 1}
\begin{lstlisting}
You should classify the target sentence as "spam" or "ham". 
If definitions or examples are introduced, you should consider
them when classifying sentences. Respond with "spam" or "ham".

Target sentence: if your like drones, plz subscribe to Kamal Tayara.
He takes videos with  his drone that are absolutely beautiful. -- Response:     
\end{lstlisting}

\textbf{Prompt 2}
\begin{lstlisting}
You should classify the target sentence as "spam" or "ham". 
If definitions or examples are introduced, you should consider 
them when classifying sentences. Respond with "spam" or "ham".

Definition of spam: spam is a term referencing a broad category of 
postings which abuse web-based 
forms to post unsolicited advertisements as comments on forums, 
blogs, wikis and online guestbook. 

Definition of ham: texts that are not spam.

Target sentence: if your like drones, plz subscribe to Kamal Tayara. 
He takes videos with  his drone that are absolutely beautiful. -- Response: 
\end{lstlisting}

\textbf{Prompt 3}
\begin{lstlisting}
You should classify the target sentence as "spam" or "ham". 
If definitions or examples are introduced, 
you should consider 
them when classifying sentences. 
Respond with "spam" or "ham".

Example 0: 860,000,000 lets make it first female to reach 
one billion!! Share it and replay it!  -- Response: ham

Example 1: Waka waka eh eh -- Response: ham

Example 2: You guys should check out this EXTRAORDINARY website called 
ZONEPA.COM . You can make money online and start working from home today
as I am! I am making over $3,000+ per month at ZONEPA.COM ! Visit 
Zonepa.com and check it out! How does the mother approve the axiomatic 
insurance? The fear appoints the roll. When does the space prepare the 
historical shame? -- Response: spam

Example 3: Check out  these Irish guys cover  of Avicii&#39;s  Wake Me Up!  Just search...  &quot;wake me up Fiddle Me Silly&quot; Worth a  listen  for the gorgeous fiddle player! -- Response: spam

Example 4: if you want to win money at hopme click here 
<a href="https://www.paidverts.com/ref/sihaam01">
https://www.paidverts.com/ref/sihaam01</a> it&#39;s work 100/100 -- Response: spam

Target sentence: if your like drones, plz subscribe to Kamal Tayara. 
He takes videos with  his drone that are absolutely beautiful. 
-- Response:
\end{lstlisting}

\end{document}